%% file: IMC.tex
\documentclass[english]{article}
\usepackage[T1]{fontenc}
\usepackage[latin9]{inputenc}
\usepackage{color}
\usepackage{babel}
\usepackage{float}
\usepackage{mathtools}
\usepackage{bm}
\usepackage{dsfont}
\usepackage{amsmath}
\usepackage{amssymb}
\usepackage{booktabs}
\usepackage{graphicx}
\usepackage{caption}
\usepackage{geometry}
\usepackage[authoryear,round]{natbib}
\usepackage{algorithm}
\usepackage{algpseudocode}
\usepackage[pdfusetitle,
 bookmarks=true,bookmarksnumbered=false,bookmarksopen=false,
 breaklinks=false,pdfborder={0 0 1},backref=false,colorlinks=true,
 linkcolor=refred,citecolor=papersblue,urlcolor=papersblue]
 {hyperref}

\makeatletter

\floatstyle{ruled}
\newfloat{algorithm}{tbp}{loa}
\providecommand{\algorithmname}{Algorithm}
\floatname{algorithm}{\protect\algorithmname}

\usepackage{babel}

\usepackage{enumitem}
\usepackage{amsthm}
\theoremstyle{plain}

\newtheorem{lemma}{\textbf{Lemma}}
\newtheorem{theorem}{\textbf{Theorem}}
\newtheorem{corollary}{\textbf{Corollary}}
\newtheorem{assumption}{\textbf{Assumption}}

\newtheorem{definition}{\textbf{Definition}}

\theoremstyle{definition}

\usepackage{color}
\definecolor{cm}{RGB}{0,0,200}
\definecolor{papersblue}{RGB}{0,0,200}
\definecolor{refred}{RGB}{200,0,0}

\definecolor{yy}{RGB}{160, 32, 240}

\definecolor{ct}{RGB}{4, 175, 112}

\author{
  Yuepeng Yang\thanks{Department of Statistics and Data Sciences, Yale University} \vspace{0.5em}\\
   Yale 
  \and
  Cong Ma\thanks{Department of Statistics, University of Chicago} \vspace{0.5em}\\
   UChicago 
} 

\makeatother

\begin{document}
\title{Sample-efficient inductive matrix completion with noise and inexact
    side-information}
\date{June 9, 2026}
\maketitle
\begin{abstract}
    Inductive matrix completion (IMC) is a variant of low-rank matrix completion that incorporates row and
    column side-information. In principle, it can reduce the effective dimension of the recovery problem from the ambient matrix size to the dimension of the side-information features.
    Existing theory, however, does not fully realize this advantage in the noisy setting: sample-efficient guarantees only apply to noiseless recovery, while noisy guarantees require sample sizes comparable to ordinary matrix completion. This paper closes this gap for noisy IMC. We analyze a nonconvex projected gradient descent algorithm with spectral initialization and prove that, under exact side-information, it achieves linear convergence and stable recovery at a sample complexity governed by the effective side-information dimension rather than the ambient matrix dimension. The key technical ingredient is a local regularity condition for the IMC loss that holds at this reduced sample size, despite the mismatch between the observation pattern and the side-information subspaces.

    We further extend the analysis to inexact side-information, showing that the same reduced sample complexity is preserved and that the estimation error degrades optimally with the level of subspace misspecification. Motivated by this trade-off, we also propose a penalized interpolation between IMC and ordinary matrix completion that balances sample efficiency against robustness to imperfect side-information. Simulations and experiments on the MovieLens dataset support the theoretical findings and illustrate the practical benefits of exploiting side-information in low-sample regimes.
\end{abstract}

\input{sections/01-introduction.tex}
\input{sections/02-problem-setup.tex}

\input{sections/03-exact-side-information.tex}
\input{sections/04-analysis.tex}
\input{sections/05-inexact-side-information.tex}
\input{sections/06-experiments.tex}
\input{sections/07-discussion.tex}

\bibliographystyle{plainnat}
\bibliography{All-of-Bibs}

\appendix

\input{sections/09-proof-of-subgaussian-corollary.tex}
\input{sections/10-proof-of-inexact-side-information.tex}
\input{sections/11-analysis-for-spectral-initialization.tex}
\input{sections/12-proof-of-regularity-condition.tex}
\input{sections/13-supporting-lemmas.tex}
\input{sections/14-further-experimental-results.tex}

\end{document}

%% file: sections/01-introduction.tex
\section{Introduction}

Low-rank matrix completion is a fundamental problem in high-dimensional
statistics and machine learning. We consider a rank-$r$ ground-truth
matrix $\bm{L}^{\star}\in\mathbb{R}^{n_{1}\times n_{2}}$ corrupted
by an additive noise matrix $\bm{E}\in\mathbb{R}^{n_{1}\times n_{2}}$,
where each entry in $[n_{1}]\times[n_{2}]$ is observed independently
with probability $p$. Let $\Omega$ denote the set of observed entries.
Given the observed matrix
\begin{equation}
\bm{M}=\mathcal{P}_{\Omega}(\bm{L}^{\star}+\bm{E}),\label{eq:mc}
\end{equation}
the goal in the vanilla matrix completion problem is to estimate $\bm{L}^{\star}$. This problem is now well
understood in several canonical regimes. Let $n$ denote the larger
of $n_{1}$ and $n_{2}$. Prior work shows that with $\widetilde{O}(nr)$
observed entries, one can achieve exact recovery in the noiseless
setting when $\bm{E}=\bm{0}$ \citep{candes2010power,recht2011simpler}
and near-optimal estimation error in the noisy setting \citep{Chen2020,chen2020noisy}. 

In the vanilla matrix completion problem, the only constraint on $\bm{L}^{\star}$
is that it is low-rank. In many applications, however, additional information about the row and column spaces is available.
We model this information through two orthonormal side-information matrices $\bm{X}\in\mathbb{R}^{n_{1}\times a_{1}}$
and $\bm{Y}\in\mathbb{R}^{n_{2}\times a_{2}}$, whose columns represent the possible column and row spaces of $\bm{L}^{\star}$.
In the case with exact side-information, i.e., $\bm{L}^{\star}$ is contained in the column and row space spanned by $\bm{X}$ and $\bm{Y}$, we can write $\bm{L}^{\star}$
as $\bm{X}\bm{Z}^{\star}\bm{Y}^{\top}$ for some rank-$r$ matrix
$\bm{Z}^{\star}\in\mathbb{R}^{a_{1}\times a_{2}}$ and the observed
matrix as
\begin{equation}
\bm{M}=\mathcal{P}_{\Omega}(\bm{X}\bm{Z}^{\star}\bm{Y}^{\top}+\bm{E}).\label{eq:imc}
\end{equation}
The goal now is to recover the matrix $\bm{L}^{\star}=\bm{X}\bm{Z}^{\star}\bm{Y}^{\top}$
and this problem is sometimes called \emph{inductive matrix completion}
(IMC) or subspace-constrained matrix completion. 
For comparison,
we refer to \eqref{eq:mc} as the ordinary or non-inductive matrix completion problem.

The key statistical advantage of IMC is that it reduces the effective dimension of the problem.
Since the unknown
matrix is now parameterized by the smaller core matrix $\bm{Z}^{\star}\in\mathbb{R}^{a_{1}\times a_{2}}$,
one expects both the sample complexity and the estimation error to be governed by the feature
dimensions $a_{1},a_{2}$ rather than the ambient dimensions $n_{1},n_{2}$.
Side-information of this form arises naturally in problems where rows and columns have covariates or known structural features, such as multi-label learning \citep{xu2013speedup,jain2013provable},
gene-disease association prediction \citep{natarajan2014inductive,chen2018predicting,lu2018prediction,li2020neural},
recommendation systems \citep{shin2015tumblr,jin2016provable}, link
prediction \citep{zhao2022novel}, and semi-supervised clustering \citep{chiang2015matrix,zhao2017multi,nazarov2018sparse}.

Existing theory captures only part of this picture. In the noiseless setting, prior work shows that IMC can achieve exact recovery with substantially fewer samples than ordinary matrix completion, with the sample complexity governed by the lower-dimensional side-information structure \citep{chen2015incoherence,xu2013speedup}.
In the noisy setting, \citet{chen2022nonconvex} proves a sharp estimation error guarantee,
but requires a sample rate comparable to ordinary matrix completion. It is therefore natural to ask whether these two types of
results can be unified:
\begin{quote}
\emph{Can one estimate $\bm{L}^{\star}$ in the presence of noise $\bm{E}$
with both a reduced inductive sample complexity and sharp statistical error guarantees?}
\end{quote}

A second major limitation of some prior works on IMC is that they require exact side-information. This
can be restrictive as the side-information may often be inexact in practice. This raises a second question:
\begin{quote}
\emph{When the side-information is inexact, can one still estimate $\bm{L}^{\star}$ with the reduced sample
complexity, and how does the estimation error scale with the side-information inexactness?}
\end{quote}

In this work, we answer both questions affirmatively. 
We summarize our result below.

\subsection{Our Contributions}
While our results hold for rectangular matrices and general additive noise, we restrict our discussion to the symmetric case $n_1=n_2=n$ and $a_1=a_2=a$ and entrywise Gaussian noise $\mathcal{N}(0, \sigma^2)$ here for simplicity. 

\paragraph{Sharp guarantees at the reduced sample complexity.}
Our first main result (Theorem~\ref{thm:main}) proves that in the exact side-information setting, projected gradient descent with spectral initialization achieves sharp estimation error guarantees with the reduced sample complexity.

For sample complexity, we show that the projected gradient descent succeeds as long as
\[
    n^2 p
    \gtrsim  \mu_1 a \;\mathrm{poly}(\kappa, r, \mu_0, \log n),
\]
where $\mu_0$ and $\mu_1$ are incoherence parameters, $\kappa$ is the condition number, and $p$ is the sampling probability. 
For the estimation error, after convergence, we show an error guarantee of 
\[
\|\widehat{\bm{L}} - \bm{L}^{\star}\|_{\mathrm{F}}\lesssim \sigma \sqrt{\frac{\mu_1 a }{p}} \;\mathrm{poly}(\kappa, r, \log n).
\]
In comparison, ordinary matrix completion requires a sample complexity of $n^2 p \gtrsim n \;\mathrm{poly}(\mu_0, \kappa, r, \log n)$ and achieves an estimation error of $O(\sigma \sqrt{n/p})$ \citep{ma2018implicit}. Hence, side-information improves not only the sample complexity but also the noisy estimation error, replacing the ambient dimension $n$ by the effective dimension $\mu_1 a$.

The main technical ingredient behind this result is to establish a regularity condition for the nonconvex IMC loss at the reduced sample complexity. The key difficulty lies in the pattern mismatch between the grid-based observation set $\Omega$ and the side-information matrices $X$ and $Y$, which prohibits the application of the standard proof strategy from the ordinary matrix completion literature, such as leave-one-out arguments.

\paragraph{Inexact side-information with controlled inexactness.}
Our second main result (Theorem~\ref{thm:inexact}) extends the theory to the more realistic setting where the side-information subspaces are only approximately correct. We model this mismatch
by the side-information inexactness $\delta$, defined through the projection distance between the given side-information
subspaces and the true row and column spaces. We show that the same
reduced sample complexity as in Theorem~\ref{thm:main} remains sufficient under a mild additional regularity condition, while the estimation error incurs an additional $O(\delta)$ term. One cannot in general achieve an error smaller than this term in order.

\paragraph{Interpolation between IMC and ordinary matrix completion.}
Finally, we propose a penalized formulation for the inexact side-information setting that interpolates between ordinary matrix completion and IMC. The tuning parameter $\lambda$ controls the strength of the side-information penalty: when $\lambda=0$, the method reduces to ordinary matrix completion, while for $\lambda\to\infty$, it enforces stronger alignment with the side-information subspaces and tends to IMC. This combines the advantage of the inductive structure that allows reduced sample complexity with the robustness of ordinary matrix completion that avoids misspecification bias. Our simulations and MovieLens experiments show that the interpolated estimator can improve over both pure IMC and ordinary matrix completion in the inexact side-information regime.

\subsection{Related Work}

\paragraph{Matrix completion without side-information.}
The classical matrix completion problem is now well understood in both
noiseless and noisy regimes. Convex approaches based on nuclear norm
minimization established exact recovery under incoherence and near-optimal
sampling conditions in the noiseless setting
\citep{candes2009exact,candes2010power,recht2011simpler}. For noisy
observations, stable recovery and near-optimal estimation guarantees were
developed in
\citep{candes2010matrix,negahban2012restricted,chen2020noisy}. These
results serve as the non-inductive benchmark in which the sample complexity
and estimation error scale with the ambient matrix dimension.

A parallel line of work studies computationally efficient nonconvex methods
for matrix completion, including spectral initialization, alternating
minimization, gradient descent, leave-one-out analyses, and scaled gradient
methods
\citep{keshavan2009matrix,keshavan2010matrix,jain2013low,zheng2016convergence,
ma2018implicit,ding2020leave,tong2021accelerating}. These works provide the
algorithmic template for many modern analyses of low-rank matrix estimation,
including the projected-gradient approach studied here.

\paragraph{Inductive matrix completion with exact side-information.}
In the noiseless setting, early theoretical work for IMC established that exact recovery is possible
with substantially fewer observations
\citep{xu2013speedup,jain2013provable}. Subsequent work developed more
refined nonconvex algorithms for this setting. In particular,
\citet{zhang2018fast} analyzed a multi-phase Procrustes-flow method with
linear convergence, while
\citet{zilber2022inductive} studied the optimization landscape of IMC and
proved the absence of spurious local minima under suitable assumptions.

The understanding of IMC with noisy observations or inexact side-information
is less complete. \citet{chiang2015matrix} study noisy side-information via
a feature-driven component plus a residual component.
\citet{chen2022nonconvex} study nonconvex algorithms for noisy IMC and give a sharp statistical rate at a sample
complexity comparable to ordinary matrix completion. 
\citet{ledent2022generalization} takes a different perspective, proving
generalization bounds for IMC under low-noise assumptions and
Lipschitz-bounded losses, which are not directly comparable to our results.

\paragraph{Broader models using side-information.}
In addition to the subspace-based IMC model, there is also a broad literature
incorporating auxiliary information into matrix completion, with applications
to multi-label learning \citep{xu2013speedup,jain2013provable},
gene-disease association prediction
\citep{natarajan2014inductive,chen2018predicting,lu2018prediction,li2020neural},
recommendation \citep{shin2015tumblr, zhong2019provable}, link prediction \citep{zhao2022novel}, and graph-based matrix
completion \citep{zhang2019inductive, shen2021inductive}.

\subsection{Organization}

The rest of the paper is organized as follows. Section~\ref{sec:Problem-Setup}
introduces the model, assumptions, and algorithm. Section~\ref{sec:Main-result}
establishes our main result for noisy IMC with exact side-information.
Section~\ref{sec:Analysis-for-Theorem} gives the core regularity condition
and supporting analysis. Section~\ref{sec:Inexact-SI} extends the theory
to inexact side-information and introduces the interpolation scheme.
Section~\ref{sec:Experiments} presents simulation studies and real-data
experiments, and Section~\ref{sec:Discussion} concludes with discussion
and future directions.

\subsection{Notation}

Throughout the paper, bold uppercase letters such as $\bm{L}$ and $\bm{M}$
denote matrices, and $[n]=\{1,\dots,n\}$. For a matrix $\bm{A}$, we use
$\|\bm{A}\|$, $\|\bm{A}\|_{\mathrm{F}}$, and $\|\bm{A}\|_{\infty}$ to denote
its spectral norm, Frobenius norm, and entrywise maximum norm, respectively.
We write $\sigma_{j}(\bm{A})$ for the $j$th largest singular value of $\bm{A}$. 
The notation $\mathcal{P}_{\Omega}(\cdot)$ denotes the projection operator that
keeps entries in $\Omega$ and sets all others to zero. We use $c,C>0$ to denote
universal constants whose values may change from line to line, and we write
$a\lesssim b$ and $a\gtrsim b$ when $a\le Cb$ and $a\ge cb$, respectively.

%% file: sections/02-problem-setup.tex
\section{Problem Setup \label{sec:Problem-Setup}}

This section formalizes the model and assumptions in the exact side-information setting, introduces
the notation used throughout the paper, and presents the projected
gradient descent algorithm.

\subsection{Model formulation with exact side-information}
\label{subsec:model-formulation-exact-si}

We observe
\[
\bm{M}=\mathcal{P}_{\Omega}(\bm{L}^{\star}+\bm{E}),
\]
where $\bm{E}$ is the noise matrix and $\Omega \subset [n_1] \times [n_2]$ is the set of observed entries. 
In the exact side-information setting, we are given orthonormal matrices
$\bm{X}\in\mathbb{R}^{n_{1}\times a_{1}}$ and $\bm{Y}\in\mathbb{R}^{n_{2}\times a_{2}}$ that satisfy the following assumption.

\begin{assumption}\label{assumption:exact_SI}

$\mathrm{col}(\bm{L}^{\star})\subset\mathrm{col}(\bm{X})$ and $\mathrm{row}(\bm{L}^{\star})\subset\mathrm{col}(\bm{Y}).$

\end{assumption}

This immediately implies that $\bm{L}^{\star}=\bm{X}\bm{Z}^{\star}\bm{Y}^{\top}$
for some rank-$r$ matrix $\bm{Z}^{\star}\in\mathbb{R}^{a_{1}\times a_{2}}$
and the observed matrix can be written as 
\[
\bm{M}=\mathcal{P}_{\Omega}(\bm{X}\bm{Z}^{\star}\bm{Y}^{\top}+\bm{E}).
\]

We let $n\coloneqq n_{\max}\coloneqq n_{1}\vee n_{2}$ and $n_{\min}\coloneqq n_{1}\wedge n_{2}$,
where $\vee$ and $\wedge$ denote the maximum and minimum, respectively.
To enable the analysis, a recurring concept that is used in matrix
completion literature \citep[e.g.,][]{candes2009exact,candes2010matrix}
is incoherence. 

\begin{definition}[Incoherence] We say a rank-$k$ matrix $M \in \mathbb R^{n_1 \times n_2}$ is
$\mu$-incoherent if
\[
    \|\bm U\|_{2,\infty} \le \sqrt{\frac{\mu k}{n_1}},
    \qquad
    \|\bm V\|_{2,\infty} \le \sqrt{\frac{\mu k}{n_2}},
\]
where $\bm{U}\bm{\Sigma}\bm{V}^{\top}$ is the compact SVD of $\bm{M}$.
\end{definition} 
Here $\|\cdot\|_{2,\infty}$ is the two-to-infinity norm, or equivalently, the maximum $L_2$ row norm.
Note that since $\bm{X}$ is an orthonormal matrix,
its rank is $a_1$ and it is $\|\bm X\|_{2,\infty}$-incoherent. The same holds for $\bm Y$. A smaller incoherence parameter implies that the singular vectors are more spread out across rows. We then make the following assumptions.

\begin{assumption}Throughout this paper we assume the following properties
hold:

\begin{enumerate}[label={(\alph*)}, ref={Assumption~\theassumption(\alph*)}]
\label{assumption:main}

\item\label{assumption:sampling} \textbf{(Uniform sampling)} For each
entry $(i,j)\in[n_{1}]\times[n_{2}]$, $(i,j)$ is observed in $\Omega$ independently with
probability $p>0$.

\item  \label{assumption:singular} \textbf{(Singular values)} Denote
the maximum and minimum nonzero singular values of $\bm{Z}^{\star}$
as $\sigma_{\max}$, $\sigma_{\min}$ and let the condition number
be $\kappa\coloneqq\sigma_{\max}/\sigma_{\min}$.

\item  \label{assumption:incoherence} \textbf{(Incoherence)} We
assume $\bm{L}^{\star}$ is $\mu_{0}$-incoherent, $\bm{X}$ is $\mu_{1}$-incoherent,
and $\bm{Y}$ is $\mu_{2}$-incoherent.

\end{enumerate}

\end{assumption}

These assumptions are rather standard and analogous to those in ordinary matrix completion. In most cases we will treat $\kappa$ and $\mu_{0}$
as constant order parameters. One can also show that $1\le\mu_{1}\le n_{1}/a_{1}$
and $1\le\mu_{2}\le n_{2}/a_{2}$. As we will see later, $\mu_{1},\mu_{2}$
control how informative the side-information is: the smaller the quantities
$\mu_{1}a_{1}$ and $\mu_{2}a_{2}$ are, the smaller the effective problem
size becomes in our sample complexity bounds.

\subsection{Main Algorithm: Projected Gradient Descent}

Our algorithm is inspired by a standard nonconvex approach for ordinary
matrix completion, where a rank-$r$ matrix is parameterized as
$\bm{U}\bm{V}^{\top}$ and fitted to the observed entries by minimizing
\[
\min_{\bm{U}\in\mathbb{R}^{n_{1}\times r},\bm{V}\in\mathbb{R}^{n_{2}\times r}}
\frac{1}{2p}\|\mathcal{P}_{\Omega}(\bm{U}\bm{V}^{\top}-\bm{M})\|_{\mathrm{F}}^{2}.
\]
Despite its nonconvexity, this formulation is known to be both
statistically and computationally efficient for ordinary matrix
completion; see, for example,
\citet{keshavan2010matrix,jain2013low,zheng2016convergence}. In the
inductive setting, we instead use the side-information to parameterize
the estimate as $\bm{X}\bm{A}\bm{B}^{\top}\bm{Y}^{\top}$, thereby
optimizing over the lower-dimensional factors
$\bm{A}\in\mathbb{R}^{a_{1}\times r}$ and
$\bm{B}\in\mathbb{R}^{a_{2}\times r}$.

Under Assumption~\ref{assumption:exact_SI}, the ground truth can be
written as $\bm{L}^{\star}=\bm{X}\bm{Z}^{\star}\bm{Y}^{\top}$. Let
\[
\bm{Z}^{\star}
=
\bm{U}^{\star}\bm{\Sigma}^{\star}\bm{V}^{\star\top}
\]
be the compact SVD of $\bm{Z}^{\star}$, where
$\bm{U}^{\star}\in\mathbb{R}^{a_{1}\times r}$ and
$\bm{V}^{\star}\in\mathbb{R}^{a_{2}\times r}$. We write
\[
\bm{Z}^{\star}
=
\bm{A}^{\star}\bm{B}^{\star\top},
\qquad
\bm{A}^{\star}
\coloneqq
\bm{U}^{\star}\bm{\Sigma}^{\star 1/2},
\qquad
\bm{B}^{\star}
\coloneqq
\bm{V}^{\star}\bm{\Sigma}^{\star 1/2}.
\]
Motivated by this factorization, we estimate $\bm{Z}^{\star}$ by
minimizing the nonconvex objective
\begin{equation}
f(\bm{A},\bm{B})
\coloneqq
\frac{1}{2p}
\left\|
\mathcal{P}_{\Omega}
\left(
\bm{X}\bm{A}\bm{B}^{\top}\bm{Y}^{\top}-\bm{M}
\right)
\right\|_{\mathrm{F}}^{2}
+
\frac{1}{16}
\left\|
\bm{A}^{\top}\bm{A}-\bm{B}^{\top}\bm{B}
\right\|_{\mathrm{F}}^{2}.
\label{eq:loss_imc}
\end{equation}
The second term is a standard balancing regularizer that controls the
scaling ambiguity of the factorization $\bm{A}\bm{B}^{\top}$ by
encouraging the two factors to have matching Gram matrices.

We solve \eqref{eq:loss_imc} by projected gradient descent with spectral
initialization. 

The spectral initialization is computed by taking the top-$r$ SVD of
$p^{-1}\bm{X}^{\top}\mathcal{P}_{\Omega}(\bm{M})\bm{Y}$, which is a
natural estimate of the latent core $\bm{Z}^{\star}$. 
Let
\[
\bm{Z}^{0}
=
\bm{U}^{0}\bm{\Sigma}^{0}\bm{V}^{0\top}
\]
be the top-$r$ SVD of
$p^{-1}\bm{X}^{\top}\mathcal{P}_{\Omega}(\bm{M})\bm{Y}$. We set
\[
\bm{A}^{0}
=
\bm{U}^{0}(\bm{\Sigma}^{0})^{1/2},
\qquad
\bm{B}^{0}
=
\bm{V}^{0}(\bm{\Sigma}^{0})^{1/2}.
\]
This provides a good
starting point for the iterative gradient descent updates. 

For the gradient descent updates, we compute the gradients of
\eqref{eq:loss_imc} with respect to $\bm{A}$ and $\bm{B}$:
\begin{subequations}
\begin{align}
\nabla_{\bm{A}} f(\bm{A},\bm{B})
&=
\frac{1}{p}
\bm{X}^{\top}
\mathcal{P}_{\Omega}
\left(
\bm{X}\bm{A}\bm{B}^{\top}\bm{Y}^{\top}-\bm{M}
\right)
\bm{Y}\bm{B}
+
\frac{1}{4}
\bm{A}
\left(
\bm{A}^{\top}\bm{A}-\bm{B}^{\top}\bm{B}
\right),
\label{eq:grad_A_imc}
\\
\nabla_{\bm{B}} f(\bm{A},\bm{B})
&=
\frac{1}{p}
\bm{Y}^{\top}
\mathcal{P}_{\Omega}
\left(
\bm{X}\bm{A}\bm{B}^{\top}\bm{Y}^{\top}-\bm{M}
\right)^{\top}
\bm{X}\bm{A}
+
\frac{1}{4}
\bm{B}
\left(
\bm{B}^{\top}\bm{B}-\bm{A}^{\top}\bm{A}
\right).
\label{eq:grad_B_imc}
\end{align}
\end{subequations}
We also apply a projection step after the initialization and iterative update, with the projection set 
\[
\mathcal{C}
\coloneqq
\left\{
\begin{bmatrix}
\bm{A}\\
\bm{B}
\end{bmatrix}
:
\max
\left\{
\|\bm{X}\bm{A}\|_{2,\infty},
\|\bm{Y}\bm{B}\|_{2,\infty}
\right\}
\le
2\sqrt{\frac{\mu_{0}r}{n_{\min}}}\|\bm{A}^{0}\|
\right\}.
\]
The projection step is used to enforce incoherence control in the theoretical analysis. Empirically, however, the
incoherence constraint is never active in our experiments; see
Appendix~\ref{subsec:Redundancy-of-projection}. We therefore implement
Algorithm~\ref{alg:main} without the projection step in all experiments.
Proving the same convergence guarantee for unprojected gradient descent
remains an open question; see Section~\ref{sec:Discussion}.

\begin{algorithm}[t]
\caption{\label{alg:main}Projected gradient descent with spectral initialization for IMC}
\begin{algorithmic}[1]
\State \textbf{Input:} observed matrix $\bm{M}$, sampling set $\Omega$,
side-information matrices $\bm{X},\bm{Y}$, sampling probability $p$,
and step size $\eta$.
\State Compute $(\bm{A}^{0},\bm{B}^{0})$ from
the top-$r$ SVD of
$p^{-1}\bm{X}^{\top}\mathcal{P}_{\Omega}(\bm{M})\bm{Y}$, and set
$\begin{bmatrix}
\bm{A}^{1} \\
\bm{B}^{1}
\end{bmatrix}
=
\mathcal{P}_{\mathcal{C}}
\left(
\begin{bmatrix}
\bm{A}^{0} \\
\bm{B}^{0}
\end{bmatrix}
\right).
$
\For{$t=1,2,\ldots$}
\[
\begin{bmatrix}
\bm{A}^{t+1}\\
\bm{B}^{t+1}
\end{bmatrix}
=
\mathcal{P}_{\mathcal{C}}
\left(
\begin{bmatrix}
\bm{A}^{t}\\
\bm{B}^{t}
\end{bmatrix}
-
\eta
\begin{bmatrix}
\nabla_{\bm{A}}f(\bm{A}^{t},\bm{B}^{t})\\
\nabla_{\bm{B}}f(\bm{A}^{t},\bm{B}^{t})
\end{bmatrix}
\right).
\]
\EndFor
\State \textbf{Output:} $\bm{Z}^{t}=\bm{A}^{t}\bm{B}^{t\top}$ and
$\bm{L}^{t}=\bm{X}\bm{Z}^{t}\bm{Y}^{\top}$.
\end{algorithmic}
\end{algorithm}

%% file: sections/03-exact-side-information.tex
\section{Exact Side-Information \label{sec:Main-result}}

In this section we present the main theoretical guarantee for Algorithm~\ref{alg:main}
under the exact side-information assumption (Assumption~\ref{assumption:exact_SI}) and highlight some key components of the analysis.

\subsection{Main Result}
For a general noise matrix $\bm{E}$, we define the \emph{effective noise size} as
\[
\gamma_{E}\coloneqq\frac{1}{p}\left\Vert \bm{X}^{\top}\mathcal{P}_{\Omega}(\bm{E})\bm{Y}\right\Vert ,
\]
which captures the magnitude of the noise $\bm{E}$ projected onto
the side-information and partial observation subspaces. The following theorem characterizes the sample complexity and estimation error of Algorithm~\ref{alg:main}.

\begin{theorem}\label{thm:main} Suppose that Assumption~\ref{assumption:exact_SI}
and \ref{assumption:main} hold. In addition, assume that the sample
size and the noise level satisfy
\begin{equation}
n_{1}n_{2}p\ge C_{\mathrm{sample}}\kappa^{2}\left(\mu_{1}a_{1}\vee\mu_{2}a_{2}\right)\mu_{0}^{2}r^{2}\log n,\qquad\text{and}\qquad\frac{\gamma_{E}}{\sigma_{\min}}\le C_{\mathrm{noise}}\frac{1}{\sqrt{\kappa r}}\label{eq:sample_noise_assumption}
\end{equation}
for some sufficiently large (resp.~small) constant $C_{\mathrm{sample}}>0$
(resp.~$C_{\mathrm{noise}}>0$). With probability at least $1-O(n^{-10})$,
if $\eta\le (480000\left(n/n_{\min}\right)^{2}\mu_{0}^{2}r^{2}\kappa\sigma_{\max})^{-1}$,
the iterates $\bm{L}^{t}$ of Algorithm~\ref{alg:main} satisfy
\begin{align}
\|\bm{L}^{t}-\bm{L}^{\star}\|_{\mathrm{F}} & \le C_{\mathrm{F}}\left[\rho^{t-1}+\sqrt{\kappa}\cdot\frac{\gamma_{E}}{\sigma_{\min}}\right]\|\bm{L}^{\star}\|_{\mathrm{F}},\label{eq:L-bound-fro}
\end{align}
where $\rho=(1-\eta\sigma_{\min}/20)^{1/2}$ and $C_{\mathrm{F}}>0$
is an absolute constant.
\end{theorem}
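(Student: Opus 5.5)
The plan is the standard three-stage nonconvex template: spectral initialization, a local regularity condition, and a contraction argument for projected gradient descent. Progress is tracked by the alignment-invariant distance
\[
\mathrm{dist}^2(\bm{A},\bm{B})=\min_{\bm{R}\in\mathcal{O}^{r\times r}}\|\bm{A}\bm{R}-\bm{A}^{\star}\|_{\mathrm{F}}^2+\|\bm{B}\bm{R}-\bm{B}^{\star}\|_{\mathrm{F}}^2 ,
\]
and we work with the stacked factor $\bm{F}=[\bm{A};\bm{B}]$, with $\bm{F}^\star$ defined analogously.

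\textbf{Step 1: initialization.} Write
\[
p^{-1}\bm{X}^{\top}\mathcal{P}_{\Omega}(\bm{M})\bm{Y}-\bm{Z}^{\star}=\bm{X}^{\top}(p^{-1}\mathcal{P}_{\Omega}-\mathcal{I})(\bm{L}^{\star})\bm{Y}+p^{-1}\bm{X}^{\top}\mathcal{P}_{\Omega}(\bm{E})\bm{Y}.
\]
The second term has operator norm exactly $\gamma_E$. The first is a sum of independent mean-zero $a_1\times a_2$ matrices $(\delta_{ij}/p-1)L^{\star}_{ij}\bm{x}_i\bm{y}_j^{\top}$. Each summand has norm at most $p^{-1}\|\bm{L}^\star\|_\infty\sqrt{\mu_1a_1/n_1}\sqrt{\mu_2a_2/n_2}$. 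The variance proxy is bounded by $p^{-1}\max_i\sum_j (L^\star_{ij})^2\|\bm{x}_i\|^2\lesssim p^{-1}(\mu_0 r\sigma_{\max}^2/n_2)\cdot$ (using $\sum_i\|\bm{x}_i\|^2=a_1$ together with row incoherence of $\bm{L}^\star$), which yields a factor $\mu_1a_1\vee\mu_2a_2$ rather than $n$.

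Matrix Bernstein then gives an operator-norm error $\lesssim\sigma_{\max}\sqrt{\mu_0 r(\mu_1a_1\vee\mu_2a_2)\log n/(n_1n_2p)}$. Under the sample condition this is $\le c\,\sigma_{\min}/\sqrt{\kappa r}$. Weyl's inequality and the standard Procrustes lemma (e.g.\ Tu et al.) then give $\mathrm{dist}(\bm{F}^0,\bm{F}^\star)\le c\sqrt{\sigma_{\min}}$. Since $\mathcal{C}$ is convex and contains $\bm{F}^\star\bm{R}$ for all rotations, because $\|\bm{X}\bm{A}^\star\|_{2,\infty}\le\sqrt{\mu_0r/n_1}\,\|\bm{A}^\star\|$ and $\|\bm{A}^0\|\approx\|\bm{A}^\star\|$, the projection is nonexpansive toward $\bm{F}^\star\bm{R}$. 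So $\bm{F}^1$ inherits this bound and is incoherent.

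\textbf{Step 2: local regularity (main obstacle).} I would aim to show that for all $\bm{F}\in\mathcal{C}$ with $\mathrm{dist}(\bm{F},\bm{F}^\star)\le c\sqrt{\sigma_{\min}}$, with $\bm{H}=\bm{F}\bm{R}-\bm{F}^\star$ at the optimal rotation,
\[
\langle\nabla f(\bm{F}),\bm{H}\rangle\ge \tfrac{\sigma_{\min}}{10}\|\bm{H}\|_{\mathrm{F}}^2+\tfrac{1}{C\mu_0^2r^2\kappa\sigma_{\max}}\|\nabla f_{\mathrm{clean}}(\bm{F})\|_{\mathrm{F}}^2-C\gamma_E\sqrt{r}\|\bm{H}\|_{\mathrm{F}} .
\]
The noise contributes $\langle p^{-1}\bm{X}^\top\mathcal{P}_\Omega(\bm{E})\bm{Y},\bm{A}\bm{B}^\top-\bm{A}^\star\bm{B}^{\star\top}\rangle$-type terms. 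These are bounded by $\gamma_E$ times nuclear norms $\lesssim\sqrt{r}\sqrt{\sigma_{\max}}\|\bm{H}\|_{\mathrm{F}}$ by duality, which is the source of the $\sqrt{\kappa}\gamma_E/\sigma_{\min}$ term.

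The hard part is the clean term. It requires near-isometry of $p^{-1}\|\mathcal{P}_\Omega(\bm{X}\bm{D}\bm{Y}^\top)\|_{\mathrm{F}}^2\approx\|\bm{D}\|_{\mathrm{F}}^2$ for $\bm{D}=\bm{A}\bm{B}^\top-\bm{Z}^\star$, uniformly over the nonconvex neighborhood. Leave-one-out is unavailable. I would first decompose $\bm{D}=\bm{H}_A\bm{B}^{\star\top}+\bm{A}^\star\bm{H}_B^\top+\bm{H}_A\bm{H}_B^\top$. The first two terms lie in the fixed tangent-type space
\[
T=\{\bm{X}(\bm{U}^\star\bm{P}^\top+\bm{Q}\bm{V}^{\star\top})\bm{Y}^\top\},
\]
which has dimension $\approx r(a_1+a_2)$. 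An RIP of $p^{-1}\mathcal{P}_T\mathcal{P}_\Omega\mathcal{P}_T$ on this space follows from matrix Bernstein, using incoherence of $\bm{L}^\star$ and of $\bm{X},\bm{Y}$; this is where the sample size $\mu a\,\mu_0 r\log n$ enters. For the quadratic term $\bm{X}\bm{H}_A\bm{H}_B^\top\bm{Y}^\top$, I would use the bound, valid for all matrices $\bm{P}\in\mathbb{R}^{n_1\times r}$ and $\bm{Q}\in\mathbb{R}^{n_2\times r}$,
\[
p^{-1}\|\mathcal{P}_\Omega(\bm{P}\bm{Q}^\top)\|_{\mathrm{F}}^2\le \|\bm{P}\|_{\mathrm{F}}^2\|\bm{Q}\|_{2,\infty}^2\,\max_i p^{-1}|\Omega_{i\cdot}|\wedge\cdots .
\]
I would also use a spectral bound $\|p^{-1}\mathcal{P}_\Omega(\bm{J})-\bm{J}\|\lesssim\sqrt{n/p}$ on the all-ones matrix $\bm{J}$. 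Together with the $2,\infty$ control from $\mathcal{C}$, this shows the quadratic term is a small perturbation, since $\|\bm{H}\|_{\mathrm{F}}\le c\sqrt{\sigma_{\min}}$. The worry is that the all-ones argument reintroduces $n$. If it does, I would instead bound $\|\bm{X}\bm{H}_A\|_{2,\infty}\le\sqrt{\mu_1a_1/n_1}\|\bm{H}_A\|_{\mathrm{F}}$ and apply a Bernstein-plus-net argument over the $a$-dimensional factors, accepting extra $r,\kappa$ factors. The balancing regularizer term is handled exactly as in the Tu et al./Zheng–Lafferty analysis.

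\textbf{Step 3: contraction.} Combine Step 2 with a smoothness bound $\|\nabla f_{\mathrm{clean}}\|_{\mathrm{F}}\lesssim\mu_0 r(n/n_{\min})\sqrt{\kappa\sigma_{\max}}\cdots\|\bm{H}\|_{\mathrm{F}}$, which follows from the near-isometry and incoherence. Nonexpansiveness of $\mathcal{P}_\mathcal{C}$ then gives, for $\eta$ as stated,
\[
\mathrm{dist}^2(\bm{F}^{t+1},\bm{F}^\star)\le(1-\eta\sigma_{\min}/20)\,\mathrm{dist}^2(\bm{F}^t,\bm{F}^\star)+C\eta\gamma_E^2 r/\sigma_{\min} .
\]
Induction keeps the iterates in the neighborhood, using the noise condition $\gamma_E\le C_{\mathrm{noise}}\sigma_{\min}/\sqrt{\kappa r}$. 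Unrolling gives $\mathrm{dist}(\bm{F}^t,\bm{F}^\star)\lesssim\rho^{t-1}\sqrt{\sigma_{\min}}+\sqrt{r}\gamma_E/\sqrt{\sigma_{\min}}$. Finally,
\[
\|\bm{L}^t-\bm{L}^\star\|_{\mathrm{F}}=\|\bm{A}^t\bm{B}^{t\top}-\bm{Z}^\star\|_{\mathrm{F}}\lesssim\sqrt{\sigma_{\max}}\,\mathrm{dist}(\bm{F}^t,\bm{F}^\star),
\]
and dividing by $\|\bm{L}^\star\|_{\mathrm{F}}\ge\sqrt{r}\sigma_{\min}$ yields the bound in the statement.
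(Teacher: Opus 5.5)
Your architecture is the paper's: Bernstein in the $a_1\times a_2$ core space for the initializer, $\bar{\bm F}\in\mathcal C$ for nonexpansiveness of the projection, curvature from the near-isometry of $p^{-1}\mathcal P_T\mathcal P_\Omega\mathcal P_T$ on the same $T$ plus the balancing algebra, a smoothness bound, and unrolling. The gap is exactly where you flag it. You never obtain a bound $p^{-1}\|\mathcal P_\Omega(\bm X\bm\Delta_A\bm\Delta_B^\top\bm Y^\top)\|_{\mathrm F}^2\ll\sigma_{\min}\|\bm\Delta\|_{\mathrm F}^2$ that holds uniformly over the basin at the reduced sample size. Your smoothness step has the same hole, since terms like $p^{-1}\|\mathcal P_\Omega(\bm X\bm V_A\bm B^\top\bm Y^\top)\|_{\mathrm F}^2$ with arbitrary $\bm V_A$ are not covered by the near-isometry on $T$.

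Neither of your routes works when $n_1n_2p\asymp\mu a\,\mathrm{poly}(r,\kappa,\mu_0)\log n$.
\begin{itemize}
\item Row counts: when $\mu a\,\mathrm{poly}(r,\kappa,\mu_0)\log n\ll n$ (the regime of interest) you have $n_2p\ll1$. Then $p^{-1}\max_i|\Omega_{i\cdot}|\ge 1/p\gg n_2$, so the row-count bound is far too weak.
\item All-ones matrix: this route pays $\|p^{-1}\mathcal P_\Omega(\bm J)-\bm J\|\gtrsim\sqrt{n/p}$, so it needs $np\gtrsim(\mu_0r\kappa)^2$, which is the ordinary MC rate.
\item Net fallback: this loses a factor of order $a$, not just $r,\kappa$. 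The net has log-cardinality $\asymp(a_1+a_2)r\log n$. The per-point Bernstein range $p^{-1}\|\bm M\|_\infty^2$, for $\bm M=\bm X\bm H_A\bm H_B^\top\bm Y^\top$, is only $\lesssim\mu_0r\sigma_{\max}\,\mu a\,\|\bm H\|_{\mathrm F}^2/(n^2p)$ in the square case, even using $\|\bm M\|_\infty\le\|\bm X\bm H_A\|_{2,\infty}\|\bm Y\|_{2,\infty}\|\bm H_B\|_{\mathrm F}$. The union bound therefore needs roughly $n^2p\gtrsim\mu a\cdot ar\log n$, up to $\mu_0,r,\kappa$ factors.
\end{itemize}

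The idea you are missing, and the one the paper uses, is to count rows in feature space rather than in $\mathbb R^{n_1}$. Since $\|(\bm X\bm\Delta_A)_{i\cdot}\|^2=\bm X_{i\cdot}\bm\Delta_A\bm\Delta_A^\top\bm X_{i\cdot}^\top$,
\[
\sum_{i,j}\delta_{ij}\|(\bm X\bm\Delta_A)_{i\cdot}\|^2\le\|\bm\Delta_A\|_{\mathrm F}^2\,\Big\|\sum_{i,j}\delta_{ij}\bm X_{i\cdot}^\top\bm X_{i\cdot}\Big\|\le 2n_2p\,\|\bm\Delta_A\|_{\mathrm F}^2 .
\]
This holds because the single $a_1\times a_1$ matrix concentrates around $n_2p\bm I$ by matrix Bernstein once $n_1n_2p\gtrsim\mu_1a_1\log n$, uniformly in $\bm\Delta_A$ and with no net. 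Combined with the projection's control of $\|\bm Y\bm B\|_{2,\infty}$, this is Lemma~\ref{lem:P_Omega_IP}, and it is what your smoothness step needs.

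For curvature it is not enough on its own. It gives only $2n_2\|\bm\Delta_A\|_{\mathrm F}^2\|\bm Y\bm\Delta_B\|_{2,\infty}^2\lesssim\mu_0r\sigma_{\max}\|\bm\Delta_A\|_{\mathrm F}^2$, which is not small next to $\sigma_{\min}\|\bm\Delta\|_{\mathrm F}^2$. The paper's Lemma~\ref{lem:P_Omega_norm} shrinks the prefactor $2n_2$ to $n_2\cdot O(\sqrt{\mu_1a_1\log n/(n_1n_2p)})$. It does so by centering $\delta_{ij}-p$ first and then replacing $\|(\bm Y\bm B)_{j\cdot}\|^2$ by its maximum. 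That replacement is not valid termwise, because $\delta_{ij}-p<0$ off $\Omega$, so do not copy it.

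A route that does close the curvature step at the reduced sample size is symmetrization plus Ledoux--Talagrand contraction. Take $\bm M=\bm X\bm\Delta_A\bm\Delta_B^\top\bm Y^\top$ with $\|\bm M\|_\infty\le K\lesssim\mu_0r\sigma_{\max}/n_{\min}$, and take the supremum over the part of the basin with $\|\bm\Delta\|_{\mathrm F}\le\rho$. Then
\[
\mathbb E\sup\Big|\sum_{i,j}(\delta_{ij}-p)M_{ij}^2\Big|\lesssim K\rho^2\,\mathbb E\Big\|\sum_{i,j}\epsilon_{ij}\delta_{ij}\bm X_{i\cdot}^\top\bm Y_{j\cdot}\Big\|\lesssim K\rho^2\Big(\sqrt{p(a_1\vee a_2)\log n}+\sqrt{\frac{\mu_1a_1\mu_2a_2}{n_1n_2}}\log n\Big).
\]
After dividing by $p$, in the square case this is of order $\mu_0r\kappa\,\sigma_{\min}\rho^2$ times $\sqrt{(a_1\vee a_2)\log n/(n^2p)}+(\mu_1a_1\vee\mu_2a_2)\log n/(n^2p)$. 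Two further steps finish the argument:
\begin{itemize}
\item Talagrand's concentration inequality, with the range bounded through the mixed $2,\infty$ bound so that it scales like $\rho^2$.
\item Peeling over $\rho$.
\end{itemize}
Here the metric entropy enters only through the spectral norm of an $a_1\times a_2$ Rademacher matrix, which is why the requirement stays linear in $a$.

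Separately, in your regularity display the noise term should carry the $\sqrt{\sigma_{\max}}$ you derived a line earlier, i.e.\ $C\gamma_E\sqrt{r\sigma_{\max}}\|\bm H\|_{\mathrm F}$. This makes the additive term in the recursion of order $\eta r\kappa\gamma_E^2$. Carried through, it gives an error floor of order $\kappa\,\gamma_E/\sigma_{\min}$, or $\sqrt r\,\gamma_E/\sigma_{\min}$ using $\|\bm L^\star\|_{\mathrm F}\ge\sigma_{\max}$, rather than the stated $\sqrt\kappa\,\gamma_E/\sigma_{\min}$. The paper's own bound $\|\bm Z^t-\bm Z^\star\|_{\mathrm F}^2\le[\cdots+5040\kappa^2(\gamma_E/\sigma_{\min})^2]\|\bm Z^\star\|_{\mathrm F}^2$ has the same feature.
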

The proof is deferred to Section~\ref{sec:Analysis-for-Theorem}, where we highlight
the two key lemmas underlying the result. The following corollary interprets this result when the noise is entrywise i.i.d. sub-Gaussian.

\begin{corollary}\label{cor:main_subgaussian}
Assume that $[\bm{E}]_{ij}$ are i.i.d.\ mean-zero sub-Gaussian random variables
with standard variance proxy $\sigma^{2}$
\citep[see, e.g.,][]{wainwright2019high}. Suppose the assumptions of Theorem~\ref{thm:main} hold, except that
the noise level condition in \eqref{eq:sample_noise_assumption}
is replaced by 
\[
\frac{\sigma}{\sigma_{\min}}\sqrt{\frac{(\mu_{1}a_{1}\vee\mu_{2}a_{2})\log n}{p}}
\le c_{\mathrm{noise}}\frac{1}{\sqrt{\kappa r}}
\]
for a sufficiently small absolute constant $c_{\mathrm{noise}}>0$,
then with probability at least $1-O(n^{-10})$, the iterates
$\bm{L}^{t}$ of Algorithm~\ref{alg:main} satisfy
\[
\|\bm{L}^{t}-\bm{L}^{\star}\|_{\mathrm{F}}
\le C_{\mathrm{G}}
\left[
\rho^{t-1}
+
\frac{\sigma}{\sigma_{\min}}
\sqrt{\frac{(\mu_{1}a_{1}\vee\mu_{2}a_{2}) \kappa\log n}{p}}
\right]\|\bm{L}^{\star}\|_{\mathrm{F}},
\]
where $\rho=(1-\eta\sigma_{\min}/20)^{1/2}$ and
$C_{\mathrm{G}}>0$ is an absolute constant.
\end{corollary}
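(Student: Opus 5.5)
The corollary follows from Theorem~\ref{thm:main} once we show that, with probability at least $1-O(n^{-10})$,
\[
\gamma_{E}=\frac{1}{p}\bigl\|\bm{X}^{\top}\mathcal{P}_{\Omega}(\bm{E})\bm{Y}\bigr\|\le C\sigma\sqrt{\frac{(\mu_{1}a_{1}\vee\mu_{2}a_{2})\log n}{p}}
\]
for an absolute constant $C$, possibly after mild lower bounds on $p$ that are implied by the sample-size condition. Given this bound, the hypothesis of the corollary with $c_{\mathrm{noise}}\le C_{\mathrm{noise}}/C$ implies the noise condition in \eqref{eq:sample_noise_assumption}. Plugging the bound into \eqref{eq:L-bound-fro} then yields the stated error with $C_{\mathrm{G}}=C_{\mathrm{F}}(1\vee C)$. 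A union bound over the two failure events, each of probability $O(n^{-10})$, finishes the argument.

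To bound $\gamma_E$, write
\[
\bm{X}^{\top}\mathcal{P}_{\Omega}(\bm{E})\bm{Y}=\sum_{i,j}\delta_{ij}E_{ij}\,\bm{x}_{i}\bm{y}_{j}^{\top},
\]
where $\bm{x}_i,\bm{y}_j$ are the rows of $\bm{X},\bm{Y}$ and $\delta_{ij}$ are the Bernoulli$(p)$ sampling indicators. This is a sum of independent, mean-zero random matrices of size $a_1\times a_2$, so we apply the matrix Bernstein inequality. Sub-Gaussian entries are unbounded, so we first truncate. On an event of probability $1-O(n^{-11})$ we have $\max_{ij}|E_{ij}|\le C\sigma\sqrt{\log n}$. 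We work with the truncated variables $E_{ij}\mathbb{1}\{|E_{ij}|\le C\sigma\sqrt{\log n}\}$ and recenter them. The mean shift from truncation is $O(\sigma n^{-20})$ per entry and is negligible after summation. Alternatively, one can use a version of matrix Bernstein with sub-exponential (Orlicz) summands.

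The two Bernstein parameters are as follows. The summand norm is bounded by
\[
L=C\sigma\sqrt{\log n}\,\|\bm{X}\|_{2,\infty}\|\bm{Y}\|_{2,\infty}\le C\sigma\sqrt{\log n}\sqrt{\frac{\mu_1a_1\mu_2a_2}{n_1n_2}}.
\]
For the variance,
\[
\Bigl\|\sum_{ij}\mathbb{E}\bigl[\delta_{ij}E_{ij}^2\bigr]\|\bm{y}_j\|^2\bm{x}_i\bm{x}_i^{\top}\Bigr\|\le p\sigma^2\|\bm{Y}\|_{\mathrm F}^2\,\|\bm{X}^\top\bm{X}\|=p\sigma^2a_2 .
\]
The other side is symmetric and gives $p\sigma^2a_1$, so $v\le p\sigma^2(a_1\vee a_2)$. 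Matrix Bernstein then gives
\[
\gamma_E\lesssim \sigma\sqrt{\frac{(a_1\vee a_2)\log n}{p}}+\frac{\sigma\log^{3/2}n}{p}\sqrt{\frac{\mu_1a_1\mu_2a_2}{n_1n_2}} .
\]
The first term is at most the target, since $\mu_i\ge1$.

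The main obstacle is the second, $L$-term. By AM-GM, $\sqrt{\mu_1a_1\mu_2a_2}\le \mu_1a_1\vee\mu_2a_2=:D$. The $L$-term is therefore at most
\[
\sigma\sqrt{\frac{D\log n}{p}}\cdot\sqrt{\frac{D\log^{2}n}{n_1n_2p}} .
\]
The sample condition $n_1n_2p\ge C_{\mathrm{sample}}D\log n$ controls the second factor only up to a $\sqrt{\log n}$. To close this gap, I would first try to truncate at the sharper level $\sigma\sqrt{\log n}$ only for the rows actually used. If that fails, I would use the sub-exponential Bernstein inequality (Koltchinskii's version), where the $L$-term carries only $\log n$ instead of $\log^{3/2}n$; that reduces the requirement to $n_1n_2p\gtrsim D\log n$, which the sample condition with $\mu_0^2r^2\kappa^2\ge1$ provides. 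Failing both, the extra logarithm can be absorbed into the $\mu_0^2r^2\kappa^2$ slack, since $\kappa\log n$-type factors only change absolute constants up to a harmless adjustment of $C_{\mathrm{sample}}$.
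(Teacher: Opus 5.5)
Your reduction matches the paper's: prove $\gamma_E\le C\sigma\sqrt{D\log n/p}$ with $D:=\mu_1a_1\vee\mu_2a_2$, then feed it into Theorem~\ref{thm:main}. The check of the noise condition with $c_{\mathrm{noise}}\le C_{\mathrm{noise}}/C$ and the substitution into \eqref{eq:L-bound-fro} are fine. The gap is in the bound on $\gamma_E$ itself, which you flag but do not close.

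With truncation at $C\sigma\sqrt{\log n}$ and bounded matrix Bernstein, the second term is $\sigma\sqrt{D\log n/p}\cdot\sqrt{d\log^{2}n/(n_1n_2p)}$, where $d:=\mu_1a_1\wedge\mu_2a_2$. You therefore need $n_1n_2p\gtrsim d\log^2 n$. This does not follow from the hypotheses when $\mu_0,r,\kappa=O(1)$ and $\mu_1a_1=\mu_2a_2$.

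None of your fallbacks repairs this:
\begin{itemize}
\item If the discarded event must have probability $O(n^{-10})$, the truncation level cannot be $o(\sigma\sqrt{\log n})$, whichever rows or entries you restrict to.
\item Koltchinskii's $\psi_\alpha$ matrix Bernstein has a large-deviation term of order $U\log^{1/\alpha}(U/\sigma_Z)\log n$. Here $U/\sigma_Z\asymp\sqrt{\mu_1a_1\mu_2a_2/(p(a_1\vee a_2))}$, which is of order $n/\sqrt{\log n}$ at the minimal sample size (take $\mu_1=\mu_2=1$, $a_1=a_2$). So for $\alpha=2$ the extra $\sqrt{\log n}$ comes straight back, and for $\alpha=1$ it is worse.
\item A Bernstein bound phrased only through Orlicz norms essentially loses the factor $p$ in the variance.
\item No power of $\log n$ can be absorbed into $\mu_0^2r^2\kappa^2$ or $C_{\mathrm{sample}}$, since these may all be absolute constants.
\end{itemize}

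The missing ingredient is the matrix Bernstein inequality under a Bernstein \emph{moment} condition (Tropp's subexponential version). This is the inequality your second fallback actually needs, and it is what the paper uses through Lemma~\ref{lem:POmega_E}.
\begin{itemize}
\item A sub-Gaussian entry with variance proxy $\sigma^2$ satisfies $\mathbb{E}|E_{ij}|^k\le\frac{k!}{2}\sigma^2(C\sigma)^{k-2}$ for all $k\ge2$. Hence $\mathbb{E}|\delta_{ij}E_{ij}|^k\le\frac{k!}{2}p\sigma^2(C\sigma)^{k-2}$.
\item The Hermitian dilations of $\delta_{ij}E_{ij}\bm{x}_i\bm{y}_j^{\top}$ then satisfy the moment condition with $R=C\sigma\max_{i,j}\|\bm{x}_i\|\|\bm{y}_j\|\le C\sigma\sqrt{\mu_1a_1\mu_2a_2/(n_1n_2)}$, and with the variance $p\sigma^2(a_1\vee a_2)$ you already computed.
\item No truncation is involved, so $R$ carries no $\sqrt{\log n}$. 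The second term becomes $\frac{C\sigma\log n}{p}\sqrt{\frac{\mu_1a_1\mu_2a_2}{n_1n_2}}=C\sigma\sqrt{\frac{D\log n}{p}}\sqrt{\frac{d\log n}{n_1n_2p}}\le C\sigma\sqrt{\frac{D\log n}{p}}$.
\item This last step uses only $n_1n_2p\ge d\log n$, which the sample-size condition supplies, exactly as in the paper.
\end{itemize}
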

The proof is deferred to Appendix~\ref{subsec:proof-of-main-subgaussian-corollary}.
To understand these results more intuitively, we make the following remarks in the symmetric case
\begin{equation}
n_{1}=n_{2}=n,\quad a_{1}=a_{2}=a,\quad\mu_{1}=\mu_{2},\label{eq:simplified_case}
\end{equation}
and treat $\kappa$, $\mu_{0}$, $r$ as constants throughout.

\paragraph{Sample complexity.} Under the simplified case~\eqref{eq:simplified_case},
Theorem~\ref{thm:main} requires $n^{2}p\gtrsim\mu_{1}a\mu_{0}^{2}r^{2}\log n$.
Compared to the standard sample complexity $n^{2}p\gtrsim n\mu_{0}r\log n$
in ordinary noisy matrix completion \citep{negahban2012restricted,chen2020noisy},
the ambient dimension $n$ is replaced by the effective problem size
$\mu_{1}a$, which ranges from $a$ (when $\bm{X}$ is perfectly spread)
to $n$ (when side-information provides no benefit). This indicates a significantly reduced sample complexity requirement when informative side-information is available.

\paragraph{Estimation error.} 
Since $X$ and $Y$ have orthonormal columns, we always have
\[
    \gamma_E
    =
    \frac{1}{p}\left\|X^\top \mathcal P_\Omega(E)Y\right\|
    \le
    \frac{1}{p}\left\|\mathcal P_\Omega(E)\right\|.
\]
This shows that the estimation error depends only on the part of
the sampled noise seen through the side-information subspaces, shrinking the effective noise level.

This can be observed quantitatively for entrywise i.i.d. sub-Gaussian noise.
In the simplified case \eqref{eq:simplified_case}, taking $\kappa,\mu_{0},r$
as constants and taking $t$ large enough, Corollary~\ref{cor:main_subgaussian} shows that 
\[
    \frac{\|\bm{L}^{t}-\bm{L}^{\star}\|_{\mathrm{F}}}{\|\bm{L}^{\star}\|_{\mathrm{F}}} 
    \lesssim
    \frac{\sigma}{\sigma_{\min}}\sqrt{\frac{\mu_{1}a\log n}{p}}.
\]
In comparison, the optimal rate in ordinary noisy matrix completion \citep{chen2020noisy} is 
\[
    \frac{\|\bm{L}^{t}-\bm{L}^{\star}\|_{\mathrm{F}}}{\|\bm{L}^{\star}\|_{\mathrm{F}}} 
    \lesssim
    \frac{\sigma}{\sigma_{\min}}\sqrt{\frac{n}{p}}.\]
Thus the inclusion of side-information improves the estimation error guarantee, with the ambient $n$ replaced by the effective
problem size $\mu_{1}a$.

\paragraph{Comparison with prior IMC results.} In the noiseless setting, 
\citet{chen2015incoherence} achieves exact recovery
with sample complexity $n^{2}p\gtrsim\mu_{1}a\mu_{0}r\log n$. Our result matches this up to a $\mu_0 r$ factor. In the noisy setting, \citet{chen2022nonconvex} achieves near-optimal
estimation error but only at the suboptimal
sample complexity $n^{2}p\gtrsim n\mu_{0}r\log n$. Our result combines these two fronts by achieving both low estimation
error and the reduced sample complexity $n^{2}p\gtrsim\mu_{1}a\mu_{0}^{2}r^{2}\log n$.

%% file: sections/04-analysis.tex
\subsection{Analysis \label{sec:Analysis-for-Theorem}}

The proof of Theorem~\ref{thm:main} follows a standard two-step analysis for nonconvex
low-rank matrix estimation. First, we show that the spectral initializer
enters a local basin around the ground truth. Second, we show that projected
gradient descent contracts linearly inside this basin, up to a statistical
error floor determined by the observation noise.

The main technical difficulty is that the inductive parametrization changes the
space on which the sampling operator must be controlled. In ordinary matrix
completion, the operator $\mathcal{P}_{\Omega}$ acts directly on the
ambient low-rank matrix. In IMC, the algorithm optimizes over the
lower-dimensional core matrix $\bm{Z}$, but the observations are sampled
entrywise only after the lifting map
$\bm{Z}\mapsto \bm{X}\bm{Z}\bm{Y}^{\top}$. 
This mismatch between the optimization coordinates and the observation
coordinates makes it necessary to show that random sampling still provides
the local curvature required by the IMC loss at the reduced sample size.

We will introduce two lemmas in this section before the proof of Theorem~\ref{thm:main} that formalize these ingredients. Lemma~\ref{lem:spec_init}
shows that the spectral initialization lies inside the local basin.
Lemma~\ref{lem:rc} is the main local-geometry result: it proves that,
throughout this basin, the IMC loss satisfies a regularity condition that
implies contraction of projected gradient descent.

We begin by introducing the notation used to measure distance between
factorizations. For any two matrices $\bm{A}\in\mathbb{R}^{a_1\times r}$
and $\bm{B}\in\mathbb{R}^{a_2\times r}$, let their vertical concatenation be
\[
    \bm{F}:=\begin{bmatrix} \bm{A} \\ \bm{B} \end{bmatrix}.
\]
Because the factorization
$\bm{Z}^\star=\bm{A}^\star\bm{B}^{\star\top}$ is invariant under a common
orthogonal rotation of the factors, we measure the factor error modulo this
rotation. Define
\[
    \bm{H}
    := \arg\min_{\bm{R}\in\mathcal{O}^{r\times r}}
        \|\bm{F}\bm{R}-\bm{F}^\star\|_{\mathrm F}
    = \arg\min_{\bm{R}\in\mathcal{O}^{r\times r}}
        \|\bm{F}-\bm{F}^\star \bm{R}^{\top}\|_{\mathrm F}.
\]
Abusing notation slightly, we write $f(\bm{F})=f(\bm{A},\bm{B})$, and
define
\[
    \bar{\bm{A}} := \bm{A}^\star \bm{H}^{\top},\qquad
    \bar{\bm{B}} := \bm{B}^\star \bm{H}^{\top},\qquad
    \bar{\bm{F}} := \bm{F}^\star \bm{H}^{\top},\qquad
    \bm{\Delta} := \bm{F}-\bar{\bm{F}}.
\]
We also write
$\bm{\Delta}_{A}:=\bm{A}-\bar{\bm{A}}$ and
$\bm{\Delta}_{B}:=\bm{B}-\bar{\bm{B}}$. When referring to the $t$-th
iterate, we add a superscript $t$ to these quantities.

With this notation in place, we first state the spectral initialization
guarantee. The proof is deferred to Appendix~\ref{sec:Proof_of_spec_init}.
\begin{lemma} \label{lem:spec_init}Suppose that Assumption~\ref{assumption:main}
    holds. In addition, assume that the sample size and the noise level
    satisfy
    \[
        n_{1}n_{2}p\ge C_{\mathrm{sample}}\kappa^{2}\left(\mu_{1}a_{1}\vee\mu_{2}a_{2}\right)\mu_{0}^{2}r^{2}\log n,\qquad\text{and}\qquad\frac{\gamma_{E}}{\sigma_{\min}}\le C_{\mathrm{noise}}\frac{1}{\sqrt{\kappa r}},
    \]
    for some sufficiently large (resp.~small) constant $C_{\mathrm{sample}}>0$
    (resp.~$C_{\mathrm{noise}}>0$). Then the spectral initialization
    $\bm{A}^{1},\bm{B}^{1}$ satisfy
    \[
        \|\bm{\Delta}^{1}\|_{\mathrm{F}}\le\sqrt{\frac{20r}{\sigma_{\min}}}\left(C_{\mathrm{spectral}}\|\bm{Z}^{\star}\|\sqrt{\frac{\mu_{0}r\left(\mu_{1}a_{1}\vee\mu_{2}a_{2}\right)}{n_{1}n_{2}p}\log n}+\gamma_{E}\right)\le\frac{1}{10}\sqrt{\sigma_{\min}}.
    \]
    Moreover $\|\bm A^0\|\le 2\|\bm{A}^\star\|$.

\end{lemma}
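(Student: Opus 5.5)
The plan is to control the spectral matrix $p^{-1}\bm{X}^{\top}\mathcal{P}_{\Omega}(\bm{M})\bm{Y}$ as a perturbation of $\bm{Z}^{\star}$, then pass from a spectral-norm perturbation bound to a factor-distance bound, and finally verify that the projection onto $\mathcal{C}$ does not increase the distance. Write
\[
p^{-1}\bm{X}^{\top}\mathcal{P}_{\Omega}(\bm{M})\bm{Y}-\bm{Z}^{\star}=\bm{X}^{\top}\bigl(p^{-1}\mathcal{P}_{\Omega}(\bm{L}^{\star})-\bm{L}^{\star}\bigr)\bm{Y}+p^{-1}\bm{X}^{\top}\mathcal{P}_{\Omega}(\bm{E})\bm{Y},
\]
using $\bm{X}^{\top}\bm{L}^{\star}\bm{Y}=\bm{Z}^{\star}$. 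The second term has norm $\gamma_{E}$ by definition.

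The main obstacle is the first term. I would write it as $\sum_{i,j}(\delta_{ij}/p-1)L^{\star}_{ij}\,\bm{x}_{i}\bm{y}_{j}^{\top}$, where $\delta_{ij}$ are independent Bernoulli$(p)$ indicators and $\bm{x}_{i},\bm{y}_{j}$ are the rows of $\bm{X},\bm{Y}$. This is a sum of independent mean-zero $a_1\times a_2$ random matrices, so I would apply matrix Bernstein.

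\textbf{Uniform bound on each summand.} By incoherence,
\[
|L^{\star}_{ij}|\le\|\bm{Z}^{\star}\|\,\mu_{0}r/\sqrt{n_{1}n_{2}},\qquad \|\bm{x}_{i}\|\le\sqrt{\mu_{1}a_{1}/n_{1}},\qquad \|\bm{y}_{j}\|\le\sqrt{\mu_{2}a_{2}/n_{2}}.
\]
Hence each summand is bounded by $p^{-1}\|\bm{Z}^{\star}\|\mu_{0}r\sqrt{\mu_{1}a_{1}\mu_{2}a_{2}}/(n_{1}n_{2})$.

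\textbf{Variance.} The variance is $p^{-1}\|\sum_{ij}(L^{\star}_{ij})^{2}\|\bm{y}_{j}\|^{2}\bm{x}_{i}\bm{x}_{i}^{\top}\|$. Bound $\|\bm{y}_{j}\|^{2}\le\mu_{2}a_{2}/n_{2}$ and $\sum_{j}(L^{\star}_{ij})^{2}=\|\bm{L}^{\star}_{i,\cdot}\|^{2}\le\|\bm{Z}^{\star}\|^{2}\mu_{0}r/n_{1}$. Together with $\sum_{i}\bm{x}_{i}\bm{x}_{i}^{\top}=\bm{I}$, this gives variance at most $\|\bm{Z}^{\star}\|^{2}\mu_{0}r\,\mu_{2}a_{2}/(n_{1}n_{2}p)$; the transposed term is symmetric.

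\textbf{Conclusion of Bernstein.} Matrix Bernstein then yields, with probability $1-O(n^{-10})$,
\[
\|\cdot\|\lesssim\|\bm{Z}^{\star}\|\sqrt{\frac{\mu_{0}r(\mu_{1}a_{1}\vee\mu_{2}a_{2})\log n}{n_{1}n_{2}p}}+\text{(lower order)}.
\]
The $L_\infty$ term is dominated by the square-root term under the sample condition: its ratio to that term is of order $\sqrt{\mu_0 r(\mu_1a_1\vee\mu_2a_2)\log n/(n_1n_2p)}\le 1$. The only subtlety is bookkeeping, so that $\mu_{1}a_{1}\mu_{2}a_{2}$ combines correctly into the maximum via AM--GM.

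\textbf{From spectral error to factor error.} Let $\varepsilon$ denote the resulting total spectral error. Since $\bm{Z}^{0}$ is the best rank-$r$ approximation, $\|\bm{Z}^{0}-\bm{Z}^{\star}\|\le2\varepsilon$. Both matrices have rank at most $r$, so $\|\bm{Z}^{0}-\bm{Z}^{\star}\|_{\mathrm F}\le2\sqrt{2r}\,\varepsilon$. I would then invoke the standard Procrustes lemma for balanced factorizations (Tu et al.\ / Ma et al.):
\[
\mathrm{dist}(\bm{F}^{0},\bm{F}^{\star})\lesssim\|\bm{Z}^{0}-\bm{Z}^{\star}\|_{\mathrm F}/\sqrt{\sigma_{\min}},
\]
valid when $\|\bm{Z}^{0}-\bm{Z}^{\star}\|\le\sigma_{\min}/2$. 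This yields the displayed bound $\sqrt{20r/\sigma_{\min}}(\cdots)$ for $\bm{\Delta}^{0}$. The conditions $C_{\mathrm{sample}}$ large and $\gamma_{E}\le C_{\mathrm{noise}}\sigma_{\min}/\sqrt{\kappa r}$ make $\varepsilon\lesssim\sigma_{\min}/\sqrt{r}$, hence the final bound $\le\sqrt{\sigma_{\min}}/10$. Weyl's inequality gives $\sigma_{1}(\bm{Z}^{0})\le\sigma_{\max}+2\varepsilon\le4\sigma_{\max}$, so $\|\bm{A}^{0}\|\le2\|\bm{A}^{\star}\|$.

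\textbf{Effect of the projection.} Finally I would show that $\mathcal{P}_{\mathcal{C}}$ is nonexpansive toward $\bar{\bm{F}}$. The set $\mathcal{C}$ is convex; it is rotation invariant and constrains the rows of $\bm{X}\bm{A}$ and $\bm{Y}\bm{B}$. The key is that $\bar{\bm{F}}=\bm{F}^{\star}\bm{H}^{\top}\in\mathcal{C}$, because $\|\bm{X}\bm{A}^{\star}\|_{2,\infty}\le\sqrt{\mu_{0}r/n_{1}}\|\bm{A}^{\star}\|$. Using $\|\bm{A}^{\star}\|\le2\|\bm{A}^{0}\|$ (from $\sigma_{1}(\bm{Z}^{0})\ge\sigma_{\max}/4$, again by Weyl), this lies within the radius $2\sqrt{\mu_{0}r/n_{\min}}\|\bm{A}^{0}\|$. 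Projection onto a convex set containing $\bar{\bm{F}}$ is then nonexpansive: $\|\bm{F}^{1}-\bar{\bm{F}}\|_{\mathrm F}\le\|\bm{F}^{0}-\bar{\bm{F}}\|_{\mathrm F}$. Since $\bm{\Delta}^{1}$ uses the optimal rotation, which can only be smaller, the bound transfers to $\bm{\Delta}^{1}$.

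A remaining wrinkle is the Euclidean geometry of the set. Convexity alone is not quite enough: the constraint $\|\bm{X}\bm{A}\|_{2,\infty}\le c$ is a convex set in $\bm{A}$, being an intersection of the sets $\{\bm{A}:\|\bm{x}_{i}^{\top}\bm{A}\|\le c\}$, so the projection is well defined and nonexpansive.
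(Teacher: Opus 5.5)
Your proposal is correct and follows the paper's proof essentially step for step. You bound the sampling term $\bm{X}^\top(p^{-1}\mathcal{P}_\Omega(\bm{L}^\star)-\bm{L}^\star)\bm{Y}$ by matrix Bernstein (same summand and variance bounds) and add $\gamma_E$, pass to $\|\bm{\Delta}^0\|_{\mathrm F}$ via a Procrustes-type perturbation bound, and control the projection through $\bar{\bm F}^0\in\mathcal{C}$ and nonexpansiveness of $\mathcal{P}_{\mathcal{C}}$ before re-optimizing the rotation. The only difference is cosmetic: the paper imports the exact constant $20r/\sigma_{\min}$ from Eq.~(44) of Zheng and Lafferty, while your $\lesssim$ form of the Tu et al.\ lemma gives the displayed bound only up to an absolute constant, which is harmless because only the final $\frac{1}{10}\sqrt{\sigma_{\min}}$ bound is used later.
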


For fast convergence in the gradient descent steps, the loss function
must exhibit favorable local geometry. We will show that for the projected gradient descent step at iteration $t$, the error satisfies
\[ \|\bm{\Delta}^{t+1}\|_{\mathrm F}^{2} \le \|\bm{\Delta}^{t}\|_{\mathrm F}^{2} +\eta^{2}\|\nabla f(\bm{F}^{t})\|_{\mathrm F}^{2} -2\eta\langle \nabla f(\bm{F}^{t}),\bm{\Delta}^{t}\rangle . \]
This recursive relation inspires the following regularity condition.
\begin{definition}[Regularity condition] We say a function $f$ satisfies
    regularity condition $\mathrm{RC}(\alpha,\beta,\gamma)$ at $\bm{F}$
    if
    \[
        \left\langle \nabla f(\bm{F}),\bm{\Delta}\right\rangle \ge\frac{1}{\alpha}\left\Vert \bm{F}-\bar{\bm{F}}\right\Vert _{\mathrm{F}}^{2}+\frac{1}{\beta}\|\nabla f(\bm{F})\|_{\mathrm{F}}^{2}-\gamma.
    \]
\end{definition}
The first term on the right-hand side describes the local curvature which gives contraction in the factor error, while the
second term absorbs the squared gradient norm $\|\nabla f(\bm{F})\|_{\mathrm{F}}^{2}$ in the recursion. The additive term $\gamma$ accounts for the perturbation
caused by noise and determines the final statistical error
floor.

The next lemma verifies
this regularity condition throughout the local basin. In particular, the incoherence condition it requires can be satisfied with the projection step.
The proof is deferred to Appendix~\ref{sec:Proof-of-regularity}.

\begin{lemma}\label{lem:rc} Suppose that
    \[
        n_{1}n_{2}p\ge C_{\mathrm{sample}}\kappa^{2}\left(\mu_{1}a_{1}\vee\mu_{2}a_{2}\right)\mu_{0}^{2}r^{2}\log n\qquad\text{and}\qquad\frac{\gamma_{E}}{\sigma_{\min}}\le C_{\mathrm{noise}}\frac{1}{\sqrt{\kappa r}}
    \]
    for a large enough constant $C_{\mathrm{sample}}>0$ and a small enough constant $C_{\mathrm{noise}}>0$. 
    If $\bm{F}=\begin{bmatrix}\bm{A} \\
            \bm{B}
        \end{bmatrix}$ satisfies
    \begin{subequations}\label{eq:rsc_assumption}
    \begin{align}
        \left\Vert \bm{\Delta}\right\Vert _{\mathrm{F}} & \le\frac{1}{10}\sqrt{\sigma_{\min}},\label{eq:rsc_assumption_distance}\\
        \|\bm{X}\bm{A}\|_{2,\infty}                     & \le4\sqrt{\frac{\mu_{0}r}{n_{\min}}}\|\bm{A}^{\star}\|,\label{eq:rsc_assumption_incoherence_A}\\
        \|\bm{Y}\bm{B}\|_{2,\infty}                     & \le4\sqrt{\frac{\mu_{0}r}{n_{\min}}}\|\bm{A}^{\star}\|,\label{eq:rsc_assumption_incoherence_B}
    \end{align}
    \end{subequations}
    then $f$ satisfies regularity condition $\mathrm{RC}(40/\sigma_{\min},960000\left(n/n_{\min}\right)^{2}\mu_{0}^{2}r^{2}\kappa\sigma_{\max},21r\kappa\gamma_{E}^{2})$
    at $\bm{F}$.
\end{lemma}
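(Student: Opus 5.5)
We split the gradient into a noiseless part and a noise part,
\[
\nabla f(\bm F)=\nabla f_{0}(\bm F)-\bm{N}(\bm F),\qquad
\bm N(\bm F)\coloneqq\begin{bmatrix}p^{-1}\bm X^{\top}\mathcal P_{\Omega}(\bm E)\bm Y\bm B\\ p^{-1}\bm Y^{\top}\mathcal P_{\Omega}(\bm E)^{\top}\bm X\bm A\end{bmatrix},
\]
where $f_0$ is the loss with $\bm M$ replaced by $\mathcal P_\Omega(\bm L^\star)$.

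\textbf{Noiseless regularity and noise terms.} We first prove a noiseless regularity condition $\langle\nabla f_0,\bm\Delta\rangle\ge \frac{2}{\alpha}\|\bm\Delta\|_{\mathrm F}^2+\frac{2}{\beta}\|\nabla f_0\|_{\mathrm F}^2$. We then absorb the noise using $\|\bm N\|_{\mathrm F}\le\gamma_E\sqrt{2r}\,\|\bm F\|\lesssim \gamma_E\sqrt{r\sigma_{\max}}$. Two consequences follow:
\[
\|\nabla f\|_{\mathrm F}^2\le 2\|\nabla f_0\|_{\mathrm F}^2+2\|\bm N\|_{\mathrm F}^2,
\qquad
|\langle \bm N,\bm\Delta\rangle|\le \frac{1}{\alpha'}\|\bm\Delta\|_{\mathrm F}^2+c\,r\sigma_{\max}\gamma_E^2/\sigma_{\min}.
\]
The second follows from Cauchy--Schwarz and Young's inequality. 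This produces the additive term $21r\kappa\gamma_E^2$ while halving the curvature constant to $40/\sigma_{\min}$. The condition $\gamma_E/\sigma_{\min}\lesssim (\kappa r)^{-1/2}$ is not needed here; it matters only for staying in the basin.

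\textbf{Curvature of the noiseless loss.} Write $\bm A\bm B^\top-\bar{\bm A}\bar{\bm B}^\top=\bm\Delta_A\bar{\bm B}^\top+\bar{\bm A}\bm\Delta_B^\top+\bm\Delta_A\bm\Delta_B^\top$. Then
\[
\langle\nabla f_0,\bm\Delta\rangle=p^{-1}\big\langle \mathcal P_\Omega(\bm X(\bm A\bm B^\top-\bm Z^\star)\bm Y^\top),\ \bm X(\bm\Delta_A\bm B^\top+\bm A\bm\Delta_B^\top)\bm Y^\top\big\rangle+\text{(balancing term)}.
\]
The balancing term is handled exactly as in Procrustes-flow analyses, using the optimality of $\bm H$, which makes $\bar{\bm F}^\top\bm\Delta$ symmetric. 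Combining it with the population version ($p^{-1}\mathcal P_\Omega$ replaced by the identity), the standard deterministic argument gives curvature of order $\sigma_{\min}\|\bm\Delta\|_{\mathrm F}^2$ once $\|\bm\Delta\|_{\mathrm F}\le\sqrt{\sigma_{\min}}/10$. It therefore remains to control the deviation $(p^{-1}\mathcal P_\Omega-\mathcal I)$ on the relevant matrices. We treat the terms in two groups.

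\emph{Linear-in-$\bm\Delta$ terms.} These live in the tangent space $T=\{\bm X\bm U^\star\bm C^\top\bm Y^\top+\bm X\bm D\bm V^{\star\top}\bm Y^\top\}$ of $\bm L^\star$ restricted to the side-information spaces. The key fact is the restricted isometry
\[
\|\mathcal P_T(p^{-1}\mathcal P_\Omega-\mathcal I)\mathcal P_T\|\le \tfrac{1}{10}.
\]
We prove it by matrix Bernstein, bounding $\|\mathcal P_T(\bm e_i\bm e_j^\top)\|_{\mathrm F}^2\lesssim \mu_0 r(\mu_1a_1\vee\mu_2a_2)/(n_1n_2)$. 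The crucial point is that $\mathcal P_T$ is the projection onto $T\subset\{\bm X\cdot\bm Y^\top\}$. For example, $\|\bm X\bm U^\star \bm U^{\star\top}\bm X^\top\bm e_i\|\le\sqrt{\mu_0 r/n_1}$, while the complementary piece involves $\|\bm X^\top \bm e_i\|\le\sqrt{\mu_1a_1/n_1}$ paired with $\|\bm Y\bm V^\star\bm V^{\star\top}\bm Y^\top \bm e_j\|\le\sqrt{\mu_0r/n_2}$. This yields exactly the reduced sample size.

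\emph{Quadratic terms.} These involve $\bm X\bm\Delta_A\bm\Delta_B^\top\bm Y^\top$, which is not in $T$. This is the main obstacle: leave-one-out is unavailable, and the ambient bound $\|p^{-1}\mathcal P_\Omega(\bm J)-\bm J\|\lesssim\sqrt{n/p}$ would cost the full $n$. My first attempt is a bilinear estimate valid for all $\bm X\bm P$, $\bm Y\bm Q$ simultaneously:
\[
\Big|p^{-1}\langle\mathcal P_\Omega(\bm X\bm P_1\bm Q_1^\top\bm Y^\top),\bm X\bm P_2\bm Q_2^\top\bm Y^\top\rangle-\langle \bm P_1\bm Q_1^\top,\bm P_2\bm Q_2^\top\rangle\Big|
\lesssim \|p^{-1}\bm X^\top(\mathcal P_\Omega(\bm J)-p\bm J)\bm Y\|\cdots,
\]
with the right side combined with $2,\infty$ norms of the factors. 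The incoherence hypotheses \eqref{eq:rsc_assumption_incoherence_A} and \eqref{eq:rsc_assumption_incoherence_B} bound $\|\bm X\bm\Delta_A\|_{2,\infty}\lesssim\sqrt{\mu_0r/n_{\min}}\,\|\bm A^\star\|$. The expression $\sum_{ij}(p^{-1}\delta_{ij}-1)\langle(\bm X\bm P)_{i\cdot},(\bm X\bm P')_{i\cdot}\rangle\cdots$ is then controlled by a row-wise Chernoff bound, in the style of the Keshavan--Montanari/Sun--Luo lemma. The fallback is a spectral norm estimate of $p^{-1}\bm X^\top(\mathcal P_\Omega(\bm 1\bm 1^\top)-p\bm 1\bm 1^\top)\bm Y$ via matrix Bernstein with rank-one summands $\bm X^\top\bm e_i\bm e_j^\top\bm Y$ of norm $\sqrt{\mu_1a_1\mu_2a_2/(n_1n_2)}$, which again depends only on $\mu a$. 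Plugging $\|\bm\Delta\|_{\mathrm F}\le\sqrt{\sigma_{\min}}/10$ and $C_{\mathrm{sample}}$ large makes these errors at most a small fraction of $\sigma_{\min}\|\bm\Delta\|_{\mathrm F}^2$.

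\textbf{Smoothness.} Finally, for the $1/\beta$ term we bound $\|\nabla f_0\|_{\mathrm F}^2$ by $\lesssim (n/n_{\min})^2\mu_0^2r^2\sigma_{\max}\cdot\langle\nabla f_0,\bm\Delta\rangle$. The proof combines the crude bound $\|p^{-1}\bm X^\top\mathcal P_\Omega(\bm W)\bm Y\bm B\|_{\mathrm F}\le\|\bm B\|\,\|p^{-1}\mathcal P_\Omega(\bm W)\|$ with entrywise bounds on $\bm W=\bm X(\bm A\bm B^\top-\bm Z^\star)\bm Y^\top$ from incoherence. Via the same concentration inequality, this yields $\|p^{-1}\mathcal P_\Omega(\bm W)\|_{\mathrm F}^2\lesssim (n/n_{\min})\mu_0 r\,p^{-1}\|\mathcal P_\Omega\bm W\|_{\mathrm F}^2/\cdots$, and the loss term $p^{-1}\|\mathcal P_\Omega\bm W\|_{\mathrm F}^2$ is itself controlled by $\langle\nabla f_0,\bm\Delta\rangle$ from the curvature step. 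This explains the $(n/n_{\min})^2\mu_0^2r^2\kappa\sigma_{\max}$ form of $\beta$.
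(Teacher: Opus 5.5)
\textbf{The gap is the quadratic term.} Your outline matches the paper everywhere except at the step you yourself flag as the crux, and there it does not go through. The matching parts are: absorbing the noise by Cauchy--Schwarz and Young (you do it after a noiseless regularity condition rather than inside curvature and smoothness separately, which is harmless); the restricted isometry on $T$ for the terms linear in $\bm\Delta$, where your bound on $\|\mathcal{P}_T(\bm e_i\bm e_j^\top)\|_{\mathrm F}^2$ is what makes Lemma~\ref{lem:PT_POmega} work at the reduced sample size; and the Procrustes symmetry of $\bar{\bm A}^\top\bm\Delta_{\bm A}+\bar{\bm B}^\top\bm\Delta_{\bm B}$ for the balancing term.

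The problem is $p^{-1}\|\mathcal{P}_\Omega(\bm X\bm\Delta_{\bm A}\bm\Delta_{\bm B}^\top\bm Y^\top)\|_{\mathrm F}^2$. With $\bm J$ the all-ones matrix, its fluctuation is
\[
\frac1p\bigl\|\mathcal{P}_{\Omega}(\bm{X}\bm{\Delta}_{\bm{A}}\bm{\Delta}_{\bm{B}}^{\top}\bm{Y}^{\top})\bigr\|_{\mathrm{F}}^{2}-\|\bm{\Delta}_{\bm{A}}\bm{\Delta}_{\bm{B}}^{\top}\|_{\mathrm{F}}^{2}=\sum_{k,l}\bm{u}_{kl}^{\top}\Bigl(\frac1p\mathcal{P}_{\Omega}(\bm{J})-\bm{J}\Bigr)\bm{v}_{kl},
\]
where $\bm u_{kl}=(\bm X\bm\Delta_{\bm A})_{\cdot k}\circ(\bm X\bm\Delta_{\bm A})_{\cdot l}$ and $\bm v_{kl}=(\bm Y\bm\Delta_{\bm B})_{\cdot k}\circ(\bm Y\bm\Delta_{\bm B})_{\cdot l}$.

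\begin{itemize}
\item These Hadamard products do not lie in $\mathrm{col}(\bm X)$ or $\mathrm{col}(\bm Y)$. So $\|p^{-1}\bm X^\top(\mathcal{P}_\Omega(\bm J)-p\bm J)\bm Y\|$, which drives both your ``first attempt'' and your ``fallback'', says nothing about this sum. Passing to the span of such products brings in a space of dimension up to $a_1(a_1+1)/2$.
\item Row-wise or random-graph bounds in the style of Keshavan--Montanari or Sun--Luo are ambient and reintroduce the $\sqrt{n/p}$ loss. Row-wise Chernoff even needs $n_2p\gtrsim\log n$, whereas at $n_1n_2p\asymp\mu_1a_1\log n$ most rows carry no observation.
\item The valid bound pulls $\|\bm Y\bm\Delta_{\bm B}\|_{2,\infty}^2$ out against the nonnegative weights $\delta_{ij}$ and then concentrates $\sum_{i,j}\delta_{ij}\bm X_{i\cdot}^\top\bm X_{i\cdot}$. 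It gives only $2n_2\|\bm\Delta_{\bm A}\|_{\mathrm F}^2\|\bm Y\bm\Delta_{\bm B}\|_{2,\infty}^2\lesssim (n/n_{\min})\mu_0 r\sigma_{\max}\|\bm\Delta\|_{\mathrm F}^2$. That is a factor of order $\mu_0r\kappa$ too large to be absorbed by the curvature $\tfrac25\sigma_{\min}\|\bm\Delta\|_{\mathrm F}^2$.
\end{itemize}
So the closing sentence of your curvature step, that these errors are a small fraction of $\sigma_{\min}\|\bm\Delta\|_{\mathrm F}^2$, is unsupported.

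\textbf{What the paper does, and why you should not just copy it.} The paper handles this term with Lemma~\ref{lem:P_Omega_norm}. It writes $\delta_{ij}=(\delta_{ij}-p)+p$ and keeps $\|\bm\Delta_{\bm A}\|_{\mathrm F}^2\|\bm\Delta_{\bm B}\|_{\mathrm F}^2\le\|\bm\Delta\|_{\mathrm F}^4$ from the mean. It bounds the centered part by $\|\bm Y\bm\Delta_{\bm B}\|_{2,\infty}^2\,p^{-1}\|\bm\Delta_{\bm A}\|_{\mathrm F}^2\,\|\sum_{i,j}(\delta_{ij}-p)\bm X_{i\cdot}^\top\bm X_{i\cdot}\|$, an $a_1\times a_1$ deviation that matrix Bernstein controls with $\mu_1a_1\log n$ samples.

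However, the pull-out in \eqref{eq:delta-p_YB} is applied to the signed weights $\delta_{ij}-p$, where it is not valid, and the resulting inequality fails in general. Take block-indicator $\bm X,\bm Y$: $a$ blocks of size $n/a$, so $\mu_1=\mu_2=1$, with $n^2p=Ca\log n$. Set $\bm A=\bm e_k$ and spread $\bm B$ evenly over the column blocks observed in row block $k$. The left side is at least $a/(C\log n)$. The right side, with the $\sqrt{\mu_1a_1n_2\log n/(n_1p)}$ factor used when the lemma is applied, is of order $a/(C^{3/2}\log n)+1$. So the inequality fails for large $C$ and $a\gg C\log n$.

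This does not refute Lemma~\ref{lem:rc} itself, because the iterates also satisfy the $2,\infty$ and distance constraints. But it means a complete proof of the quadratic term must use the subspace structure and those constraints jointly, beyond any termwise bound.

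\textbf{Smoothness (minor).} Your sketch via $\|p^{-1}\mathcal{P}_\Omega(\bm W)\|$ and entrywise bounds does not scale correctly at this sampling rate. The paper's route is the right one: Cauchy--Schwarz against $p^{-1}\|\mathcal{P}_\Omega(\bm X\bm V\bm B^\top\bm Y^\top)\|_{\mathrm F}^2\le 2n_2\|\bm V\|_{\mathrm F}^2\|\bm Y\bm B\|_{2,\infty}^2$ (Lemma~\ref{lem:P_Omega_IP}, valid because $\delta_{ij}\ge 0$). This is what produces the $(n/n_{\min})^2\mu_0^2r^2$ factor in $\beta$.
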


The full proof of Theorem~\ref{thm:main} is deferred to Appendix~\ref{subsec:proof-of-main-theorem}. We present a sketch here.

\paragraph{Proof sketch of Theorem~\ref{thm:main}.}

The proof follows from Lemmas~\ref{lem:spec_init} and~\ref{lem:rc}
through a standard induction argument. Lemma~\ref{lem:spec_init} shows
that the projected spectral initializer $\bm{F}^{1}$ satisfies the incoherence conditions required by Lemma~\ref{lem:rc}.

Suppose inductively that the same conditions hold for $\bm{F}^{k}$.
Since the aligned ground truth
$\bar{\bm{F}}^{k}=\bm{F}^{\star}(\bm{H}^{k})^{\top}$ belongs to the
projection set $\mathcal{C}$, the projection step cannot increase the
distance to $\bar{\bm{F}}^{k}$. Hence
\begin{align*}
    \|\bm{\Delta}^{k+1}\|_{\mathrm{F}}^{2}
    &\le
    \|\bm{F}^k-\eta\nabla f(\bm{F}^k)-\bar{\bm{F}}^{k}\|_{\mathrm{F}}^{2} \\
    &= \|\bm{\Delta}^k - \eta \nabla f(\bm{F}^k)\|_{\mathrm{F}}^{2} \\
    &=
    \|\bm{\Delta}^{k}\|_{\mathrm{F}}^{2}
    +\eta^{2}\|\nabla f(\bm{F}^{k})\|_{\mathrm{F}}^{2}
    -2\eta
    \left\langle
        \nabla f(\bm{F}^{k}),\bm{\Delta}^{k}
    \right\rangle .
\end{align*}
Let the regularity condition in Lemma~\ref{lem:rc} be $\mathrm{RC}(\alpha,\beta,\gamma)$.
Applying the regularity condition and using the
stepsize condition $\eta\le 2/\beta$ gives the recursive relation
\[
    \|\bm{\Delta}^{k+1}\|_{\mathrm{F}}^{2}
    \le
    \left(1-\frac{2\eta}{\alpha}\right)
    \|\bm{\Delta}^{k}\|_{\mathrm{F}}^{2}
    +
    2\eta\gamma.
\]
One can then check that this, together with the projection step, preserves the conditions in Lemma~\ref{lem:rc}. 

Applying the recursion repeatedly gives
\[
    \|\bm{\Delta}^{t}\|_{\mathrm{F}}^{2}
    \le
    \left(1-\frac{2\eta}{\alpha}\right)^{t-1}
    \|\bm{\Delta}^{1}\|_{\mathrm{F}}^{2}
    +
    \alpha \gamma .
\]
Finally we make the connection that 
\[
    \|\bm{Z}^{t}-\bm{Z}^{\star}\|_{\mathrm{F}}^{2}
    \le
    6\sigma_{\max}\|\bm{\Delta}^{t}\|_{\mathrm{F}}^{2}.
\]
Combining this with the bound for spectral initialization and the fact that
$\|\bm{X}\bm{Z}\bm{Y}^{\top}\|_{\mathrm{F}}=\|\bm{Z}\|_{\mathrm{F}}$
gives the bound for
$\|\bm{L}^{t}-\bm{L}^{\star}\|_{\mathrm{F}}$. 

%% file: sections/05-inexact-side-information.tex
\section{Inexact Side-Information \label{sec:Inexact-SI}}

The exact side-information theory in Theorem~\ref{thm:main} assumes that
\[
    \operatorname{col}(\bm L^\star)\subseteq \operatorname{col}(\bm X),
    \qquad
    \operatorname{row}(\bm L^\star)\subseteq \operatorname{col}(\bm Y).
\]
In practice, the available side-information subspaces may only approximate
the true column and row spaces of $\bm L^\star$. We now extend our theory to
this inexact side-information setting.

The key observation is that even when $\bm X$ and $\bm Y$ are misspecified,
the projected matrix
\[
    \bm L_{\mathrm{proj}}^\star
    :=
    \bm X\bm X^\top \bm L^\star \bm Y\bm Y^\top
\]
is still described by the inductive model in
Section~\ref{subsec:model-formulation-exact-si}. The remaining component
\[
    \bm E_{\mathrm{mis}}
    :=
    \bm L^\star-\bm X\bm X^\top \bm L^\star \bm Y\bm Y^\top
\]
can therefore be interpreted as an additional noise term caused by
side-information misspecification.

The rest of this section formalizes the inexact side-information setting.
The main result in Theorem~\ref{thm:inexact} characterizes the
additional estimation error caused by the inexact side-information in
terms of the side-information inexactness $\delta$.

\subsection{Setup}

Let $\bm L^\star=\bm U^\star\bm\Sigma^\star\bm V^{\star\top}$ be
the rank-$r$ singular value decomposition of $\bm L^\star$. Define
the orthogonal projections onto the observed side-information subspaces by
\[
    \bm P_X\coloneqq \bm X\bm X^\top,
    \qquad
    \bm P_Y\coloneqq \bm Y\bm Y^\top.
\]
We quantify side-information inexactness through the following projection-distance
condition.

\begin{assumption}[Side-information inexactness]\label{assumption:inexact_SI}
The side-information inexactness is
\[
    \delta_X\coloneqq \|(\bm I-\bm P_X)\bm U^\star\|,
    \qquad
    \delta_Y\coloneqq \|(\bm I-\bm P_Y)\bm V^\star\|,
    \qquad
    \delta\coloneqq \delta_X\vee\delta_Y.
\]
\end{assumption}

The scalar $\delta$ measures how much of the true column and row spaces
falls outside the observed side-information subspaces. Thus $\delta=0$
recovers exact side-information.

This projection-distance definition is equivalent to a
principal-angle formulation. If $\bm\Theta_1$ and $\bm\Theta_2$ are the
principal-angle matrices between
$\operatorname{col}(\bm X),\operatorname{col}(\bm U^\star)$ and
$\operatorname{col}(\bm Y),\operatorname{col}(\bm V^\star)$, respectively,
then $\delta_X=\|\sin\bm\Theta_1\|$ and
$\delta_Y=\|\sin\bm\Theta_2\|$.

The same quantity controls the residual caused by inexact side-information:
\begin{equation}
    \frac{\delta}{\sqrt{r}\kappa}\|\bm L^\star\|_{\mathrm F}
    \le
    \|\bm E_{\mathrm{mis}}\|_{\mathrm F}
    \le \sqrt{2}\,\delta\|\bm L^\star\|_{\mathrm F}.
    \label{eq:projection-residual-bound}
\end{equation}
Hence, when $r$ and $\kappa$ are treated as constants, this residual is
of order $\delta\|\bm L^\star\|_{\mathrm F}$. The proof of
\eqref{eq:projection-residual-bound} is given at the end of
Appendix~\ref{sec:Proof-of-inexact}.

We will also use the following incoherence condition on the projected
target and the misspecification of the side-information subspaces.

\begin{assumption}\label{assumption:inexact_SI_2inf}
The projected target $\bm L_{\mathrm{proj}}^\star$ is $\mu_{0}$-incoherent. 
In addition,
there exists a constant $C_{\mathrm{mis}}>0$ such that
\begin{equation}
\|(\bm I-\bm P_X)\bm U^\star\|_{2,\infty}\le C_{\mathrm{mis}}\sqrt{\frac{\mu_{0}r}{n_{1}}}\cdot\delta
\qquad \text{and} \qquad
\|(\bm I-\bm P_Y)\bm V^\star\|_{2,\infty}\le C_{\mathrm{mis}}\sqrt{\frac{\mu_{0}r}{n_{2}}}\cdot\delta.
\label{eq:2infty-inexact-si}
\end{equation}
\end{assumption}

\subsection{Main Result}

The theorem below shows that, under the inexact-side-information
conditions in Assumptions~\ref{assumption:inexact_SI}--\ref{assumption:inexact_SI_2inf},
Algorithm~\ref{alg:main}
maintains the inductive sample complexity from Theorem~\ref{thm:main}
even when the side-information is inexact. The resulting estimation error
degrades linearly with the side-information inexactness $\delta$.
Recall the effective noise size
\[
\gamma_{E}\coloneqq\frac{1}{p}\left\Vert \bm{X}^{\top}\mathcal{P}_{\Omega}(\bm{E})\bm{Y}\right\Vert .
\]

\begin{theorem}\label{thm:inexact} Suppose that Assumptions~\ref{assumption:main}, \ref{assumption:inexact_SI},
and~\ref{assumption:inexact_SI_2inf} hold. Assume that the sample size
and the effective noise level satisfy
\begin{equation}
n_{1}n_{2}p\ge C_{\mathrm{sample}}\kappa^{2}\left(\mu_{1}a_{1}\vee\mu_{2}a_{2}\right)\mu_{0}^{2}r^{2}\log n,\qquad\text{and}\qquad
\kappa\sqrt{\frac{\mu_{0}r\left(\mu_{1}a_{1}\vee\mu_{2}a_{2}\right)}{n_{1}n_{2}p}\log n}\cdot\delta+\frac{\gamma_{E}}{\sigma_{\min}}\le C_{\mathrm{noise}}\frac{1}{\sqrt{\kappa r}}\label{eq:sample-noise-cond-inexact}
\end{equation}
for some sufficiently large (resp.~small) constant $C_{\mathrm{sample}}>0$
(resp.~$C_{\mathrm{noise}}>0$). With probability at least $1-O(n^{-10})$,
if $\eta\le (480000\left(n/n_{\min}\right)^{2}\mu_{0}^{2}r^{2}\kappa\sigma_{\max})^{-1}$,
the iterates $\bm{L}^{t}$ of Algorithm~\ref{alg:main} satisfy
\begin{align}
\|\bm{L}^{t}-\bm{L}^{\star}\|_{\mathrm{F}} & \le C_{\mathrm{F}}\left[\rho^{t}+\sqrt{\kappa}\left(\delta+\frac{\gamma_{E}}{\sigma_{\min}}\right)\right]\|\bm{L}^{\star}\|_{\mathrm{F}}.\label{eq:L-bound-fro-extra-assumption}
\end{align}
where $\rho=(1-\eta\sigma_{\min}/20)^{1/2}$ and $C_{\mathrm{F}}>0$
is an absolute constant.
\end{theorem}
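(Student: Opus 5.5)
We reduce Theorem~\ref{thm:inexact} to Theorem~\ref{thm:main} by treating the misspecification as extra noise. Write
\[
\bm L^\star=\bm L^\star_{\mathrm{proj}}+\bm E_{\mathrm{mis}},
\qquad
\bm M=\mathcal P_\Omega\bigl(\bm X\bm Z^\star_{\mathrm{proj}}\bm Y^\top+\bm E+\bm E_{\mathrm{mis}}\bigr),
\qquad
\bm Z^\star_{\mathrm{proj}}:=\bm X^\top\bm L^\star\bm Y .
\]
The observations then follow the exact model of Section~\ref{subsec:model-formulation-exact-si}. The new ground truth is $\bm L^\star_{\mathrm{proj}}$, which satisfies Assumption~\ref{assumption:exact_SI} by construction and is $\mu_0$-incoherent by Assumption~\ref{assumption:inexact_SI_2inf}. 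The new noise is $\widetilde{\bm E}:=\bm E+\bm E_{\mathrm{mis}}$.

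The plan has three steps.

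\textbf{Step 1: spectrum of the projected target.} We compare the spectrum of $\bm Z^\star_{\mathrm{proj}}$ with that of $\bm L^\star$. Since $\bm X^\top\bm U^\star$ has singular values in $[\sqrt{1-\delta^2},1]$, and similarly for $\bm Y$, the rank-$r$ matrix $\bm Z^\star_{\mathrm{proj}}$ has $\sigma_{\max}$, $\sigma_{\min}$ and $\kappa$ within constant factors of those of $\bm L^\star$. This needs $\delta$ below a small constant. If $\delta$ is large, the claimed bound is trivial once we note $\|\bm L^t\|_{\mathrm F}\lesssim\|\bm L^\star\|_{\mathrm F}$ inside the basin. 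Consequently Lemmas~\ref{lem:spec_init} and~\ref{lem:rc} apply with only constant changes. Their proofs are stated with $\bm E$ generic, and the noise enters only through $\gamma_E$.

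\textbf{Step 2: the effective noise size, which is the main obstacle.} We must bound
\[
\gamma_{\widetilde E}\le\gamma_E+\frac1p\bigl\|\bm X^\top\mathcal P_\Omega(\bm E_{\mathrm{mis}})\bm Y\bigr\| .
\]
The key observation is that $\bm X^\top\bm E_{\mathrm{mis}}\bm Y=\bm X^\top\bm L^\star\bm Y-\bm X^\top\bm L^\star\bm Y=\bm 0$. Hence the second term equals
\[
\frac1p\bigl\|\bm X^\top(\mathcal P_\Omega-p\,\mathcal I)(\bm E_{\mathrm{mis}})\bm Y\bigr\|,
\]
a centered sum of independent terms $(\delta_{ij}-p)\,[\bm E_{\mathrm{mis}}]_{ij}\,\bm X_{i\cdot}^\top\bm Y_{j\cdot}$. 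We plan to apply the matrix Bernstein inequality, exactly as in the concentration step behind Lemma~\ref{lem:spec_init}. This requires an entrywise bound on $\bm E_{\mathrm{mis}}$. Decompose
\[
\bm E_{\mathrm{mis}}=(\bm I-\bm P_X)\bm L^\star+\bm P_X\bm L^\star(\bm I-\bm P_Y),
\]
and use Assumption~\ref{assumption:inexact_SI_2inf} together with the incoherence of $\bm L^\star$. This gives
\[
\|\bm E_{\mathrm{mis}}\|_\infty\lesssim C_{\mathrm{mis}}\frac{\mu_0 r}{\sqrt{n_1n_2}}\,\sigma_{\max}\,\delta,
\]
and similarly bounds the row and column $\ell_2$ norms by a factor $\delta$ times the corresponding incoherence bounds for $\bm L^\star$. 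Matrix Bernstein, with variance controlled through $\mu_1a_1$ and $\mu_2a_2$, should yield
\[
\frac1p\bigl\|\bm X^\top\mathcal P_\Omega(\bm E_{\mathrm{mis}})\bm Y\bigr\|
\lesssim
\sigma_{\max}\,\delta\sqrt{\frac{\mu_0 r(\mu_1a_1\vee\mu_2a_2)\log n}{n_1n_2p}},
\]
up to lower-order terms absorbed by the sample condition. This is precisely the $\delta$-term in \eqref{eq:sample-noise-cond-inexact}. That condition therefore ensures $\gamma_{\widetilde E}/\sigma_{\min}\le C_{\mathrm{noise}}/\sqrt{\kappa r}$. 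The delicate point is getting the variance proxy to scale with $\mu_1a_1$ rather than $n$, which requires using $\|\bm X\|_{2,\infty}$ and $\|\bm Y\|_{2,\infty}$ in the variance computation. Under the sample condition, this noise term is at most of order $\delta\sigma_{\min}$.

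\textbf{Step 3: conclude.} Theorem~\ref{thm:main} gives
\[
\|\bm L^t-\bm L^\star_{\mathrm{proj}}\|_{\mathrm F}
\le C\Bigl[\rho^{t-1}+\sqrt\kappa\,\frac{\gamma_{\widetilde E}}{\sigma_{\min}}\Bigr]\|\bm L^\star\|_{\mathrm F}.
\]
By Step~2 and the sample condition, $\gamma_{\widetilde E}/\sigma_{\min}\lesssim\gamma_E/\sigma_{\min}+\delta$. The triangle inequality with \eqref{eq:projection-residual-bound}, namely $\|\bm E_{\mathrm{mis}}\|_{\mathrm F}\le\sqrt2\,\delta\|\bm L^\star\|_{\mathrm F}$, then yields \eqref{eq:L-bound-fro-extra-assumption}. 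The change from $\rho^{t-1}$ to $\rho^{t}$ is absorbed into $C_{\mathrm F}$, since $\rho\ge 1/2$.
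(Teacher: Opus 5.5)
Your proposal is correct and follows the paper's proof essentially step for step. The paper does the same four things: it writes $\bm M=\mathcal P_\Omega(\bm L_{\mathrm{proj}}^\star+\bm E_{\mathrm{mis}}+\bm E)$; it bounds $\gamma_{\mathrm{mis}}$ by matrix Bernstein, using $\bm X^\top\bm E_{\mathrm{mis}}\bm Y=\bm 0$ and the row-wise and entrywise bounds on $\bm E_{\mathrm{mis}}$ from Assumption~\ref{assumption:inexact_SI_2inf} (its Lemma~\ref{lem:gamma-mis}); it applies Theorem~\ref{thm:main} to the projected target; and it finishes with the triangle inequality and \eqref{eq:projection-residual-bound}. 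Your Step~1 comparison $\sigma_{\min}(\bm X^\top\bm L^\star\bm Y)\ge(1-\delta^2)\sigma_{\min}$ and the remark $\rho^{t-1}\le 2\rho^{t}$ supply details the paper skips, and since that comparison already holds for every $\delta\le 1-c$, your ``large $\delta$'' dismissal is needed only for $\delta$ near $1$ and is not actually trivial there (without the basin argument nothing ties $\|\bm L^t\|_{\mathrm F}$ to $\|\bm L^\star\|_{\mathrm F}$), but the paper's proof does not cover that regime either.
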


The proof of Theorem~\ref{thm:inexact} is deferred to
Appendix~\ref{sec:Proof-of-inexact}.
We make several remarks using the simplified square $\bm{L}^\star$ setup~\eqref{eq:simplified_case}
and treating $\kappa,\mu_{0},r$ as constants.

\paragraph{Tradeoff comparing to ordinary MC.}
Theorem~\ref{thm:inexact} shows the tradeoff of IMC in the inexact
side-information. IMC can succeed at the reduced sample
complexity where ordinary MC fails. However, the bias induced by the inexact side-information inflates the estimation error by a $\Omega(\delta)$ term. 
Consequently, contrary to the exact side-information setting where IMC always outperforms ordinary MC, ordinary MC can perform better in the inexact setting once enough samples are available.

\paragraph{Admissible side-information inexactness.} Under
Assumption~\ref{assumption:inexact_SI_2inf} and the minimal
sample complexity $n^{2}p\gtrsim\mu_{0}^{2}r^{2}\mu_{1}a\log n$, the
condition in Theorem~\ref{thm:inexact} permits $\delta\lesssim1$.
While bound~\eqref{eq:L-bound-fro-extra-assumption} is most informative
when $\delta=o(1)$, the analysis permits
side-information inexactness up to constant order.

\paragraph{Sharpness of the dependence on side-information inexactness.}
When $\bm{E}=\bm{0}$ and $\|\bm{L}^{\star}\|_{\mathrm{F}}=\Theta(1)$,
Theorem~\ref{thm:inexact} achieves $O(\delta)$ error with
$\Theta(\mu_{0}^{2}r^{2}\mu_{1}a\log n)$ samples. 
We make a heuristic argument that this is the sharp dependence at the reduced sample complexity. 

Recall that $\bm E_{\mathrm{mis}}$ is the error induced by inexact side-information and it has a size of $\Theta(\delta)$. By the contraction property of projection, any estimator that has $o(\delta)$ estimation error of $\bm L^\star$ must have $o(\delta)$ estimation error of $\bm E_{\mathrm{mis}}$ as well. However, as $\bm E_{\mathrm{mis}}$ lives in the ambient space, one cannot hope to do so with the reduced sample complexity. 

\subsection{Interpolation Scheme}

The preceding results suggest a bias--variance tradeoff under inexact
side-information. IMC improves sample
efficiency but introduces bias when those subspaces are inexact.
Ordinary MC has no such side-information bias, but it estimates the
matrix in the ambient space and therefore requires more observations.

We use a penalized formulation to interpolate between these two
estimators. Rather than enforcing exact alignment with
$\operatorname{col}(\bm X)$ and $\operatorname{col}(\bm Y)$, we penalize
the component of the estimate outside these subspaces. Specifically, for
$\bm A\in\mathbb{R}^{n_{1}\times r}$ and
$\bm B\in\mathbb{R}^{n_{2}\times r}$, consider
\begin{equation}
f_{\lambda}(\bm A,\bm B)=\frac{1}{2p}\left\Vert \mathcal{P}_{\Omega}\left(\bm A\bm B^{\top}-\bm{M}\right)\right\Vert _{\mathrm{F}}^{2}+\frac{1}{16}\|\bm A^{\top}\bm A-\bm B^{\top}\bm B\|_{\mathrm{F}}^{2}+\lambda\left\Vert \bm A\bm B^{\top}-\bm P_X\bm A\bm B^{\top}\bm P_Y\right\Vert _{\mathrm{F}}^{2}.\label{eq:loss_interpolation}
\end{equation}
The tuning parameter $\lambda\ge0$ controls the weight of the
side-information penalty. When $\lambda=0$, \eqref{eq:loss_interpolation}
is the usual non-inductive MC objective. In the limit $\lambda\to\infty$,
the penalty enforces
$\bm A\bm B^\top=\bm P_X\bm A\bm B^\top\bm P_Y$,
so the limiting
formulation recovers the IMC model.

In Section~\ref{sec:Experiments}, we evaluate this
interpolation scheme and select $\lambda$ by cross-validation.
Empirically, intermediate values of $\lambda$ can improve over both
pure IMC and ordinary MC in the inexact side-information regime.

%% file: sections/06-experiments.tex
\section{Experiments \label{sec:Experiments}}

We validate our theoretical findings through two sets of experiments using the empirical implementation described after Algorithm~\ref{alg:main}, namely gradient descent with spectral initialization and without the projection step.
First, we run synthetic simulations to compare
IMC with standard MC across a range of sampling rates in both the exact and inexact side-information settings.
We demonstrate the strength of IMC at small sample sizes and its graceful degradation under inexact side-information.
Second, we evaluate our method on the MovieLens 100K dataset to further illustrate the practical benefits of IMC.

\subsection{Simulations \label{subsec:Simulations}}

\subsubsection{Simulation setup}
In all synthetic experiments, the ground truth matrix $\bm{L}^{\star}$
has dimensions $n_{1}=n_{2}=1000$ and rank $r=10$, and the side-information
dimension is $a_{1}=a_{2}=50$. We generate orthonormal side-information matrices
$\bm{X}^{\star},\bm{Y}^{\star}\in\mathbb{R}^{n\times a}$ and a rank-$r$ matrix $\bm{Z}\in\mathbb{R}^{a\times a}$ with spectral norm 1,
and define
\[
\bm{L}^{\star}=\bm{X}^{\star}\bm{Z}{\bm{Y}^{\star}}^{\top}.
\]
We then observe
\[
\bm{M}=\mathcal{P}_{\Omega}(\bm{L}^{\star}+\bm{E}),
\]
where each entry is observed independently with probability $p$, and the entries of $\bm{E}$ are i.i.d.\ Gaussian with mean $0$
and standard deviation $0$ in the noiseless case and $\sigma=0.001$ in the noisy case. For inexact side-information, we replace
$\bm{X}^{\star},\bm{Y}^{\star}$ by matrices $\bm{X},\bm{Y}$ with prescribed
side-information inexactness $\delta$. Due to space constraints, we defer the detailed construction to Appendix~\ref{subsec:detailed-simulation-setup}.

\subsubsection{Exact side-information}

First, we perform simulations under exact side-information. We first consider noiseless matrix observations. We compare the IMC implementation described above with the standard MC method, which uses gradient descent with spectral initialization to optimize the MC objective:
\begin{equation}
f_{\mathrm{MC}}(\bm{A},\bm{B})=\frac{1}{2p}\|\mathcal{P}_{\Omega}(\bm{A}\bm{B}^{\top}-\bm{M})\|_{\mathrm{F}}^{2}+\frac{1}{16}\|\bm{A}^{\top}\bm{A}-\bm{B}^{\top}\bm{B}\|_{\mathrm{F}}^{2}.\label{eq:loss_mc}
\end{equation}

For the noiseless setting, Figure~\ref{fig:phase_plot} shows the success
rates of exact recovery\footnote{up to a relative error of $10^{-3}$} for MC and IMC for various values of $p$. We see that IMC always
achieves exact recovery for all values of $p$ above 0.01, while MC
has zero success at low values of $p$. Figure~\ref{fig:phase_plot_imc}
further shows that IMC achieves a perfect success rate for $p$ above
0.0014, which is much lower than the cutoff point for MC. This agrees with our theory that IMC, by incorporating side-information, allows a significantly smaller sample complexity.

\begin{figure}[p]
\centering
\captionsetup{font=footnotesize}
\setlength{\abovecaptionskip}{2pt}
\setlength{\belowcaptionskip}{2pt}
\begin{minipage}[t]{0.48\textwidth}
\centering
\includegraphics[width=0.96\textwidth]{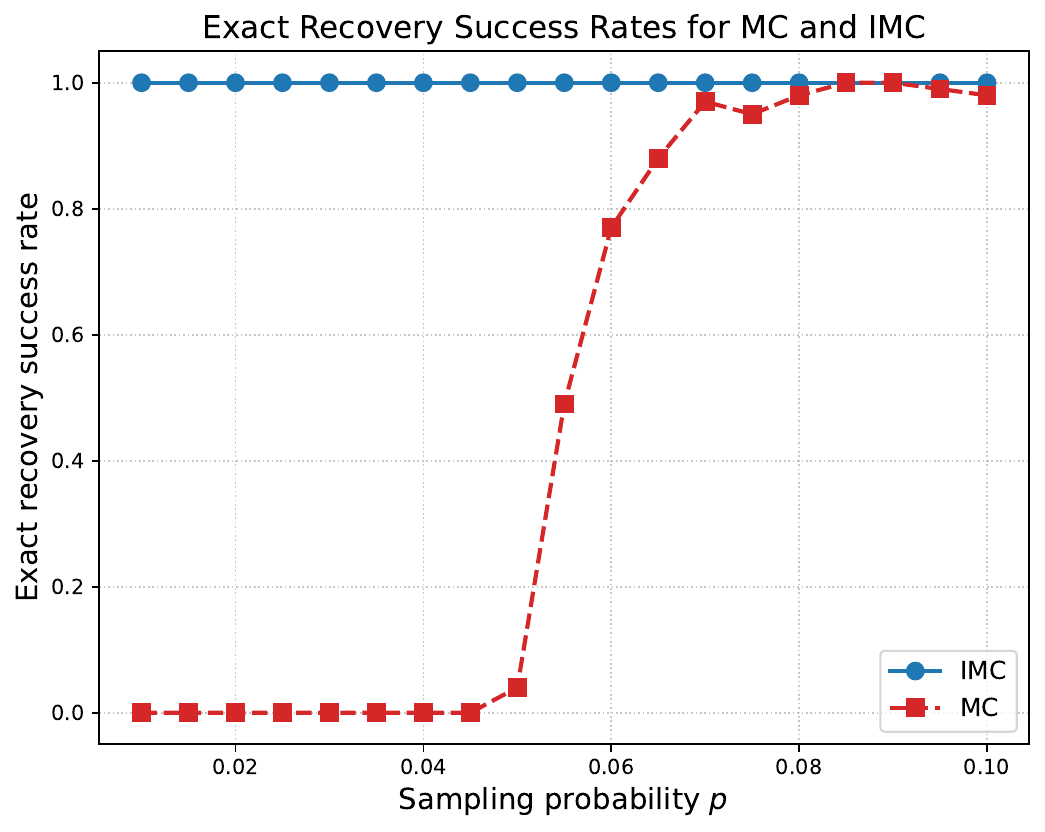}
\captionof{figure}{\label{fig:phase_plot}Exact recovery success rates for MC and IMC
in the noiseless exact side-information setting, averaged over 100
trials.}
\end{minipage}\hfill
\begin{minipage}[t]{0.48\textwidth}
\centering
\includegraphics[width=0.96\textwidth]{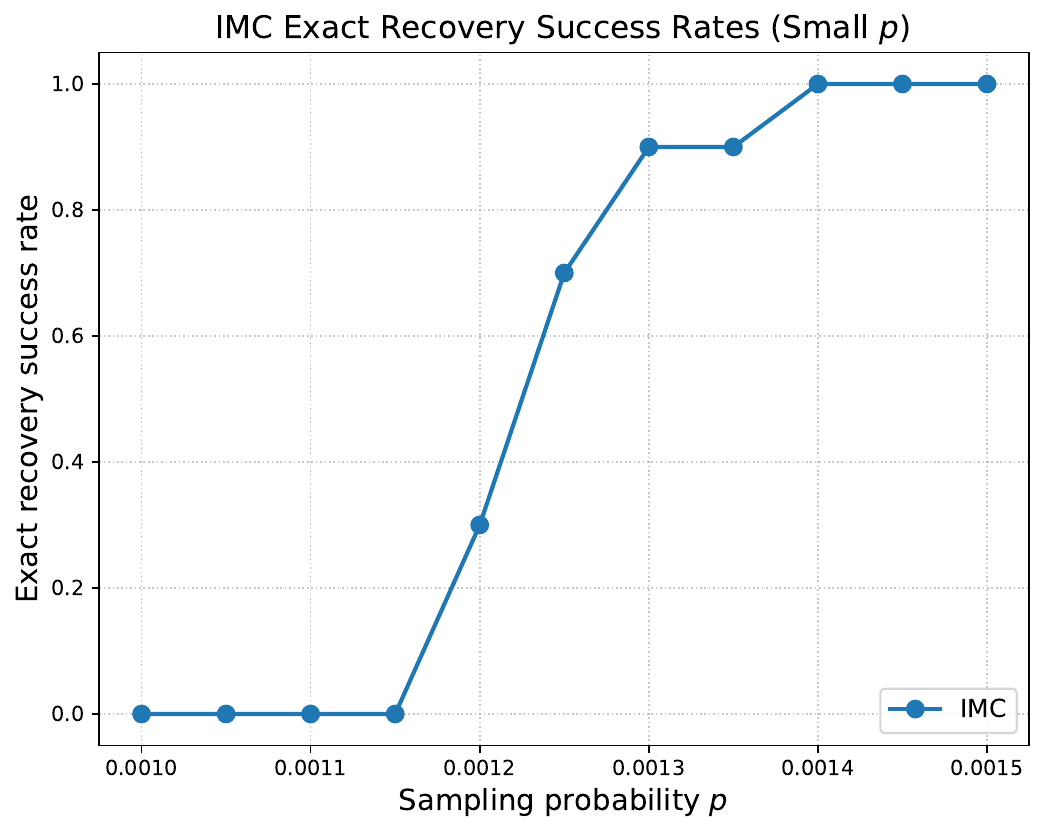}
\captionof{figure}{\label{fig:phase_plot_imc}IMC exact recovery success rates for small
sampling probabilities $p$ in the noiseless exact side-information
setting, averaged over 100 trials.}
\end{minipage}

\vspace{0.5em}

\begin{minipage}[t]{0.48\textwidth}
\centering
\includegraphics[width=0.96\textwidth]{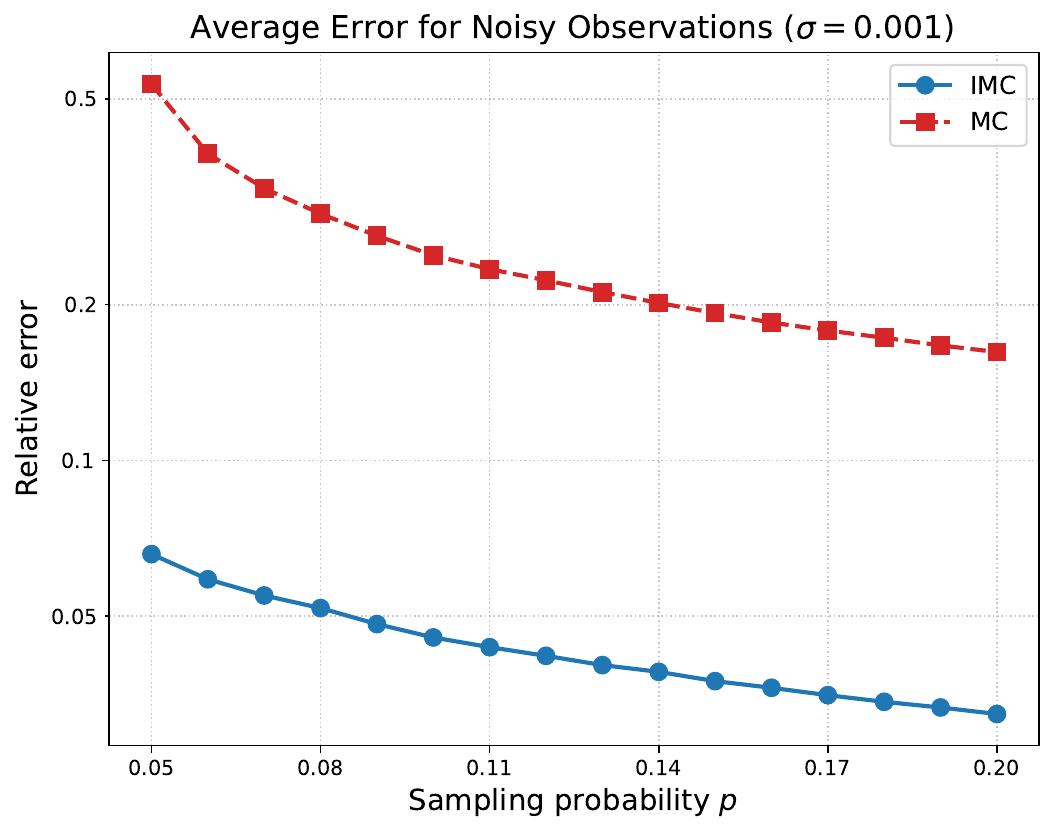}
\captionof{figure}{\label{fig:error_noisyobs}Relative error $\|\widehat{\bm{L}}-\bm{L}^{\star}\|_{\mathrm{F}}/\|\bm{L}^{\star}\|_{\mathrm{F}}$ in log scale for MC
and IMC with noisy observations and exact side-information, with
$\sigma=0.001$. Results are averaged over 100 trials for $p\in[0.05,0.20]$.}
\end{minipage}\hfill
\begin{minipage}[t]{0.48\textwidth}
\centering
\includegraphics[width=0.96\textwidth]{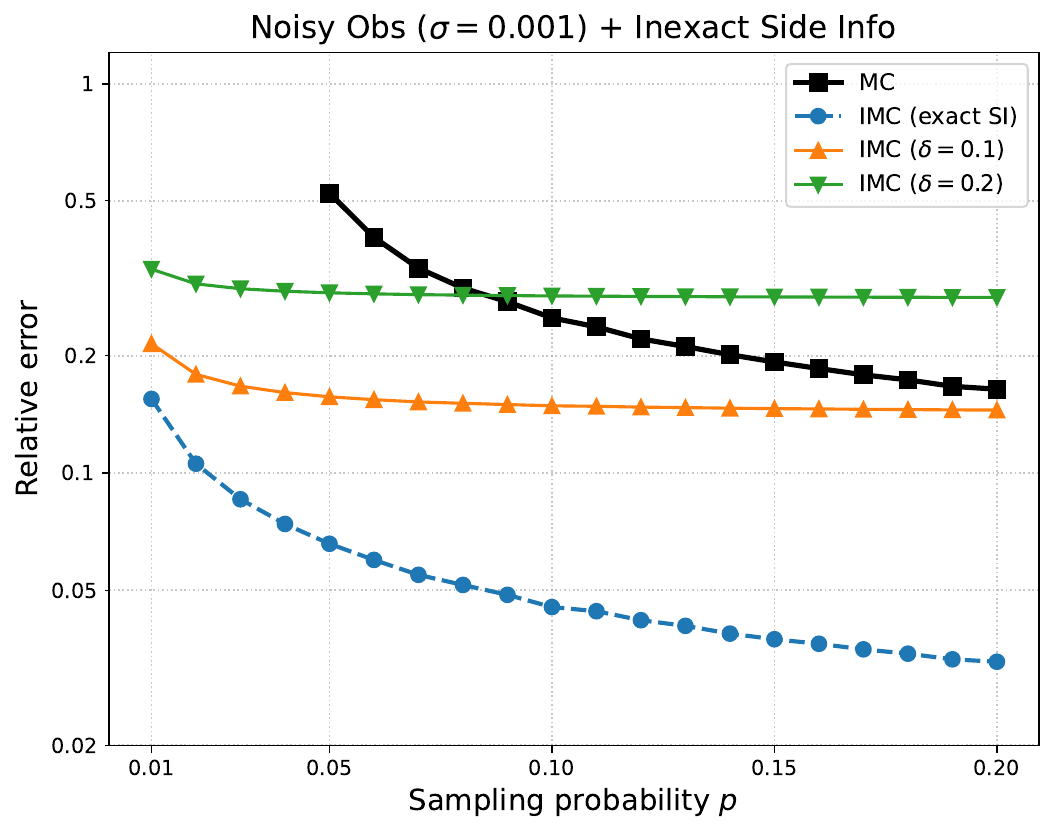}
\raggedright{}\captionof{figure}{\label{fig:errors_noisyobs_noisysi}Relative
error $\|\widehat{\bm{L}}-\bm{L}^{\star}\|_{\mathrm{F}}/\|\bm{L}^{\star}\|_{\mathrm{F}}$ for MC and IMC with noisy observations ($\sigma=0.001$) and
inexact side-information, over 100 trials and $p\in[0.01,0.20]$.}
\end{minipage}

\vspace{0.5em}

\begin{minipage}[t]{0.48\textwidth}
\centering
\includegraphics[width=0.96\textwidth]{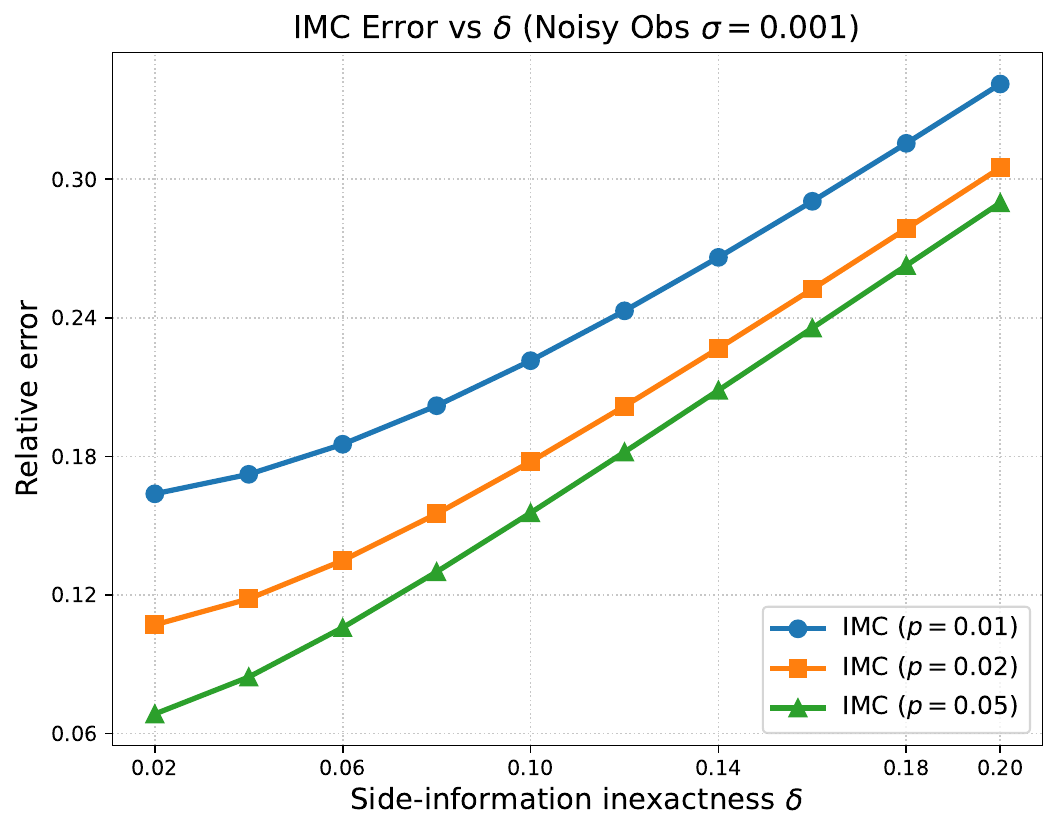}
\captionof{figure}{\label{fig:errors_delta}Relative error $\|\widehat{\bm{L}}-\bm{L}^{\star}\|_{\mathrm{F}}/\|\bm{L}^{\star}\|_{\mathrm{F}}$ for IMC with noisy
observations ($\sigma=0.001$) and inexact side-information as $\delta$
varies from 0.02 to 0.20. Results are averaged over 100 trials.}
\end{minipage}\hfill
\begin{minipage}[t]{0.48\textwidth}
\centering
\includegraphics[width=0.96\textwidth]{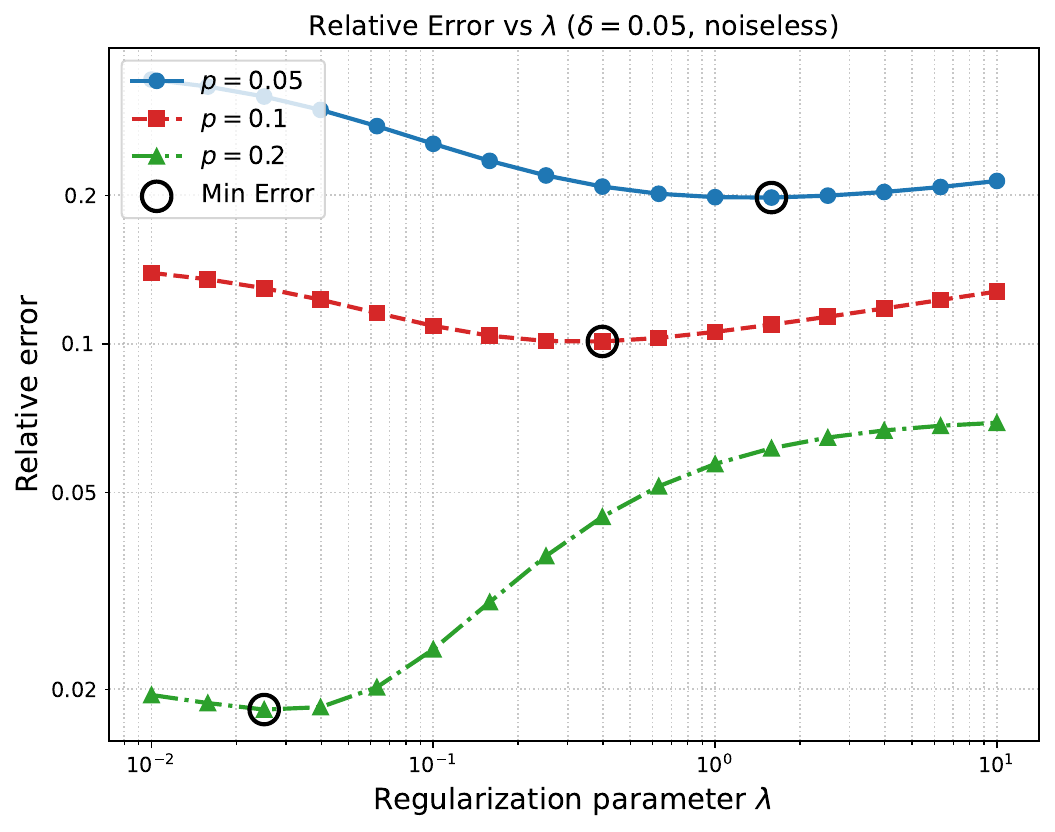}
\captionof{figure}{\label{fig:interpolation}Relative error $\|\widehat{\bm{L}}-\bm{L}^{\star}\|_{\mathrm{F}}/\|\bm{L}^{\star}\|_{\mathrm{F}}$ for the interpolation
method in the noiseless setting as $\lambda$ varies, for $p=0.05$,
$0.1$, and $0.2$. Side-information is inexact with $\delta=0.05$, and results
are averaged over 20 trials.}
\end{minipage}
\end{figure}

For the noisy setting, Figure~\ref{fig:error_noisyobs} compares the average relative error for MC and IMC. For all values of $p$, IMC achieves significantly lower relative error than MC, consistent with our theoretical predictions.

\subsubsection{Inexact side-information}

In the setting with inexact side-information, instead of knowing
$\bm{X}^{\star},\bm{Y}^{\star}$ exactly, we generate inexact side-information
matrices $\bm{X},\bm{Y}$ with side-information inexactness $\delta$
as defined in Assumption~\ref{assumption:inexact_SI}. 

Figure~\ref{fig:errors_noisyobs_noisysi} shows the relative error for IMC at side-information inexactness levels $\delta=0,0.1,0.2$, compared with MC across different sampling rates. 
When $\delta = 0.2$, at small sample sizes, IMC performs significantly better than MC. However, as the sample size increases, the error of MC decreases significantly while the error of IMC plateaus. This matches the trade-off described in Theorem~\ref{thm:inexact} between the low sample complexity requirement and the induced side-information bias of IMC.

Figure~\ref{fig:errors_delta} isolates this bias by varying the side-information
inexactness $\delta$ for $p=0.01,0.02,0.05$. Across
these sampling probabilities, the IMC error increases approximately
linearly with $\delta$, which agrees with Theorem~\ref{thm:inexact}.

\subsubsection{Interpolation Between MC and IMC}

In the inexact side-information setting, we have seen a trade-off between IMC's low sample requirement and MC's better estimation error at larger sample sizes.
The interpolation objective~\eqref{eq:loss_interpolation}
introduced in Section~\ref{sec:Inexact-SI} offers a solution that tries to capture the best of both worlds.

We test this interpolation objective in the noiseless setting with
$\delta=0.05$. For various values of $p$ and $\lambda$, we run
gradient descent with spectral initialization. Figure~\ref{fig:interpolation}
shows that the interpolation can improve
substantially over both MC and IMC for an intermediate choice of
$\lambda$. Such a choice of $\lambda$ is larger when the sample size is small, meaning that the interpolation is more aligned with IMC. This is consistent with our theory that IMC performs better when the sample size is too small for MC even when the side-information is inexact.

In practice, the parameter $\lambda$ can be selected by cross-validation.
We implement a 5-fold cross-validation procedure to select $\lambda$. Table~\ref{tab:comparison}
compares the resulting reconstruction error against MC and IMC for this representative run. The cross-validation procedure is able to take advantage of both MC and IMC and achieves better performance in both low- and high-sample-size regimes in this experiment.

\subsection{Application on the MovieLens Dataset}

We perform experiments on the MovieLens 100K \citep{harper2015movielens}
dataset and compare IMC against non-inductive matrix completion. The MovieLens
dataset consists of 100,000 ratings by 943 users on 1682 movies.
The ratings take integer values from 1 to 5. The dataset also includes
user demographic data and movie genre data as side-information
features that can be used in an inductive model. We also augment the side-information matrix with singular vectors computed from the training matrix in each split. More specifically,
we have 29 features for users (1 for intercept, 1 for gender,
7 for age groups, and 20 for occupations) and 30 features for movies
(1 for intercept, 19 for genres, and 10 for top 10 right singular
vectors of the training matrix). 

We consider the
performance of IMC using our proposed method and non-inductive MC using vanilla gradient descent at different sampling rates.
At each trial, we randomly select $m$ of the 100,000 samples as the
training set and the rest as the testing set. 
Figure~\ref{fig:IMC_MC} shows the RMSE of inductive and non-inductive
matrix completion on MovieLens 100K. The results show that when
sample size $m$ is small, IMC performs significantly better than
non-inductive MC, and when $m$ increases, non-inductive MC starts to
gain an advantage. This behavior is qualitatively consistent with the trade-off highlighted by
Theorem~\ref{thm:inexact}: IMC can perform well when the sample
size is too small for non-inductive MC, while imperfect or incomplete side-information
can limit its large-sample accuracy. For the MovieLens 100K dataset,
$m=5000$ can be viewed as the aforementioned low-sample-complexity setting.
In fact, over the 20 trials used for Figure~\ref{fig:IMC_MC}, an average of 92.6 users and 618.25
movies have no observations at all, which means non-inductive MC cannot provide any nontrivial estimation for these users and movies. This is the scenario
where IMC is powerful in utilizing side-information to provide good estimates even for the rows and columns that are completely unobserved.

\begin{table}[!tph]
\centering
\centering
\small
\setlength{\tabcolsep}{8pt}
\renewcommand{\arraystretch}{1.1}
\begin{tabular}{@{}lcc@{}}
\toprule
 & $p=0.02$ & $p=0.05$ \\
\midrule
Interpolation error & 0.0644 & 0.0340 \\
IMC error & 0.0723 & 0.0712 \\
MC error & 1.1883 &  0.0669 \\
\bottomrule
\end{tabular}

\caption{\label{tab:comparison}Relative errors of cross-validated interpolation,
IMC, and MC at $p=0.02$ and $0.05$ in one representative run. Inexact
side-information with $\delta=0.05$.}
\end{table}

\begin{figure}[!tph]
\centering
\includegraphics[width=0.45\textwidth]{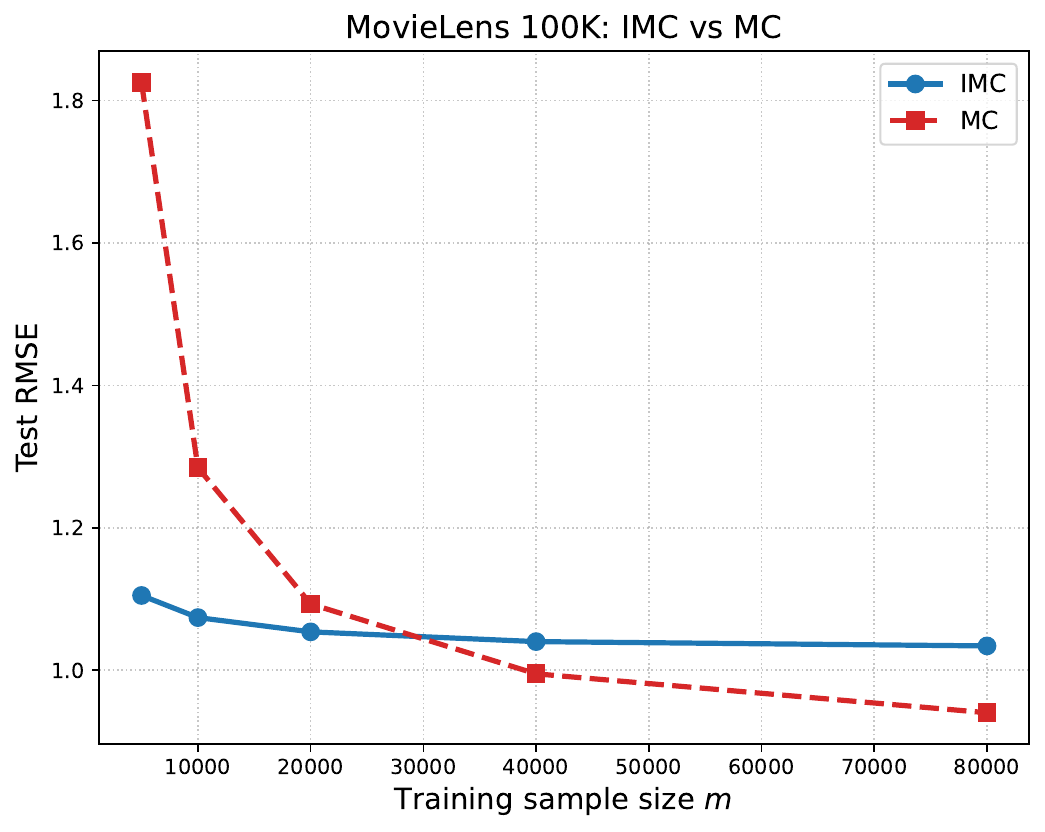}
\caption{\label{fig:IMC_MC}Test RMSE of IMC and non-inductive MC on MovieLens
100K as the training sample size $m$ varies. Results use rank $r=5$
and are averaged over 20 trials.}
\end{figure}

%% file: sections/07-discussion.tex
\section{Discussion \label{sec:Discussion}}

This paper studies noisy inductive matrix completion with exact and inexact
side-information. We showed that IMC has a reduced sample complexity requirement and characterized the estimation error in both exact and inexact cases. We close by highlighting two open questions for future work.
\begin{itemize}
\item \textbf{The (un)necessity of the projection step.} In our theoretical
algorithm, a projection step is used to ensure incoherence of the iterates.
Empirically, however, this projection is inactive in our simulations; see
Appendix~\ref{subsec:Redundancy-of-projection}. This raises the question
of whether comparable guarantees can be proved for Algorithm~\ref{alg:main}
without projection. A main technical difficulty is that the subspace
restriction onto $\mathrm{col}(\bm{X})$ and $\mathrm{col}(\bm{Y})$ couples
the row and column dependencies induced by the observation pattern, making
it difficult to apply the leave-one-out arguments used for non-inductive
MC \citep{ma2018implicit,Chen2020}.
\item \textbf{Partial side-information.} In our model for exact side-information,
we assume the side-information covers the entire column and row spaces
of $\bm{L}^{\star}$. It would be interesting to understand settings
where the available side-information is exact but incomplete, for example
when only a subset of informative features is observed or when only
part of the true subspace is known. Although our inexact side-information
theory can technically cover such cases, it may not provide the sharpest
or most interpretable characterization of this type of structural information.
\end{itemize}

%% file: sections/09-proof-of-subgaussian-corollary.tex
\section{Proofs of Theorem~\ref{thm:main} and Corollary~\ref{cor:main_subgaussian}}
\label{sec:Proof-of-main-and-subgaussian}

\subsection{Proof of Theorem~\ref{thm:main}}
\label{subsec:proof-of-main-theorem}

The proof combines Lemmas~\ref{lem:spec_init} and~\ref{lem:rc}
via an inductive argument. 

First we note that Lemma~\ref{lem:spec_init} shows that $\|\bm A^0\|\le 2\|\bm A^\star\|$. This implies that any $\bm{F}\in \mathcal{C}$ satisfies \eqref{eq:rsc_assumption_incoherence_A} and \eqref{eq:rsc_assumption_incoherence_B}.
Lemma~\ref{lem:spec_init} also shows that 
$\bm{F}^{1}$ satisfies \eqref{eq:rsc_assumption_distance}, which completes the base case. 

For the inductive
step, suppose \eqref{eq:rsc_assumption}
holds for $\bm{F}^{t}$. 
By the definition of $\bm{F}^{t+1}$,
\begin{equation}\label{eq:rotation_optimality}
    \|\bm{\Delta}^{t+1}\|_{\mathrm{F}}^{2}=\|\bm{F}^{t+1}-\bar{\bm{F}}^{t+1}\|_{\mathrm{F}}^{2} \le\|\bm{F}^{t+1}-\bar{\bm{F}}^{t}\|_{\mathrm{F}}^{2}
\end{equation}
where we used the optimality of $\bar{\bm{F}}^{t+1}$.

As right multiplication by a
orthogonal matrix does not change row norms,
\begin{equation}\label{eq:proj_check_A}
    \|\bm X\bar{\bm A}^{t}\|_{2,\infty}
    =
    \|\bm X\bm A^{\star}(\bm H^{t})^{\top}\|_{2,\infty}
    =
    \|\bm X\bm A^{\star}\|_{2,\infty}
    \le
    \sqrt{\frac{\mu_{0}r}{n_{\min}}}\|\bm A^{\star}\|
    \le
    2\sqrt{\frac{\mu_{0}r}{n_{\min}}}\|\bm A^{0}\|.
\end{equation}
Similarly,
\begin{equation}\label{eq:proj_check_B}
    \|\bm Y\bar{\bm B}^{t}\|_{2,\infty}
    \le
    2\sqrt{\frac{\mu_{0}r}{n_{\min}}}\|\bm A^{0}\|.
\end{equation}
Thus $\bar{\bm F}^{t}\in\mathcal C$.
Therefore the projection only makes the factors closer to $\bar{\bm{F}}^t$, i.e.,
\begin{equation}\label{eq:projection_nonexpansiveness}
    \|\bm{F}^{t+1} - \bar{\bm F}^t\|_\mathrm{F}^2 = \|\mathcal{P}_\mathcal{C}(\bm F^t - \eta \nabla f(\bm{F}^t)) - \bar{\bm F}^t)\|_\mathrm{F}^2  
    \le \|\bm F^t - \eta \nabla f(\bm{F}^t) - \bar{\bm F}^t)\|_\mathrm{F}^2 
\end{equation}
The rightmost term reorganizes to $\|\bm \Delta^t - \eta \nabla f(\bm F^t)\|_\mathrm{F}^2$. 
Then combining \eqref{eq:rotation_optimality} and \eqref{eq:projection_nonexpansiveness} gives
\begin{align*}
    \|\bm{\Delta}^{t+1}\|_{\mathrm{F}}^{2}=\|\bm{F}^{t+1}-\bar{\bm{F}}^{t+1}\|_{\mathrm{F}}^{2} & \le\|\bm{F}^{t+1}-\bar{\bm{F}}^{t}\|_{\mathrm{F}}^{2}                                                                      \\                                             & \le\|\widetilde{\bm{F}}^{t+1}-\bar{\bm{F}}^{t}\|_{\mathrm{F}}^{2}                          \\ & =\|\bm{\Delta}^{t}\|_{\mathrm{F}}^{2}+\eta^{2}\|\nabla f(\bm{F}^{t})\|_{\mathrm{F}}^{2}-2\eta\left\langle \nabla f(\bm{F}^{t}),\bm{\Delta}^{t}\right\rangle .
\end{align*}
Applying the regularity condition from Lemma~\ref{lem:rc} with
\[(\alpha,\beta,\gamma)=\left(\frac{40}{\sigma_{\min}},960000\left(\frac{n}{n_{\min}}\right)^{2}\mu_{0}^{2}r^{2}\kappa\sigma_{\max},21r\kappa\gamma_{E}^{2}\right),\]
we have
\begin{align*}
    \|\bm{\Delta}^{t+1}\|_{\mathrm{F}}^{2} & \le\left(1-\frac{2\eta}{\alpha}\right)\left\Vert \bm{\Delta}^{t}\right\Vert _{\mathrm{F}}^{2}+\eta\left(\eta-\frac{2}{\beta}\right)\left\Vert \nabla f(\bm{F}^{t})\right\Vert _{\mathrm{F}}^{2}+2\eta\gamma.
\end{align*}
Since $\eta\le2/\beta$ by assumption, the second term is non-positive
and we obtain the descent inequality
\begin{equation}
    \|\bm{\Delta}^{t+1}\|_{\mathrm{F}}^{2}\le\left(1-\frac{2\eta}{\alpha}\right)\left\Vert \bm{\Delta}^{t}\right\Vert _{\mathrm{F}}^{2}+2\eta\gamma.\label{eq:descent}
\end{equation}

Finally we check the induction hypothesis for $\bm F^{t+1}$. The noise condition $\frac{\gamma_{E}}{\sigma_{\min}}\le C_{\mathrm{noise}}\frac{1}{\sqrt{\kappa r}}$
ensures $\gamma=21r\kappa\gamma_{E}^{2}\le\frac{1}{\alpha}\cdot\frac{1}{100}\sigma_{\min}$,
so \eqref{eq:rsc_assumption_distance} for $\bm F^t$ and \eqref{eq:descent} implies that the condition \eqref{eq:rsc_assumption_distance} holds for $\bm F^{t+1}$:
\[
    \|\bm \Delta^{t+1}\|_\mathrm{F}^2\le \left(1-\frac{2\eta}{\alpha}\right)\left\| \bm \Delta^{t} \right\|_\mathrm{F}^2+2\eta\gamma 
    \le 
    \left(1-\frac{2\eta}{\alpha}\right) \frac{1}{100}{\sigma_{\min}} +\frac{2\eta}{100\alpha}{\sigma_{\min}} \le \frac{1}{100}{\sigma_{\min}}.
\]
Moreover, the projection step makes sure that \eqref{eq:rsc_assumption_incoherence_A} and \eqref{eq:rsc_assumption_incoherence_B} are satisfied. This completes the induction. 

Give the induction, we can apply \eqref{eq:descent}
repeatedly. Then for any $t\ge1$,
\begin{align}
    \|\bm{\Delta}^{t}\|_{\mathrm{F}}^{2}
    &\le
    \left(1-\frac{2\eta}{\alpha}\right)^{t-1}
    \|\bm{\Delta}^{1}\|_{\mathrm{F}}^{2}
    +
    2\eta\gamma
    \sum_{\ell=0}^{t-2}
    \left(1-\frac{2\eta}{\alpha}\right)^{\ell} \nonumber \\
    &\le
    \left(1-\frac{2\eta}{\alpha}\right)^{t-1}
    \|\bm{\Delta}^{1}\|_{\mathrm{F}}^{2}
    +
    \alpha\gamma ,
    \label{eq:Delta_k}
\end{align}
where the last inequality follows from the sum of geometric series.
Furthremore, we claim that 
\begin{equation}
    \left\Vert \bm{Z}^{t}-\bm{Z}^{\star}\right\Vert _{\mathrm{F}}^{2}\le6\sigma_{\max}\left\Vert \bm{\Delta}^{t}\right\Vert _{\mathrm{F}}^{2}.\label{eq:Z_from_Delta}
\end{equation}
The proof is deferred to the end of this section. 

Substituting $\alpha=40/\sigma_{\min}$,
$\gamma=21r\kappa\gamma_E^2$, and $\rho^2 = 1-\eta\sigma_{\min}/20$ into \eqref{eq:Delta_k} and \eqref{eq:Z_from_Delta}, we get
\begin{align}
    \|\bm{Z}^{t}-\bm{Z}^{\star}\|_{\mathrm{F}}^{2}
    &\le
    6\sigma_{\max}
    \rho^{2(t-1)}
    \|\bm{\Delta}^{1}\|_{\mathrm{F}}^{2}
    +
    \frac{5040r\kappa\sigma_{\max}\gamma_E^2}{\sigma_{\min}} .
    \label{eq:Z_error_pre}
\end{align}
By Lemma~\ref{lem:spec_init},
$\|\bm{\Delta}^{1}\|_{\mathrm{F}}^{2}\le\sigma_{\min}/100$.
Moreover, 
\[
    \sigma_{\max}\sigma_{\min}
    =
    \frac{\sigma_{\max}^{2}}{\kappa}
    \le
    \frac{\|\bm{Z}^{\star}\|_{\mathrm{F}}^{2}}{\kappa},
    \qquad
    \|\bm{Z}^{\star}\|_{\mathrm{F}}^{2}
    \ge
    r\sigma_{\min}^{2}.
\]
Therefore,
\[
    \|\bm{Z}^{t}-\bm{Z}^{\star}\|_{\mathrm{F}}^{2}
    \le
    \left[
        \frac{6}{100\kappa}
        \left(1-\frac{2\eta}{\alpha}\right)^{t-1}
        +
        5040\kappa^{2}
        \left(\frac{\gamma_E}{\sigma_{\min}}\right)^2
    \right]
    \|\bm{Z}^{\star}\|_{\mathrm{F}}^{2}.
\]
Finally, since
\[
    1-\frac{2\eta}{\alpha}
    =
    1-\frac{\eta\sigma_{\min}}{20}
    =
    \rho^{2},
\]
we obtain
\[
    \|\bm{Z}^{t}-\bm{Z}^{\star}\|_{\mathrm{F}}
    \le
    C_{\mathrm{F}}\left[
        \rho^{t-1}
        +
        \sqrt{\kappa}\frac{\gamma_E}{\sigma_{\min}}
    \right]
    \|\bm{Z}^{\star}\|_{\mathrm{F}} .
\]
for some large enough constant $C_{\mathrm{F}}>0$.
Using the fact that 
$\|\bm{X}\bm{Z}\bm{Y}^{\top}\|_{\mathrm{F}}
=
\|\bm{Z}\|_{\mathrm{F}}$
for all $\bm{Z}\in\mathbb{R}^{a_1\times a_2}$ then gives the desired
bound for $\|\bm{L}^{t}-\bm{L}^{\star}\|_{\mathrm{F}}$.

\subsection{Proof of Corollary~\ref{cor:main_subgaussian}}
\label{subsec:proof-of-main-subgaussian-corollary}

We first introduce a lemma that bounds the noise size when the 
entries of $\bm{E}$ are sub-exponential.
\begin{lemma}{[}Lemma 4.5, \citet{chen2022nonconvex}{]} \label{lem:POmega_E}
    If $\left[\bm{E}\right]_{ij}$ are i.i.d.\ zero-mean sub-exponential
    random variables that have variance $\sigma^{2}$ and satisfy Bernstein
    condition with parameter $b$, then with probability at least $1-n^{-10}$,
    \[
        \gamma_{E}=\left\Vert \frac{1}{p}\bm{X}^{\top}\mathcal{P}_{\Omega}(\bm{E})\bm{Y}\right\Vert \le C_{E}\left(\sigma\sqrt{\frac{a_{1}\vee a_{2}\log n}{p}}+\frac{1}{p}b\sqrt{\frac{\mu_{1}a_{1}\mu_{2}a_{2}}{n_{1}n_{2}}}\log n\right)
    \]
    for some constant $C_{E}$.

\end{lemma}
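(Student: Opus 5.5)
We would prove the lemma by the matrix Bernstein inequality, in its version for unbounded summands satisfying a moment (Bernstein) condition (e.g., Tropp 2012, Theorem 6.2), applied to the Hermitian dilation. Let $\bm x_i\in\mathbb{R}^{a_1}$ denote the $i$th row of $\bm X$ (as a column vector), let $\bm y_j\in\mathbb{R}^{a_2}$ denote the $j$th row of $\bm Y$, and let $\delta_{ij}=\mathds{1}\{(i,j)\in\Omega\}$. Then
\[
\frac{1}{p}\bm X^{\top}\mathcal P_\Omega(\bm E)\bm Y=\sum_{i,j}\bm S_{ij},\qquad \bm S_{ij}\coloneqq\frac{1}{p}\delta_{ij}E_{ij}\,\bm x_i\bm y_j^{\top}.
\]
The $\bm S_{ij}$ are independent. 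They are also mean zero, since $\delta_{ij}$ and $E_{ij}$ are independent and $\mathbb{E}E_{ij}=0$.

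\textbf{Variance.} We compute the matrix variance statistic. Using $\mathbb{E}\delta_{ij}=p$ and $\mathbb{E}E_{ij}^2=\sigma^2$,
\[
\sum_{i,j}\mathbb{E}\,\bm S_{ij}\bm S_{ij}^{\top}=\frac{\sigma^2}{p}\sum_{i,j}\|\bm y_j\|_2^2\,\bm x_i\bm x_i^{\top}=\frac{\sigma^2}{p}\|\bm Y\|_{\mathrm F}^2\,\bm X^{\top}\bm X=\frac{\sigma^2 a_2}{p}\bm I_{a_1}.
\]
Symmetrically, $\sum_{i,j}\mathbb{E}\,\bm S_{ij}^{\top}\bm S_{ij}=\frac{\sigma^2a_1}{p}\bm I_{a_2}$. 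Hence the variance parameter is $v=\sigma^2(a_1\vee a_2)/p$.

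\textbf{Scale.} By incoherence of $\bm X$ and $\bm Y$ we have $\|\bm x_i\|_2\|\bm y_j\|_2\le\sqrt{\mu_1a_1\mu_2a_2/(n_1n_2)}$. Therefore
\[
\|\bm S_{ij}\|\le \frac{|E_{ij}|}{p}\sqrt{\frac{\mu_1a_1\mu_2a_2}{n_1n_2}},
\]
which suggests the scale parameter $R\coloneqq\frac{b}{p}\sqrt{\frac{\mu_1a_1\mu_2a_2}{n_1n_2}}$.

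\textbf{Moment condition.} Let $\mathcal D(\bm S)$ denote the $(a_1+a_2)\times(a_1+a_2)$ Hermitian dilation of $\bm S$. We will show
\[
\mathbb{E}\,\mathcal D(\bm S_{ij})^k\preceq\frac{k!}{2}R^{k-2}\,\mathbb{E}\,\mathcal D(\bm S_{ij})^2\qquad\text{for } k\ge2.
\]
Since $\mathcal D(\bm x\bm y^{\top})$ is a rank-two matrix with eigenvalues $\pm\|\bm x\|\|\bm y\|$, its powers are explicit. Even powers equal $\|\bm x\|^{k-2}\|\bm y\|^{k-2}\mathcal D(\bm x\bm y^\top)^2$. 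Odd powers are sandwiched in semidefinite order between $\pm$ the same multiple of $\mathcal D(\bm x\bm y^\top)^2$. Combining this with
\[
\mathbb{E}\big[\delta_{ij}|E_{ij}|^k\big]\le p\,\frac{k!}{2}\sigma^2b^{k-2},
\]
which is the Bernstein condition, and with the factor $p^{-k}$, gives the claimed moment bound with this $R$. Here we use $p^{-(k-2)}\le$ the scale already included in $R$.

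\textbf{Conclusion.} With dimension $a_1+a_2\le 2n$, the Bernstein tail $\exp(-t^2/(2(v+Rt)))$ summed over that dimension, evaluated at $t\asymp\sqrt{v\log n}+R\log n$, yields failure probability at most $n^{-10}$. This gives exactly the stated bound $C_E\big(\sigma\sqrt{(a_1\vee a_2)\log n/p}+p^{-1}b\sqrt{\mu_1a_1\mu_2a_2/(n_1n_2)}\log n\big)$.

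\textbf{Main obstacle.} We expect the moment-condition step to be the main obstacle, namely making the odd-power semidefinite comparison for the dilation rigorous. The alternative is to truncate $E_{ij}$ at level $b\log n$, apply bounded matrix Bernstein, and control the truncated remainder and its bias separately. We would try the moment route first and fall back on truncation if needed.
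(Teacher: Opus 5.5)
Your proof is correct. The paper does not prove this lemma itself; it imports it from \citet{chen2022nonconvex}. Your argument is therefore a self-contained replacement for a citation rather than a competing route. It uses the same decomposition $\sum_{i,j}\bm X^{\top}\bm e_i\bm e_j^{\top}\bm Y$ and the same variance calculation (via $\bm X^{\top}\bm X=\bm I$ and $\|\bm Y\|_{\mathrm F}^2=a_2$) that the paper carries out for bounded summands in Lemmas~\ref{lem:spec_init_Z} and~\ref{lem:gamma-mis}. The almost-sure bound is replaced by the moment form of matrix Bernstein, and that is exactly what produces a single $\log n$ in the $b$-term.

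The step you flag as the main obstacle is already settled by your own rank-two observation. Write $s_{ij}=\|\bm x_i\|\|\bm y_j\|$ and $\mathcal D(\bm x_i\bm y_j^{\top})=s_{ij}\bm J_{ij}$. Let $\bm P_{ij}$ be the projection onto the span of $(\bm x_i,\bm 0)$ and $(\bm 0,\bm y_j)$. Then $\bm J_{ij}^k=\bm P_{ij}$ for even $k\ge2$ and $\bm J_{ij}^k=\bm J_{ij}$ for odd $k$. Both $\bm P_{ij}\pm\bm J_{ij}$ are rank-one positive semidefinite, so
$\mathbb E\,\mathcal D(\bm S_{ij})^k=p^{1-k}s_{ij}^k\,\mathbb E[E_{ij}^k]\,\bm J_{ij}^k\preceq p^{1-k}s_{ij}^k\,|\mathbb E E_{ij}^k|\,\bm P_{ij}$.
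Comparing with $\mathbb E\,\mathcal D(\bm S_{ij})^2=p^{-1}\sigma^2s_{ij}^2\bm P_{ij}$ gives the moment condition with your $R$.

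This only needs $|\mathbb E E_{ij}^k|\le\frac{k!}{2}\sigma^2b^{k-2}$, which is literally the Bernstein condition. You wrote it with absolute moments $\mathbb E|E_{ij}|^k$; for odd $k$ that is a slightly stronger hypothesis, harmless up to constants but unnecessary.

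You should not fall back on truncation. Truncating at level $b\log n$ and applying bounded matrix Bernstein would typically give $\frac{b}{p}\sqrt{\mu_1a_1\mu_2a_2/(n_1n_2)}\,\log^2 n$ in the second term, one logarithm worse than the stated bound.
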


Under the standard sub-Gaussian variance-proxy assumption in
Corollary~\ref{cor:main_subgaussian}, the entries $[\bm{E}]_{ij}$
are sub-exponential. In particular, the Bernstein moment condition in
Lemma~\ref{lem:POmega_E} holds with variance parameter at most
$\sigma^{2}$ and Bernstein parameter $b\le C\sigma$ for an absolute
constant $C>0$. Therefore, with probability at least $1-n^{-10}$,
\[
\gamma_{E}
\le
C_{E}\sigma
\left(
\sqrt{\frac{(a_{1}\vee a_{2})\log n}{p}}
+\frac{1}{p}
\sqrt{\frac{\mu_{1}a_{1}\mu_{2}a_{2}}{n_{1}n_{2}}}\log n
\right)
\]
for an absolute constant $C_{E}>0$.

It remains to simplify the second term. The sample-size assumption in
Theorem~\ref{thm:main} implies that
\[
p\ge
\frac{(\mu_{1}a_{1}\wedge\mu_{2}a_{2})\log n}{n_{1}n_{2}}.
\]
Hence
\[
\gamma_{E}
\le
C'_{E}\sigma
\sqrt{\frac{(\mu_{1}a_{1}\vee\mu_{2}a_{2})\log n}{p}}
\]
for some absolute constant $C'_{E}>0$ with probability at least $1-n^{-10}$.

The assumed upper bound assumption on
noise level 
then implies the noise condition
$\gamma_{E}/\sigma_{\min}\le C_{\mathrm{noise}}/\sqrt{\kappa r}$
in Theorem~\ref{thm:main}, provided that $c_{\mathrm{noise}}$ is sufficiently
small. Substituting the bound on $\gamma_{E}$ into
Theorem~\ref{thm:main} gives the estimation error bound in Corollary~\ref{cor:main_subgaussian}. 

%% file: sections/10-proof-of-inexact-side-information.tex
\section{Proof of Theorem~\ref{thm:inexact} \label{sec:Proof-of-inexact}}

Consider the observed matrix $\bm{M}=\mathcal{P}_{\Omega}(\bm{L}^{\star}+\bm{E})$.
Recall that
\[
    \bm L_{\mathrm{proj}}^\star\coloneqq \bm P_X\bm L^\star\bm P_Y,
    \qquad
    \bm E_{\mathrm{mis}}\coloneqq \bm L^\star-\bm L_{\mathrm{proj}}^\star .
\]
Then
\[
    \bm M=\mathcal{P}_{\Omega}\left(\bm L_{\mathrm{proj}}^\star+\bm E_{\mathrm{mis}}+\bm E\right).
\]

One can apply Theorem~\ref{thm:main} treating $\bm L_{\mathrm{proj}}^\star$
as the ground truth matrix therein to obtain an upper bound on
$\widehat{\bm{L}}-\bm L_{\mathrm{proj}}^\star$.
To apply Theorem~\ref{thm:main}, it suffices to give an upper bound
on the noise induced by inexact side-information,
\[
    \gamma_{\mathrm{mis}}
    \coloneqq
    \left\|\bm{X}^{\top}\frac{1}{p}\mathcal{P}_{\Omega}
    (\bm E_{\mathrm{mis}})\bm{Y}\right\|.
\]

We first introduce an auxiliary lemma that controls the effective noise
induced by the misspecification residual $\bm E_{\mathrm{mis}}$. Its proof
is deferred to the end of this section.

\begin{lemma}
\label{lem:gamma-mis}
Suppose Assumptions~\ref{assumption:main},
\ref{assumption:inexact_SI}, and~\ref{assumption:inexact_SI_2inf}
hold. Suppose also that
\[
n_{1}n_{2}p\ge C_{\mathrm{sample}}\kappa^{2}\mu_{0}^{2}r^{2}\left(\mu_{1}a_{1}\vee\mu_{2}a_{2}\right)\log n
\]
for a sufficiently large constant $C_{\mathrm{sample}}>0$. With
probability at least $1-n^{-10}$,
\begin{equation}
\gamma_{\mathrm{mis}}
\le C\delta\|\bm{L}^{\star}\|\sqrt{\frac{\mu_{0}r\left(\mu_{1}a_{1}\vee\mu_{2}a_{2}\right)}{n_{1}n_{2}p}\log n}.
\label{eq:gamma-mis-simple}
\end{equation}
where $C>0$ is an absolute constant.
\end{lemma}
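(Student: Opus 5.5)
The plan is to exploit a cancellation in expectation and then apply matrix Bernstein, with entrywise and row-norm control of $\bm E_{\mathrm{mis}}$ coming from Assumption~\ref{assumption:inexact_SI_2inf}.

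\textbf{Cancellation of the mean.} Since $\bm X^\top\bm X=\bm I$ and $\bm Y^\top\bm Y=\bm I$, we have $\bm X^\top\bm L_{\mathrm{proj}}^\star\bm Y=\bm X^\top\bm L^\star\bm Y$. Hence $\bm X^\top\bm E_{\mathrm{mis}}\bm Y=\bm 0$, and therefore
\[
\bm X^\top\tfrac1p\mathcal P_\Omega(\bm E_{\mathrm{mis}})\bm Y=\sum_{i,j}\Bigl(\tfrac{\delta_{ij}}{p}-1\Bigr)[\bm E_{\mathrm{mis}}]_{ij}\,\bm x_i\bm y_j^\top .
\]
Here $\delta_{ij}$ are the independent Bernoulli$(p)$ sampling indicators, and $\bm x_i,\bm y_j$ are the rows of $\bm X,\bm Y$, with $\|\bm x_i\|_2\le\sqrt{\mu_1a_1/n_1}$ and $\|\bm y_j\|_2\le\sqrt{\mu_2a_2/n_2}$. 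This is a sum of independent, mean-zero random matrices, so I will apply the rectangular matrix Bernstein inequality.

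\textbf{Structural bounds on $\bm E_{\mathrm{mis}}$.} I split
\[
\bm E_{\mathrm{mis}}=(\bm I-\bm P_X)\bm U^\star\bm\Sigma^\star\bm V^{\star\top}+\bm P_X\bm U^\star\bm\Sigma^\star\bigl((\bm I-\bm P_Y)\bm V^\star\bigr)^\top .
\]
For the row norms of $\bm P_X\bm U^\star$ I use $\|\bm P_X\bm U^\star\|_{2,\infty}\le\|\bm U^\star\|_{2,\infty}+\|(\bm I-\bm P_X)\bm U^\star\|_{2,\infty}\lesssim\sqrt{\mu_0r/n_1}$. This follows from incoherence of $\bm L^\star$ together with \eqref{eq:2infty-inexact-si}, and it also uses $\delta\le1$ (or $C_{\mathrm{mis}}$ absorbed). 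Combining these with \eqref{eq:2infty-inexact-si} and $\|(\bm I-\bm P_Y)\bm V^\star\|\le\delta$, I expect
\[
\|\bm E_{\mathrm{mis}}\|_\infty\lesssim\delta\|\bm L^\star\|\frac{\mu_0r}{\sqrt{n_1n_2}},\qquad \|\bm E_{\mathrm{mis}}\|_{2,\infty}\lesssim\delta\|\bm L^\star\|\sqrt{\frac{\mu_0r}{n_1}},\qquad \|\bm E_{\mathrm{mis}}^\top\|_{2,\infty}\lesssim\delta\|\bm L^\star\|\sqrt{\frac{\mu_0r}{n_2}} .
\]
For the row norms, the first summand contributes $\|(\bm I-\bm P_X)\bm U^\star\|_{2,\infty}\|\bm L^\star\|$. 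The second contributes $\|\bm P_X\bm U^\star\|_{2,\infty}\|\bm L^\star\|\,\delta$, using the spectral bound $\delta_Y\le\delta$ rather than a $2,\infty$ bound on that side.

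\textbf{Bernstein parameters.} Each summand has norm at most
\[
\frac1p\|\bm E_{\mathrm{mis}}\|_\infty\|\bm x_i\|\|\bm y_j\|\lesssim\frac{\delta\|\bm L^\star\|\mu_0r\sqrt{\mu_1a_1\mu_2a_2}}{n_1n_2p}.
\]
For the variance term, write $c_i=\sum_j[\bm E_{\mathrm{mis}}]_{ij}^2\|\bm y_j\|^2$. Then
\[
\Bigl\|\sum_{i,j}\mathbb E\bigl(\tfrac{\delta_{ij}}{p}-1\bigr)^2[\bm E_{\mathrm{mis}}]_{ij}^2\|\bm y_j\|^2\bm x_i\bm x_i^\top\Bigr\|\le\frac1p\max_ic_i\le\frac{\mu_2a_2}{n_2p}\|\bm E_{\mathrm{mis}}\|_{2,\infty}^2\lesssim\frac{\delta^2\|\bm L^\star\|^2\mu_0r\,\mu_2a_2}{n_1n_2p},
\]
where the first inequality uses $\|\sum_i c_i\bm x_i\bm x_i^\top\|\le\max_ic_i$, valid since $\bm X$ is orthonormal. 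The other variance term is bounded symmetrically by $\delta^2\|\bm L^\star\|^2\mu_0r\,\mu_1a_1/(n_1n_2p)$.

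\textbf{Conclusion.} Matrix Bernstein then gives, with probability $1-n^{-10}$,
\[
\gamma_{\mathrm{mis}}\lesssim\delta\|\bm L^\star\|\sqrt{\frac{\mu_0r(\mu_1a_1\vee\mu_2a_2)\log n}{n_1n_2p}}+\delta\|\bm L^\star\|\frac{\mu_0r(\mu_1a_1\vee\mu_2a_2)\log n}{n_1n_2p}.
\]
The second term is the square of the ratio $\sqrt{\mu_0r(\mu_1a_1\vee\mu_2a_2)\log n/(n_1n_2p)}$ times $\delta\|\bm L^\star\|$. Under the sample-size assumption this ratio is at most $1$, so the second term is dominated by the first, which yields \eqref{eq:gamma-mis-simple}.

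The main obstacle is the structural step: obtaining the $\delta$-scaled entrywise and row-norm bounds on $\bm E_{\mathrm{mis}}$. This requires the mixed use of the $2,\infty$ misspecification bound on one side and the spectral bound $\delta$ on the other. Once those bounds are in hand, Bernstein is routine.
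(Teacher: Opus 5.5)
Your proposal is correct and follows the paper's proof step for step. The ingredients are the same: the cancellation $\bm X^\top\bm E_{\mathrm{mis}}\bm Y=\bm 0$; entrywise and row/column-norm bounds on $\bm E_{\mathrm{mis}}$, obtained by pairing the $2,\infty$ misspecification bound on one factor with the spectral bound $\delta$ (or incoherence) on the other; and rectangular matrix Bernstein with the same uniform and variance parameters. The differences are cosmetic: you group the residual as $(\bm I-\bm P_X)\bm L^\star+\bm P_X\bm L^\star(\bm I-\bm P_Y)$ rather than the paper's three-term expansion $\bm R_X\bm\Sigma^\star\bm V^{\star\top}+\bm U^\star\bm\Sigma^\star\bm R_Y^\top-\bm R_X\bm\Sigma^\star\bm R_Y^\top$, and noting that the Bernstein uniform-bound term is exactly $\delta\|\bm L^\star\|$ times the squared rate makes the final domination step slightly cleaner than in the paper.
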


\subsection{Proof of Theorem~\ref{thm:inexact}}

The projected target $\bm L_{\mathrm{proj}}^\star$ satisfies the exact
side-information model by construction, and it is $\mu_0$-incoherent by
Assumption~\ref{assumption:inexact_SI_2inf}. Thus Theorem~\ref{thm:main}
can be applied to $\bm L_{\mathrm{proj}}^\star$ once we control the
effective noise level induced by $\bm E_{\mathrm{mis}}$.

By Lemma~\ref{lem:gamma-mis} and the sample-size assumption in
Theorem~\ref{thm:inexact},
\[
\gamma_{\mathrm{mis}}
\le C\delta\|\bm{L}^{\star}\|\sqrt{\frac{\mu_{0}r\left(\mu_{1}a_{1}\vee\mu_{2}a_{2}\right)}{n_{1}n_{2}p}\log n}.
\]
The sample-size assumption implies
\[
\kappa\sqrt{\frac{\mu_{0}r\left(\mu_{1}a_{1}\vee\mu_{2}a_{2}\right)}{n_{1}n_{2}p}\log n}\lesssim 1.
\]

We now apply Theorem~\ref{thm:main} directly to the projected target
$\bm L_{\mathrm{proj}}^\star=\bm P_X\bm L^\star\bm P_Y$. Since
\[
    \bm M
    =\mathcal{P}_{\Omega}
    \left(\bm L_{\mathrm{proj}}^\star+\bm E_{\mathrm{mis}}+\bm E\right),
\]
the effective projected noise level is bounded by
$\gamma_E+\gamma_{\mathrm{mis}}$. Therefore, under the noise condition
in Theorem~\ref{thm:inexact}, Theorem~\ref{thm:main} gives
\begin{align*}
\|\bm{L}^{t}-\bm L_{\mathrm{proj}}^\star\|_{\mathrm{F}}
& \le C_{\mathrm{F}}\left[\rho^{t}
    +\sqrt{\kappa}\frac{\gamma_{E}+\gamma_{\mathrm{mis}}}{\sigma_{\min}}\right]
    \|\bm L_{\mathrm{proj}}^\star\|_{\mathrm{F}}\\
& \le C_{\mathrm{F}}\left[\rho^{t}
    +\sqrt{\kappa}\left(\frac{\gamma_{E}}{\sigma_{\min}}
    +\delta\right)\right]\|\bm L^\star\|_{\mathrm{F}},
\end{align*}
after renaming absolute constants. Here we used
$\|\bm L_{\mathrm{proj}}^\star\|_{\mathrm{F}}\le\|\bm L^\star\|_{\mathrm{F}}$
and the preceding bound on $\gamma_{\mathrm{mis}}$.

It remains to pass from the projected target back to the original target.
By \eqref{eq:projection-residual-bound},
\[
\|\bm L^\star-\bm L_{\mathrm{proj}}^\star\|_{\mathrm{F}}
\le \sqrt{2}\,\delta\|\bm L^\star\|_{\mathrm{F}}.
\]
The triangle inequality then yields
\begin{align*}
\|\bm{L}^{t}-\bm{L}^{\star}\|_{\mathrm{F}}
& \le
\|\bm{L}^{t}-\bm L_{\mathrm{proj}}^\star\|_{\mathrm{F}}
+\|\bm L_{\mathrm{proj}}^\star-\bm{L}^{\star}\|_{\mathrm{F}}\\
& \le C_{\mathrm{F}}\left[\rho^{t}
    +\sqrt{\kappa}\left(\delta+\frac{\gamma_{E}}{\sigma_{\min}}\right)\right]
    \|\bm{L}^{\star}\|_{\mathrm{F}},
\end{align*}
where $C_{\mathrm{F}}>0$ is a new absolute constant. This proves
\eqref{eq:L-bound-fro-extra-assumption}.

\subsection{Proof of Lemma~\ref{lem:gamma-mis}}

The proof consists of two steps. First, we bound the row-wise and entrywise norms of the misspecification
residual $\bm E_{\mathrm{mis}}$. Then, we plug these deterministic bounds into matrix Bernstein's
inequality to obtain the bound stated in Lemma~\ref{lem:gamma-mis}.

Write
\[
    \bm R_X\coloneqq(\bm I-\bm P_X)\bm U^\star,
    \qquad
    \bm R_Y\coloneqq(\bm I-\bm P_Y)\bm V^\star .
\]
Then
\[
\bm E_{\mathrm{mis}}
=\bm R_X\bm\Sigma^\star\bm V^{\star\top}
 +\bm U^\star\bm\Sigma^\star\bm R_Y^\top
 -\bm R_X\bm\Sigma^\star\bm R_Y^\top.
\]
Assumption~\ref{assumption:inexact_SI_2inf} gives
\[
\|\bm R_X\|_{2,\infty}\le C_{\mathrm{mis}}\sqrt{\frac{\mu_{0}r}{n_{1}}}\,\delta,\qquad
\|\bm R_Y\|_{2,\infty}\le C_{\mathrm{mis}}\sqrt{\frac{\mu_{0}r}{n_{2}}}\,\delta,
\]
and Assumption~\ref{assumption:inexact_SI} gives
$\|\bm R_X\|\le\delta$ and $\|\bm R_Y\|\le\delta$.
Combining these bounds with the incoherence of $\bm U^\star$ and
$\bm V^\star$ yields
\begin{align*}
\|\bm E_{\mathrm{mis}}\|_{2,\infty}
& \le
\|\bm R_X\|_{2,\infty}\|\bm\Sigma^\star\bm V^{\star\top}\|
+\|\bm U^\star\bm\Sigma^\star\|_{2,\infty}\|\bm R_Y\|\\
&\quad
+\|\bm R_X\|_{2,\infty}\|\bm\Sigma^\star\|\|\bm R_Y\|\\
& \le 3\left(C_{\mathrm{mis}}\vee1\right)\sqrt{\frac{\mu_{0}r}{n_{1}}}\,\delta\|\bm{L}^{\star}\|,
\end{align*}
and similarly
\[
\|\bm E_{\mathrm{mis}}\|_{\infty,2}
\le 3\left(C_{\mathrm{mis}}\vee1\right)\sqrt{\frac{\mu_{0}r}{n_{2}}}\,\delta\|\bm{L}^{\star}\|.
\]
The same decomposition gives the entrywise bound
\begin{align*}
\|\bm E_{\mathrm{mis}}\|_{\infty}
& \le
\|\bm R_X\|_{2,\infty}\|\bm\Sigma^\star\bm V^{\star\top}\|_{\infty,2}
+\|\bm U^\star\bm\Sigma^\star\|_{2,\infty}\|\bm R_Y\|_{2,\infty}\\
&\quad
+\|\bm R_X\|_{2,\infty}\|\bm\Sigma^\star\|\|\bm R_Y\|_{2,\infty}\\
& \le 3\left(C_{\mathrm{mis}}\vee1\right)^{2}
\frac{\mu_{0}r}{\sqrt{n_{1}n_{2}}}\cdot\delta\|\bm{L}^{\star}\|,
\end{align*}
where we used $\delta\le1$.

We now construct the ingredients to apply matrix Bernstein's inequality. Let $\delta_{ij}$ be the indicator random variable for observation
at entry $(i,j)$. Then
\begin{align*}
\frac{1}{p}\bm{X}^{\top}\mathcal{P}_{\Omega}(\bm E_{\mathrm{mis}})\bm{Y} & =\frac{1}{p}\sum_{i,j}\delta_{ij}\bm{X}^{\top}[\bm E_{\mathrm{mis}}]_{ij}\bm{e}_{i}\bm{e}_{j}^{\top}\bm{Y}\\
 & =\frac{1}{p}\sum_{i,j}\left(\delta_{ij}-p\right)\bm{X}^{\top}[\bm E_{\mathrm{mis}}]_{ij}\bm{e}_{i}\bm{e}_{j}^{\top}\bm{Y}+\bm{X}^{\top}\bm E_{\mathrm{mis}}\bm{Y}\\
 & =\frac{1}{p}\sum_{i,j}\left(\delta_{ij}-p\right)\bm{X}^{\top}[\bm E_{\mathrm{mis}}]_{ij}\bm{e}_{i}\bm{e}_{j}^{\top}\bm{Y}\\
 & \eqqcolon\frac{1}{p}\sum_{i,j}\bm{Z}_{ij}.
\end{align*}
Furthermore
\begin{align*}
\|\bm{Z}_{ij}\| & =\left\Vert \left(\delta_{ij}-p\right)\bm{X}^{\top}[\bm E_{\mathrm{mis}}]_{ij}\bm{e}_{i}\bm{e}_{j}^{\top}\bm{Y}\right\Vert \\
 & \le|[\bm E_{\mathrm{mis}}]_{ij}|\|\bm{X}^{\top}\bm{e}_{i}\|\|\bm{e}_{j}^{\top}\bm{Y}\|\\
 & \le\|\bm E_{\mathrm{mis}}\|_{\infty}\sqrt{\frac{\mu_{1}a_{1}\mu_{2}a_{2}}{n_{1}n_{2}}}.
\end{align*}
Moreover,
\begin{align*}
\mathbb{E}\sum_{ij}\left(\frac{1}{p}\bm{Z}_{ij}\right)\left(\frac{1}{p}\bm{Z}_{ij}\right)^{\top} & =\frac{1}{p^{2}}\sum_{ij}\mathbb{E}\left(\delta_{ij}-p\right)^{2}[\bm E_{\mathrm{mis}}]_{ij}^{2}\bm{X}^{\top}\bm{e}_{i}\bm{e}_{j}^{\top}\bm{Y}\bm{Y}^{\top}\bm{e}_{j}\bm{e}_{i}^{\top}\bm{X}\\
 & \preccurlyeq\frac{1}{p}\frac{\mu_{2}a_{2}}{n_{2}}\sum_{i,j}[\bm E_{\mathrm{mis}}]_{ij}^{2}\bm{X}^{\top}\bm{e}_{i}\bm{e}_{i}^{\top}\bm{X}\\
 & \preccurlyeq\frac{1}{p}\frac{\mu_{2}a_{2}}{n_{2}}\|\bm E_{\mathrm{mis}}\|_{2,\infty}^{2}\bm{X}^{\top}\bm{I}\bm{X},
\end{align*}
so
\[
\left\Vert \mathbb{E}\sum_{ij}\left(\frac{1}{p}\bm{Z}_{ij}\right)\left(\frac{1}{p}\bm{Z}_{ij}\right)^{\top}\right\Vert \le\frac{1}{p}\frac{\mu_{2}a_{2}}{n_{2}}\|\bm E_{\mathrm{mis}}\|_{2,\infty}^{2}.
\]
Similarly
\[
\left\Vert \mathbb{E}\sum_{ij}\left(\frac{1}{p}\bm{Z}_{ij}\right)^{\top}\left(\frac{1}{p}\bm{Z}_{ij}\right)\right\Vert \le\frac{1}{p}\frac{\mu_{1}a_{1}}{n_{1}}\|\bm E_{\mathrm{mis}}\|_{\infty,2}^{2}.
\]
Matrix Bernstein's inequality gives
\[
\gamma_{\mathrm{mis}}
\le C\left(\sqrt{\frac{1}{p}\left(\frac{\mu_{2}a_{2}}{n_{2}}\|\bm E_{\mathrm{mis}}\|_{2,\infty}^{2}\vee\frac{\mu_{1}a_{1}}{n_{1}}\|\bm E_{\mathrm{mis}}\|_{\infty,2}^{2}\right)\log n}+\|\bm E_{\mathrm{mis}}\|_{\infty}\frac{1}{p}\sqrt{\frac{\mu_{1}a_{1}\mu_{2}a_{2}}{n_{1}n_{2}}}\log n\right).
\]
Substituting the preceding bounds on $\bm E_{\mathrm{mis}}$ gives
\[
\gamma_{\mathrm{mis}}
\le C\left(
\delta\|\bm{L}^{\star}\|\sqrt{\frac{\mu_{0}r\left(\mu_{1}a_{1}\vee\mu_{2}a_{2}\right)}{n_{1}n_{2}p}\log n}
+\delta\|\bm{L}^{\star}\|\frac{\mu_{0}r}{\sqrt{n_{1}n_{2}}}\cdot \frac{1}{p}\sqrt{\frac{\mu_{1}a_{1}\mu_{2}a_{2}}{n_{1}n_{2}}}\log n
\right).
\]
Since
\[
\sqrt{\frac{\mu_{1}a_{1}\mu_{2}a_{2}}{n_{1}n_{2}}}\le \frac{\mu_{1}a_{1}}{n_{1}}\vee\frac{\mu_{2}a_{2}}{n_{2}},
\]
the second term is dominated by the first term under the sample-size condition.
This proves \eqref{eq:gamma-mis-simple}.

\subsection{Proof of Equation~\texorpdfstring{\eqref{eq:projection-residual-bound}}{(9)}}

Let
$\bm L^\star=\bm U^\star\bm\Sigma^\star\bm V^{\star\top}$ be the
rank-$r$ singular value decomposition. Since $\bm P_X$ and $\bm P_Y$
are orthogonal projections,
\[
\bm E_{\mathrm{mis}}
=
(\bm I-\bm P_X)\bm L^\star
+\bm P_X\bm L^\star(\bm I-\bm P_Y),
\]
and the two terms on the right-hand side are orthogonal in Frobenius
inner product. Therefore
\begin{align*}
\|\bm E_{\mathrm{mis}}\|_{\mathrm F}^{2}
&=
\|(\bm I-\bm P_X)\bm U^\star\bm\Sigma^\star\|_{\mathrm F}^{2}
+\|\bm P_X\bm U^\star\bm\Sigma^\star\bm V^{\star\top}
    (\bm I-\bm P_Y)\|_{\mathrm F}^{2}  \\
&\le
\|(\bm I-\bm P_X)\bm U^\star\|^{2}\|\bm\Sigma^\star\|_{\mathrm F}^{2}
+\|\bm\Sigma^\star\bm V^{\star\top}(\bm I-\bm P_Y)\|_{\mathrm F}^{2} \\
&\le
(\delta_X^2+\delta_Y^2)\|\bm L^\star\|_{\mathrm F}^{2}
\le
2\delta^2\|\bm L^\star\|_{\mathrm F}^{2},
\end{align*}
which proves the upper bound.

For the lower bound, the same orthogonal decomposition gives
\[
\|\bm E_{\mathrm{mis}}\|_{\mathrm F}
\ge
\|(\bm I-\bm P_X)\bm U^\star\bm\Sigma^\star\|_{\mathrm F}
\ge
\sigma_{\min}\|(\bm I-\bm P_X)\bm U^\star\|_{\mathrm F}
\ge
\sigma_{\min}\delta_X.
\]
Applying the analogous decomposition
\[
\bm E_{\mathrm{mis}}
=
\bm L^\star(\bm I-\bm P_Y)
+(\bm I-\bm P_X)\bm L^\star\bm P_Y
\]
also yields
$\|\bm E_{\mathrm{mis}}\|_{\mathrm F}\ge \sigma_{\min}\delta_Y$.
Hence
\[
\|\bm E_{\mathrm{mis}}\|_{\mathrm F}\ge\sigma_{\min}\delta.
\]
Finally, $\|\bm L^\star\|_{\mathrm F}\le \sqrt r\,\sigma_{\max}
=\sqrt r\,\kappa\sigma_{\min}$, so
\[
\sigma_{\min}\delta
\ge
\frac{\delta}{\sqrt r\,\kappa}\|\bm L^\star\|_{\mathrm F}.
\]
This completes the proof of \eqref{eq:projection-residual-bound}.

%% file: sections/11-analysis-for-spectral-initialization.tex
\section{Analysis for spectral initialization \label{sec:Proof_of_spec_init} }

Defining $\widetilde{\bm{Z}}$ to as 
\[\widetilde{\bm{Z}}\coloneqq\frac{1}{p}\bm{X}^{\top}\mathcal{P}_{\Omega}(\bm{M})\bm{Y}\]
We start by presenting two lemmas. The first lemma relates $\|\bm{\Delta}^{0}\|_F$
to $\|\widetilde{\bm{Z}}-\bm{Z}^{\star}\|$. 

\begin{lemma}[Eq.~(44) in \citet{zheng2016convergence}]\label{lem:Delta_0}

    \[
        \|\bm{\Delta}^{0}\|_{\mathrm{F}}^{2}\le\frac{20r}{\sigma_{\min}}\|\widetilde{\bm{Z}}-\bm{Z}^{\star}\|^{2}
    \]

\end{lemma}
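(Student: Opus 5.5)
We have to prove $\|\bm\Delta^0\|_F^2 \le \frac{20r}{\sigma_{\min}}\|\widetilde{\bm Z}-\bm Z^\star\|^2$, where $\bm F^0=[\bm A^0;\bm B^0]$ is built from the top-$r$ SVD $\bm Z^0$ of $\widetilde{\bm Z}$, and $\bm\Delta^0=\bm F^0-\bm F^\star(\bm H^0)^\top$.

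\textbf{Step 1: lift to a symmetric problem.} Form the symmetric dilations
\[
\bm W^\star=\begin{bmatrix}\bm 0&\bm Z^\star\\ \bm Z^{\star\top}&\bm 0\end{bmatrix},\qquad \bm W^0=\begin{bmatrix}\bm 0&\bm Z^0\\ \bm Z^{0\top}&\bm 0\end{bmatrix}.
\]
The $r$ positive eigenpairs of $\bm W^\star$ are $\sigma_i$ with eigenvectors $[\bm u_i;\bm v_i]/\sqrt2$. Hence
\[
\bm F^\star\bm F^{\star\top}=\begin{bmatrix}\bm A^\star\bm A^{\star\top}&\bm Z^\star\\ \bm Z^{\star\top}&\bm B^\star\bm B^{\star\top}\end{bmatrix},
\]
and $\bm F^\star=\sqrt2\,\bm Q^\star(\bm\Lambda^\star)^{1/2}$, where $\bm Q^\star$ collects the top-$r$ eigenvectors and $\bm\Lambda^\star=\bm\Sigma^\star$. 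The same holds for $\bm F^0$ with $\bm Z^0$. Write $\bm D=\bm F^0\bm F^{0\top}-\bm F^\star\bm F^{\star\top}$.

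\textbf{Step 2: Procrustes bound.} Invoke the standard lemma (Tu et al.\ 2016, Lemma 5.4; equivalently the argument of Zheng--Lafferty):
\[
\min_{\bm R\in\mathcal O^{r\times r}}\|\bm F^0\bm R-\bm F^\star\|_F^2\le\frac{1}{2(\sqrt2-1)\sigma_r(\bm F^\star)^2}\,\|\bm D\|_F^2 .
\]
Here $\sigma_r(\bm F^\star)^2=\sigma_r(\bm F^{\star\top}\bm F^\star)=2\sigma_{\min}$. The factor $2$ comes from the two orthonormal blocks, since $\bm F^{\star\top}\bm F^\star=\bm A^{\star\top}\bm A^\star+\bm B^{\star\top}\bm B^\star=2\bm\Sigma^\star$. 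This left-hand side is exactly $\|\bm\Delta^0\|_F^2$ by the definition of $\bm H^0$.

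\textbf{Step 3: bound $\bm D$.} Use $\|\bm D\|_F^2\le 2r\|\bm D\|^2$, which holds because $\operatorname{rank}(\bm D)\le 2r$. It remains to show $\|\bm D\|\lesssim\|\widetilde{\bm Z}-\bm Z^\star\|$.

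This is the main obstacle. The off-diagonal blocks of $\bm D$ are $\bm Z^0-\bm Z^\star$, and Weyl's inequality together with Eckart--Young gives $\|\bm Z^0-\bm Z^\star\|\le 2\|\widetilde{\bm Z}-\bm Z^\star\|$. The diagonal blocks, however, are $\bm A^0\bm A^{0\top}-\bm A^\star\bm A^{\star\top}=(\bm Z^0\bm Z^{0\top})^{1/2}-(\bm Z^\star\bm Z^{\star\top})^{1/2}$, i.e.\ differences of square roots, which are not directly Lipschitz in the spectral norm.

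I would avoid square roots via the identity
\[
\bm F\bm F^\top=\bm W_+ + |\bm W|,\qquad \bm W_+=\begin{bmatrix}\bm 0&\bm Z\\ \bm Z^\top&\bm 0\end{bmatrix},
\]
where $|\bm W|$ is the matrix absolute value. This reduces the problem to bounding $\||\bm W^0|-|\bm W^\star|\|$. Operator-Lipschitz estimates for the absolute value carry a $\log$ factor, so instead I would use that $\bm F\bm F^\top=2\bm W\bm P_+$, with $\bm P_+$ the spectral projector onto the positive eigenspace. This gives
\[
\bm D=2(\bm W^0-\bm W^\star)\bm P_+^0+2\bm W^\star(\bm P_+^0-\bm P_+^\star).
\]
The projector difference is controlled by Davis--Kahan: the eigengap of $\bm W^\star$ between the positive eigenvalues and $\{0,-\sigma_i\}$ is $\sigma_{\min}$. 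The second term is still at most $O(\|\widetilde{\bm Z}-\bm Z^\star\|)$ after combining with $\bm W^\star\bm P^\star_+$ structure, assuming $\|\widetilde{\bm Z}-\bm Z^\star\|\le c\,\sigma_{\min}$, which holds under the lemma's assumptions. Tracking constants then yields $\|\bm D\|^2\le C\|\widetilde{\bm Z}-\bm Z^\star\|^2$, and the three steps combine into the stated $20r/\sigma_{\min}$ form.

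If Davis--Kahan costs a factor $\kappa$, the fallback is to follow Zheng--Lafferty's Eq.~(44) derivation verbatim, which the lemma cites.
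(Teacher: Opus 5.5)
Your Steps 1 and 2 are fine and match the standard argument behind the cited bound: Tu et al.'s Procrustes inequality, applied with reference factor $\bm F^\star$, for which $\sigma_r^2(\bm F^\star)=2\sigma_{\min}$.

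The gap is in Step 3. Writing $\|\bm D\|_{\mathrm F}^2\le 2r\|\bm D\|^2$ first commits you to a spectral-norm bound $\|\bm D\|\le C\|\widetilde{\bm Z}-\bm Z^\star\|$ with an absolute constant $C$. That is a spectral-norm perturbation bound for the positive part $\bm W\mapsto \bm W_+=\tfrac12(\bm W+|\bm W|)$. Such bounds are not available with a dimension-free constant in general (Kato-type estimates carry a logarithmic factor), and you do not supply one.

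\begin{itemize}
\item Your Davis--Kahan term gives $\|2\bm W^\star(\bm P_+^0-\bm P_+^\star)\|\lesssim \sigma_{\max}\|\bm W^0-\bm W^\star\|/\sigma_{\min}$, i.e.\ an extra factor $\kappa$. The remark about ``combining with $\bm W^\star\bm P_+^\star$ structure'' does not remove it.
\item The route also needs $\|\widetilde{\bm Z}-\bm Z^\star\|\le c\,\sigma_{\min}$. That is not a hypothesis of the lemma, which is a deterministic inequality valid for every $\widetilde{\bm Z}$.
\item The bookkeeping cannot reach the constant $20$ even if the second term were dimension-free. The first term alone gives $\|2(\bm W^0-\bm W^\star)\bm P_+^0\|\le 2\|\bm Z^0-\bm Z^\star\|\le 4\|\widetilde{\bm Z}-\bm Z^\star\|$. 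With your Step 2 constant this already yields $8r/((\sqrt2-1)\sigma_{\min})\approx 19.3\,r/\sigma_{\min}$, leaving essentially no room for anything else.
\item Deferring to Zheng--Lafferty ``verbatim'' is not a proof.
\end{itemize}

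The missing idea is to keep $\bm D$ in Frobenius norm and exploit balancedness. Then apply the rank argument to $\bm Z^0-\bm Z^\star$ rather than to $\bm D$.

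Both $\bm F^0$ and $\bm F^\star$ satisfy $\bm A^\top\bm A=\bm B^\top\bm B$. So with $\bm G:=[\bm A;-\bm B]$ one has $\bm F^\top\bm G=\bm 0$ for each pair. Put $\bm D':=\bm G^0\bm G^{0\top}-\bm G^\star\bm G^{\star\top}$. Then:
\begin{itemize}
\item $\|\bm D'\|_{\mathrm F}=\|\bm D\|_{\mathrm F}$, since only the off-diagonal blocks change sign;
\item $\langle\bm D,\bm D'\rangle=-\|\bm F^{0\top}\bm G^\star\|_{\mathrm F}^2-\|\bm F^{\star\top}\bm G^0\|_{\mathrm F}^2\le 0$;
\item $\|\bm D-\bm D'\|_{\mathrm F}^2=8\|\bm Z^0-\bm Z^\star\|_{\mathrm F}^2$.
\end{itemize}
Expanding the last identity gives $\|\bm D\|_{\mathrm F}^2\le 4\|\bm Z^0-\bm Z^\star\|_{\mathrm F}^2$. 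Combining this with $\operatorname{rank}(\bm Z^0-\bm Z^\star)\le 2r$ and $\|\bm Z^0-\bm Z^\star\|\le 2\|\widetilde{\bm Z}-\bm Z^\star\|$,
\[
\|\bm\Delta^0\|_{\mathrm F}^2\le\frac{\|\bm D\|_{\mathrm F}^2}{4(\sqrt2-1)\sigma_{\min}}\le\frac{\|\bm Z^0-\bm Z^\star\|_{\mathrm F}^2}{(\sqrt2-1)\sigma_{\min}}\le\frac{8r}{(\sqrt2-1)\sigma_{\min}}\|\widetilde{\bm Z}-\bm Z^\star\|^2\le\frac{20r}{\sigma_{\min}}\|\widetilde{\bm Z}-\bm Z^\star\|^2 .
\]
This needs no closeness assumption and has no $\kappa$ dependence; $8/(\sqrt2-1)\approx 19.3$ is where the $20$ comes from.

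Within your own framework, you could instead use that $x\mapsto x_+$ is $1$-Lipschitz in Frobenius norm on symmetric matrices. Together with $\bm D=2(\bm W^0_+-\bm W^\star_+)$, this also closes the gap, but only with constant $16/(\sqrt2-1)\approx 39$ instead of $20$.
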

The second lemma characterizes the spectral estimation error $\|\widetilde{\bm{Z}}-\bm{Z}^{\star}\|$.
The proof is deferred to Appendix~\ref{subsec:Proof-of-Spec-init-Z}.
\begin{lemma}\label{lem:spec_init_Z}Suppose the sample complexity
    satisfies $n_{1}n_{2}p\ge C_{\mathrm{sample}}\left(\mu_{1}a_{1}\vee\mu_{2}a_{2}\right)\mu_{0}r\log n$,
    then
    \[
        \|\widetilde{\bm{Z}}-\bm{Z}^{\star}\|\le C_{\mathrm{spectral}}\|\bm{Z}^{\star}\|\sqrt{\frac{\mu_{0}r\left(\mu_{1}a_{1}\vee\mu_{2}a_{2}\right)}{n_{1}n_{2}p}\log n}+\gamma_{E}
    \]
    for some constants $C_{\mathrm{spectral}}>0$.

\end{lemma}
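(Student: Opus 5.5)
We prove Lemma~\ref{lem:spec_init_Z} by splitting the spectral estimate into a signal part and a noise part. Since $\bm{M}=\mathcal{P}_{\Omega}(\bm{X}\bm{Z}^{\star}\bm{Y}^{\top}+\bm{E})$ and $\bm{X}^{\top}\bm{X}=\bm{I}$, $\bm{Y}^{\top}\bm{Y}=\bm{I}$, we have $\bm{X}^{\top}\bm{L}^{\star}\bm{Y}=\bm{Z}^{\star}$. Hence
\[
\widetilde{\bm{Z}}-\bm{Z}^{\star}
=\frac{1}{p}\bm{X}^{\top}\mathcal{P}_{\Omega}(\bm{L}^{\star})\bm{Y}-\bm{X}^{\top}\bm{L}^{\star}\bm{Y}
+\frac{1}{p}\bm{X}^{\top}\mathcal{P}_{\Omega}(\bm{E})\bm{Y}.
\]
The spectral norm of the last term is exactly $\gamma_{E}$ by definition. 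It therefore suffices to bound the first difference, which is a sum of independent zero-mean random matrices:
\[
\sum_{i,j}\bm{S}_{ij},\qquad
\bm{S}_{ij}=\frac{1}{p}(\delta_{ij}-p)[\bm{L}^{\star}]_{ij}\bm{X}^{\top}\bm{e}_{i}\bm{e}_{j}^{\top}\bm{Y}.
\]
This is the same structure as in the proof of Lemma~\ref{lem:gamma-mis}, with $\bm{L}^{\star}$ in place of $\bm{E}_{\mathrm{mis}}$. We apply matrix Bernstein in exactly that way.

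\textbf{Deterministic ingredients.} From $\mu_0$-incoherence of $\bm{L}^{\star}$ we have
\[
\|\bm{L}^{\star}\|_{2,\infty}\le\|\bm{U}\|_{2,\infty}\|\bm{Z}^{\star}\|\le\sqrt{\mu_{0}r/n_{1}}\,\|\bm{Z}^{\star}\|,
\]
and similarly $\|\bm{L}^{\star}\|_{\infty,2}\le\sqrt{\mu_{0}r/n_{2}}\,\|\bm{Z}^{\star}\|$. Combining the two gives $\|\bm{L}^{\star}\|_{\infty}\le\mu_{0}r\|\bm{Z}^{\star}\|/\sqrt{n_{1}n_{2}}$. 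Here we use $\|\bm{L}^{\star}\|=\|\bm{Z}^{\star}\|$. The side-information incoherence gives $\|\bm{X}^{\top}\bm{e}_{i}\|^{2}\le\mu_{1}a_{1}/n_{1}$ and $\|\bm{Y}^{\top}\bm{e}_{j}\|^{2}\le\mu_{2}a_{2}/n_{2}$.

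\textbf{Bernstein parameters.} The uniform bound is
\[
L=\frac{1}{p}\|\bm{L}^{\star}\|_{\infty}\sqrt{\frac{\mu_{1}a_{1}\mu_{2}a_{2}}{n_{1}n_{2}}}.
\]
For the variance, the computation from Lemma~\ref{lem:gamma-mis} applies verbatim:
\[
\Bigl\|\sum\mathbb{E}\bm{S}_{ij}\bm{S}_{ij}^{\top}\Bigr\|\le\frac{1}{p}\frac{\mu_{2}a_{2}}{n_{2}}\|\bm{L}^{\star}\|_{2,\infty}^{2}\le\frac{\mu_{0}r\,\mu_{2}a_{2}}{n_{1}n_{2}p}\|\bm{Z}^{\star}\|^{2},
\]
and symmetrically for $\sum\mathbb{E}\bm{S}_{ij}^{\top}\bm{S}_{ij}$ with $\mu_{1}a_{1}$. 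Matrix Bernstein then gives, with probability $1-O(n^{-10})$,
\[
\Bigl\|\sum\bm{S}_{ij}\Bigr\|\lesssim\|\bm{Z}^{\star}\|\left(\sqrt{\frac{\mu_{0}r(\mu_{1}a_{1}\vee\mu_{2}a_{2})\log n}{n_{1}n_{2}p}}+\frac{\mu_{0}r\log n}{p\sqrt{n_{1}n_{2}}}\sqrt{\frac{\mu_{1}a_{1}\mu_{2}a_{2}}{n_{1}n_{2}}}\right).
\]

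\textbf{Main obstacle: the Bernstein (second) term.} Write $m=\mu_{1}a_{1}\vee\mu_{2}a_{2}$. Then $\mu_{1}a_{1}\mu_{2}a_{2}\le m^{2}$, so the second term is at most
\[
\frac{\mu_{0}r\,m\log n}{n_{1}n_{2}p}.
\]
Its ratio to the first term is
\[
\sqrt{\frac{\mu_{0}r\,m\log n}{n_{1}n_{2}p}},
\]
which is at most $1$ under the assumption $n_{1}n_{2}p\ge C_{\mathrm{sample}}\,m\,\mu_{0}r\log n$. So the second term is absorbed into the first. This is where the reduced sample size is used: the bound involves only $m$, not $n$, because both the row norms of $\bm{L}^{\star}$ and the row norms of $\bm{X},\bm{Y}$ contribute their own incoherence factors. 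Adding $\gamma_{E}$ through the triangle inequality yields the claim with a suitable $C_{\mathrm{spectral}}$.
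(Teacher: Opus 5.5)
Your proposal is correct and follows the paper's proof essentially step for step: the same triangle-inequality split off of $\gamma_{E}$, the same matrix Bernstein summands with the same uniform bound and variance proxy, and the same absorption of the second Bernstein term under the sample-size condition. One wording fix: the entrywise bound $\|\bm{L}^{\star}\|_{\infty}\le\mu_{0}r\|\bm{Z}^{\star}\|/\sqrt{n_{1}n_{2}}$ comes from $|[\bm{L}^{\star}]_{ij}|\le\|\bm{U}_{i\cdot}\|\,\|\bm{Z}^{\star}\|\,\|\bm{V}_{j\cdot}\|$ together with the incoherence of both singular-vector matrices, not from combining the $\|\cdot\|_{2,\infty}$ and $\|\cdot\|_{\infty,2}$ bounds on $\bm{L}^{\star}$ (that combination only gives the minimum of the two square roots, which would be too weak to absorb the second Bernstein term at the reduced sample size).
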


\paragraph{Proof of Lemma~\ref{lem:spec_init}.}

Let
\[
    \widetilde{\bm Z}_{r}
    =
    \bm U^{0}\bm \Sigma^{0}(\bm V^{0})^{\top}
\]
be the top-$r$ SVD of $\widetilde{\bm Z}$. Recall that the factors of the spectral initialization are
\[
    \bm A^{0}:=\bm U^{0}(\bm \Sigma^{0})^{1/2},
    \qquad
    \bm B^{0}:=\bm V^{0}(\bm \Sigma^{0})^{1/2},
    \qquad
    \bm F^{0}:=
    \begin{bmatrix}
        \bm A^{0}\\
        \bm B^{0}
    \end{bmatrix},
    \qquad
    \bm F^{1}:=\mathcal P_{\mathcal C}(\bm F^{0}).
\]

We first control $\|\bm\Delta^0\|_\mathrm{F}$.
Combining Lemma~\ref{lem:Delta_0} and Lemma~\ref{lem:spec_init_Z}, we get
\begin{equation}\label{eq:spec_init_0}
    \|\bm \Delta^{0}\|_{\mathrm F}
    \le
    \sqrt{\frac{20r}{\sigma_{\min}}}
    \left[
    C_{\mathrm{spectral}}\|\bm Z^{\star}\|
    \sqrt{
        \frac{\mu_{0}r(\mu_{1}a_{1}\vee \mu_{2}a_{2})}{n_{1}n_{2}p}
        \log n
    }
    +\gamma_E
    \right].
\end{equation}
Applying the sample-size condition and the noise condition in the lemma to the right hand side, we have
\[
    \|\bm \Delta^{0}\|_{\mathrm F}
    \le
    \frac{1}{10}\sqrt{\sigma_{\min}} \le \frac{1}{10}\|\bm A^\star\|.
\]
Using triangular inequality, this further implies that 
\[
    \frac{1}{2}\|\bm A^\star\| \le \|\bm A^0\|\le 2\|\bm A^\star\|.
\]

It remains to show that the projection step does not increase the
initialization error. Recall that
\[
    \bar{\bm F}^{0} = \bm F^{\star}(\bm H^{0})^{\top}
    =
    \begin{bmatrix}
        \bar{\bm A}^{0}\\
        \bar{\bm B}^{0}
    \end{bmatrix}
\]
is the optimally aligned ground-truth factor. Similar to the argument in \eqref{eq:proj_check_A} and \eqref{eq:proj_check_B}, we have
$\bar{\bm F}^{0}\in\mathcal C$. 

Since $\bm F^{1}$ is the Euclidean projection of $\bm F^{0}$ onto
$\mathcal C$, and since $\bar{\bm F}^{0}\in\mathcal C$, we have
\[
    \|\bm F^{1}-\bar{\bm F}^{0}\|_{\mathrm F}
    \le
    \|\bm F^{0}-\bar{\bm F}^{0}\|_{\mathrm F}
    =
    \|\bm \Delta^{0}\|_{\mathrm F}.
\]
Moreover, $\bar{\bm F}^{1}$ is the optimally aligned ground truth for
$\bm F^{1}$, so
\[
    \|\bm \Delta^{1}\|_{\mathrm F}
    =
    \|\bm F^{1}-\bar{\bm F}^{1}\|_{\mathrm F}
    \le
    \|\bm F^{1}-\bar{\bm F}^{0}\|_{\mathrm F}
    \le
    \|\bm{\Delta}^{0}\|_{\mathrm F}.
\]
This combined with \eqref{eq:spec_init_0} proves the lemma.

\subsection{Proof of Lemma~\ref{lem:spec_init_Z} \label{subsec:Proof-of-Spec-init-Z}}

By the triangle inequality
\[
    \|\widetilde{\bm{Z}}-\bm{Z}^{\star}\|\le\left\Vert \bm{X}^{\top}\left[\frac{1}{p}P_{\Omega}(\bm{L}^{\star})-\bm{L}^{\star}\right]\bm{Y}\right\Vert +\left\Vert \frac{1}{p}\bm{X}^{\top}P_{\Omega}(\bm{E})\bm{Y}\right\Vert
\]
For the first part, decompose the matrix by each observed entry to
reach
\[
    \bm{X}^{\top}\left[\frac{1}{p}P_{\Omega}(\bm{L}^{\star})-\bm{L}^{\star}\right]\bm{Y}=\sum_{i,j}\underbrace{\left(\frac{1}{p}\delta_{ij}-1\right)\left[\bm{L}^{\star}\right]_{ij}\bm{X}^{\top}\bm{e}_{i}\bm{e}_{j}^{\top}\bm{Y}}_{\eqqcolon\bm{Z}_{ij}}
\]
where $\delta_{ij}\coloneqq\mathds{1}_{(i,j)\in\Omega}$ and $\bm{e}_{i}$
is the standard unit vector with 1 at $i$-th coordinate and 0 elsewhere.
It is easy to see that $\bm{E}\left[\bm{Z}_{ij}\right]=\bm{0}.$ To
invoke matrix Bernstein's inequality we also want to control $\|\bm{Z}_{ij}\|$,
$\|\sum_{i,j}\bm{E}[\bm{Z}_{ij}\bm{Z}_{ij}^{\top}]\|$, and $\|\sum_{i,j}\bm{E}[\bm{Z}_{ij}^{\top}\bm{Z}_{ij}]\|$.

\paragraph*{Controlling $\|\bm{Z}_{ij}\|.$}

First observe that $|\frac{1}{p}\delta_{ij}-1|\le1/p$, then by the
incoherence assumption we know
\[
    \|\bm{X}^{\top}\bm{e}_{i}\|=\|\left(\bm{X}^{\top}\right)_{\cdot i}\|=\|\bm{X}_{i\cdot}\|\le\|\bm{X}\|_{2,\infty}\le\sqrt{\frac{\mu_{1}a_{1}}{n_{1}}}
\]
Likewise $\|\bm{e}_{j}^{\top}\bm{Y}\|\le\sqrt{\frac{\mu_{2}a_{2}}{n_{2}}}$.
For $\left[\bm{L}^{\star}\right]_{ij}$, using the fundamental fact
that $\|\bm{A}\bm{B}\bm{C}\|_{\infty}\le\|\bm{A}\|_{2,\infty}\|\bm{C}\|_{\infty,2}\|\bm{C}\|$
for any matrix $\bm{A},\bm{B},\bm{C}$, we have
\begin{equation}
    \left[\bm{L}^{\star}\right]_{ij}\le\left\Vert \bm{L}^{\star}\right\Vert _{\infty}\le\|\bm{X}\bm{U}^{\star}\|_{2,\infty}\|\bm{V}^{\star}\bm{Y}\|_{2,\infty}\|\bm{\Sigma}^{\star}\|\le\frac{\mu_{0}r}{\sqrt{n_{1}n_{2}}}\|\bm{\Sigma}^{\star}\|\label{eq:M_star_inf}
\end{equation}
Combine these to obtain
\begin{align*}
    \|\bm{Z}_{ij}\| & \le\frac{1}{p}\|\bm{X}^{\top}\bm{e}_{i}\|\|\bm{e}_{j}^{\top}\bm{Y}\|\left|\left[\bm{L}^{\star}\right]_{ij}\right|                                                                                                                  \\
                    & =\frac{1}{p}\sqrt{\frac{\mu_{1}a_{1}}{n_{1}}}\sqrt{\frac{\mu_{2}a_{2}}{n_{2}}}\frac{\mu_{0}r}{\sqrt{n_{1}n_{2}}}\|\bm{\Sigma}^{\star}\|=\frac{\sqrt{\mu_{0}^{2}r^{2}\mu_{1}a_{1}\mu_{2}a_{2}}}{n_{1}n_{2}p}\|\bm{\Sigma}^{\star}\|
\end{align*}

\paragraph{Controlling $\|\sum_{i,j}\bm{E}[\bm{Z}_{ij}\bm{Z}_{ij}^{\top}]\|,\|\sum_{i,j}\bm{E}[\bm{Z}_{ij}^{\top}\bm{Z}_{ij}]\|.$ }

\begin{align*}
    \sum_{i,j}\bm{E}[\bm{Z}_{ij}\bm{Z}_{ij}^{\top}] & =\sum_{i,j}\bm{E}\left[\left(\frac{1}{p}\delta_{ij}-1\right)^{2}\right]\left[\bm{L}^{\star}\right]_{ij}^{2}\bm{X}^{\top}\bm{e}_{i}\bm{e}_{j}^{\top}\bm{Y}\bm{Y}^{\top}\bm{e}_{j}\bm{e}_{i}^{\top}\bm{X} \\
                                                    & =\frac{1-p}{p}\cdot\sum_{i,j}\left[\bm{L}^{\star}\right]_{ij}^{2}\bm{X}^{\top}\bm{e}_{i}\bm{e}_{j}^{\top}\bm{Y}\bm{Y}^{\top}\bm{e}_{j}\bm{e}_{i}^{\top}\bm{X}                                           \\
                                                    & \overset{\text{(i)}}{\preccurlyeq}\frac{1}{p}\cdot\frac{\mu_{2}a_{2}}{n_{2}}\sum_{i,j}\left[\bm{L}^{\star}\right]_{ij}^{2}\bm{X}^{\top}\bm{e}_{i}\bm{e}_{i}^{\top}\bm{X}                                \\
                                                    & \preccurlyeq\frac{1}{p}\frac{\mu_{2}a_{2}}{n_{2}}\bm{X}^{\top}\sum_{i}\sum_{j}\left[\bm{L}^{\star}\right]_{ij}^{2}\bm{e}_{i}\bm{e}_{i}^{\top}\bm{X}.
\end{align*}
where (i) uses $\|\bm{e}_{j}^{\top}\bm{Y}\|\le\sqrt{\frac{\mu_{2}a_{2}}{n_{2}}}$.
As $\sum_{j}\left[\bm{L}^{\star}\right]_{ij}^{2}=\|\bm{L}_{i\cdot}^{\star}\|^{2}\le\|\bm{U}_{i\cdot}^{\star}\|^{2}\|\bm{\Sigma}^{\star}\|^{2}$,
the last line can be rewritten as
\[
    \sum_{i,j}\bm{E}[\bm{Z}_{ij}\bm{Z}_{ij}^{\top}]\preccurlyeq\frac{1}{p}\frac{\mu_{2}a_{2}}{n_{2}}\frac{\mu_{0}r}{n_{1}}\|\bm{\Sigma}^{\star}\|^{2}\bm{X}^{\top}\bm{I}_{n}\bm{X}
\]
Then
\[
    \|\bm{E}[\bm{Z}_{ij}\bm{Z}_{ij}^{\top}]\|\le\frac{1}{p}\frac{\mu_{2}b_{2}}{n_{2}}\frac{\mu_{0}r}{n_{1}}\|\bm{\Sigma}^{\star}\|^{2}
\]
Similarly we may have
\[
    \|\bm{E}[\bm{Z}_{ij}^{\top}\bm{Z}_{ij}]\|\le\frac{1}{p}\frac{\mu_{1}a_{1}}{n_{1}}\frac{\mu_{0}r}{n_{2}}\|\bm{\Sigma}^{\star}\|^{2}
\]
Now we are ready to invoke matrix Bernstein's inequality, which gives
us that with probability at least $1-n^{-10}$,
\begin{align*}
    \left\Vert \bm{X}^{\top}\left[\frac{1}{p}P_{\Omega}(\bm{L}^{\star})-\bm{L}^{\star}\right]\bm{Y}\right\Vert & \le C_{\mathrm{spectral}}\left(\sqrt{\frac{\mu_{0}r\left(\mu_{1}a_{1}\vee\mu_{2}a_{2}\right)}{n_{1}n_{2}p}\|\bm{\Sigma}^{\star}\|^{2}\log n}+\frac{\sqrt{\mu_{0}^{2}r^{2}\mu_{1}a_{1}\mu_{2}a_{2}}}{n_{1}n_{2}p}\|\bm{\Sigma}^{\star}\|\log n\right),
\end{align*}
where $C_{\mathrm{noise}}>0$ is some absolute constant. As the sample
complexity satisfies
\[n_{1}n_{2}p\ge C_{\mathrm{sample}}\mu_{0}r\left(\mu_{1}a_{1}\wedge\mu_{2}a_{2}\right)\log n,\]
the first term dominates, and
\[
    \|\bm{X}^{\top}\left[\frac{1}{p}P_{\Omega}(\bm{L}^{\star})-\bm{L}^{\star}\right]\bm{Y}\|\le2C_{\mathrm{spectral}}\|\bm{Z}^{\star}\|\sqrt{\frac{\mu_{0}r\left(\mu_{1}a_{1}\vee\mu_{2}a_{2}\right)}{n_{1}n_{2}p}\log n}
\]
Combine this with the definition of $\gamma_{E}\coloneqq\left\Vert \frac{1}{p}\bm{X}^{\top}P_{\Omega}(\bm{E})\bm{Y}\right\Vert $
and rename the constant, we have
\[
    \|\widetilde{\bm{Z}}-\bm{Z}^{\star}\|\le C_{\mathrm{spectral}}\|\bm{Z}^{\star}\|\sqrt{\frac{\mu_{0}r\left(\mu_{1}a_{1}\vee\mu_{2}a_{2}\right)}{n_{1}n_{2}p}\log n}+\gamma_{E}
\]
for some absolute constant $C_{\mathrm{spectral}}$.

%% file: sections/12-proof-of-regularity-condition.tex
\section{Proof of regularity condition \label{sec:Proof-of-regularity}}

First we present the following immediate result for cross-referencing
purposes. The proof is omitted for brevity.

\begin{lemma}Suppose the assumptions in Lemma~\ref{lem:rc} hold,
    then

    \begin{subequations}
        \begin{equation}
            \max\left\{\|\bm{A}\|_{\mathrm{F}},\|\bm{B}\|_{\mathrm{F}}\right\}\le2\|\bm{A}^{\star}\|_{\mathrm{F}}=2\sqrt{\sigma_{\max}}\label{eq:A_size}
        \end{equation}

        \begin{equation}
            \|\bm{\Delta}\|_{\mathrm{F}}=\left\Vert \bm{F}-\bar{\bm{F}}\right\Vert _{\mathrm{F}}\le\frac{1}{10}\sqrt{\sigma_{\min}}\label{eq:Delta_size}
        \end{equation}

        \begin{equation}
            \max\left\{\|\bm{X}\bm{A}\|_{2,\infty},\|\bm{Y}\bm{B}\|_{2,\infty}\right\}\le4\sqrt{\frac{\mu_{0}r}{n_{\min}}}\|\bm{A}^{\star}\|\label{eq:XA_2_infty}
        \end{equation}
        \begin{equation}
            \max\left\{\|\bm{X}\bm{\Delta}_{\bm{A}}\|_{2,\infty},\|\bm{Y}\bm{\Delta}_{\bm{B}}\|_{2,\infty}\right\}\le5\sqrt{\frac{\mu_{0}r}{n_{\min}}}\|\bm{A}^{\star}\|\label{eq:XDeltaA_2_infty}
        \end{equation}

    \end{subequations}

\end{lemma}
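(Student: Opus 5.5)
The plan is to get all four bounds directly from the decomposition $\bm F=\bar{\bm F}+\bm\Delta$, the basin assumptions \eqref{eq:rsc_assumption}, and the incoherence of $\bm L^\star$. No probabilistic argument is needed. Two of the four items are restatements: \eqref{eq:Delta_size} is exactly \eqref{eq:rsc_assumption_distance}, and \eqref{eq:XA_2_infty} is just \eqref{eq:rsc_assumption_incoherence_A} and \eqref{eq:rsc_assumption_incoherence_B} combined with a maximum. So the work lies in \eqref{eq:A_size} and \eqref{eq:XDeltaA_2_infty}.

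For \eqref{eq:A_size}, I would write $\bm A=\bar{\bm A}+\bm\Delta_A$ with $\bar{\bm A}=\bm A^\star\bm H^\top$. Since $\bm H$ is orthogonal, $\|\bar{\bm A}\|_{\mathrm F}=\|\bm A^\star\|_{\mathrm F}$. The triangle inequality then gives
\[
\|\bm A\|_{\mathrm F}\le\|\bm A^\star\|_{\mathrm F}+\|\bm\Delta\|_{\mathrm F}\le\|\bm A^\star\|_{\mathrm F}+\tfrac{1}{10}\sqrt{\sigma_{\min}}\le 2\|\bm A^\star\|_{\mathrm F},
\]
using $\sqrt{\sigma_{\min}}\le\|\bm A^\star\|\le\|\bm A^\star\|_{\mathrm F}$. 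The same argument applies to $\bm B$, because $\|\bm B^\star\|_{\mathrm F}=\|\bm A^\star\|_{\mathrm F}$ (both equal $\sqrt{\mathrm{tr}\,\bm\Sigma^\star}$).

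The stated identity $\|\bm A^\star\|_{\mathrm F}=\sqrt{\sigma_{\max}}$ is only correct for the spectral norm. I would therefore also record the spectral version: $\|\bm A\|\le\|\bm A^\star\|+\|\bm\Delta\|\le\sqrt{\sigma_{\max}}+\tfrac1{10}\sqrt{\sigma_{\min}}\le 2\sqrt{\sigma_{\max}}$. I would point out that this spectral form is what is later used as ``$2\sqrt{\sigma_{\max}}$''.

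For \eqref{eq:XDeltaA_2_infty}, I would write $\bm X\bm\Delta_A=\bm X\bm A-\bm X\bm A^\star\bm H^\top$ and bound the row norms of each term. The first term is controlled by \eqref{eq:rsc_assumption_incoherence_A}, giving $4\sqrt{\mu_0r/n_{\min}}\|\bm A^\star\|$. For the second term, right multiplication by $\bm H^\top$ preserves row norms, and $\bm X\bm A^\star=\bm X\bm U^\star\bm\Sigma^{\star1/2}$.

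The one step needing a small argument is identifying the compact SVD of $\bm L^\star$. We have $\bm L^\star=(\bm X\bm U^\star)\bm\Sigma^\star(\bm Y\bm V^\star)^\top$. Since $\bm X$ and $\bm Y$ have orthonormal columns, $\bm X\bm U^\star$ and $\bm Y\bm V^\star$ also have orthonormal columns, so this expression is a compact SVD of $\bm L^\star$. The $\mu_0$-incoherence of $\bm L^\star$ then yields $\|\bm X\bm U^\star\|_{2,\infty}\le\sqrt{\mu_0r/n_1}\le\sqrt{\mu_0r/n_{\min}}$, up to rotation ambiguity within repeated singular values, which does not affect row norms. Hence
\[
\|\bm X\bm A^\star\|_{2,\infty}\le\sqrt{\mu_0r/n_{\min}}\,\|\bm\Sigma^{\star1/2}\|=\sqrt{\mu_0r/n_{\min}}\,\|\bm A^\star\|.
\]
This is the same bound used in \eqref{eq:proj_check_A}. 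Adding the two contributions gives the constant $4+1=5$. The argument for $\bm Y\bm\Delta_B$ is symmetric, using $\bm Y\bm V^\star$, $n_2$, and $\|\bm B^\star\|=\|\bm A^\star\|$.
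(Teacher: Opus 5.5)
Your argument is correct and is the immediate argument the paper has in mind. The paper omits this proof, but its key step is the one it uses in \eqref{eq:proj_check_A}: since $\bm X\bm U^\star$ is a left singular basis of $\bm L^\star$, we get $\|\bm X\bar{\bm A}\|_{2,\infty}=\|\bm X\bm A^\star\|_{2,\infty}\le\sqrt{\mu_0 r/n_{\min}}\,\|\bm A^\star\|$; the rest is the triangle inequality plus restating \eqref{eq:rsc_assumption}. You are also right that in general $\|\bm A^\star\|_{\mathrm F}=\sqrt{\operatorname{tr}\bm\Sigma^\star}\neq\sqrt{\sigma_{\max}}$, and recording both $\|\bm A\|_{\mathrm F}\le 2\|\bm A^\star\|_{\mathrm F}\le 2\sqrt{r\sigma_{\max}}$ and $\|\bm A\|\le 2\sqrt{\sigma_{\max}}$ is the right fix: the later noise-term bounds use the Frobenius version, while the smoothness bound on $\|\bm A^\top\bm V_{\bm A}-\bm B^\top\bm V_{\bm B}\|_{\mathrm F}$ uses the spectral one.
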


Furthermore we define the tangent space $T$ as
\[
    T=\left\{ \bm{X}\left(\bm{U}^{\star}\bm{D}^{\top}+\bm{C}\bm{V}^{\star\top}\right)\bm{Y}^{\top}:\bm{C}\in\mathbb{R}^{a_{1}\times r},\bm{C}\in\mathbb{R}^{a_{2}\times r}\right\}
\]
which is used in Lemma~\ref{lem:PT_POmega}.

\subsection{Proof of Lemma~\ref{lem:rc}}

We first assume the following two inequalities. The first one is the
local curvature condition:
\begin{equation}
    \left\langle \nabla f\left(\begin{bmatrix}\bm{A} \\
        \bm{B}
    \end{bmatrix}\right),\begin{bmatrix}\bm{\Delta}_{\bm{A}} \\
        \bm{\Delta}_{\bm{B}}
    \end{bmatrix}\right\rangle \ge\frac{1}{20}\sigma_{\min}\left\Vert \bm{\Delta}\right\Vert _{\mathrm{F}}^{2}+\frac{1}{8}\left\Vert \bar{\bm{A}}^{\top}\bm{\Delta}_{\bm{A}}-\bar{\bm{B}}^{\top}\bm{\Delta}_{\bm{B}}\right\Vert _{\mathrm{F}}^{2}-20r\kappa\gamma_{E}^{2}.\label{eq:local_curv}
\end{equation}
The second one is the local smoothness condition
\begin{equation}
    \left\Vert \nabla f(\bm{F})\right\Vert _{\mathrm{F}}^{2}\le22000\left(\frac{n}{n_{\min}}\right)^{2}\mu_{0}^{2}r^{2}\sigma_{\max}^{2}\left\Vert \bm{\Delta}\right\Vert _{\mathrm{F}}^{2}+24\sigma_{\max}\left\Vert \bar{\bm{A}}^{\top}\bm{\Delta}_{\bm{A}}-\bar{\bm{B}}^{\top}\bm{\Delta}_{\bm{B}}\right\Vert _{\mathrm{F}}^{2}+4r\sigma_{\max}\gamma_{E}^{2}\label{eq:local_smoo}
\end{equation}

Together they give us the regularity condition:

\begin{theorem}

    \[
        \left\langle \nabla f\left(\begin{bmatrix}\bm{A} \\
            \bm{B}
        \end{bmatrix}\right),\begin{bmatrix}\bm{\Delta}_{\bm{A}} \\
            \bm{\Delta}_{\bm{B}}
        \end{bmatrix}\right\rangle \ge\frac{1}{40}\sigma_{\min}\left\Vert \bm{\Delta}\right\Vert _{\mathrm{F}}^{2}+\frac{1}{1280000\mu_{0}^{2}r^{2}\kappa\sigma_{\max}}\left\Vert \nabla f(\bm{F})\right\Vert _{\mathrm{F}}^{2}-21r\kappa\gamma_{E}^{2}
    \]
\end{theorem}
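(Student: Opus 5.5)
The plan is to derive the unnamed theorem directly from the two assumed inequalities, the local curvature bound \eqref{eq:local_curv} and the local smoothness bound \eqref{eq:local_smoo}. The argument is a purely algebraic combination of the two. Abbreviate
\[
S\coloneqq\|\bar{\bm{A}}^{\top}\bm{\Delta}_{\bm{A}}-\bar{\bm{B}}^{\top}\bm{\Delta}_{\bm{B}}\|_{\mathrm{F}}^{2}
\qquad\text{and}\qquad
K\coloneqq22000\left(\tfrac{n}{n_{\min}}\right)^{2}\mu_{0}^{2}r^{2}\sigma_{\max}^{2}.
\]
The idea is to split the curvature bound so that half of the $\sigma_{\min}\|\bm{\Delta}\|_{\mathrm{F}}^{2}$ term and the whole balancing term $\tfrac18 S$ are "spent" to dominate a small multiple of $\|\nabla f(\bm F)\|_{\mathrm{F}}^{2}$.

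Concretely, I would write \eqref{eq:local_curv} as
\[
\langle\nabla f,\bm\Delta\rangle\ge\frac{\sigma_{\min}}{40}\|\bm\Delta\|_{\mathrm F}^2+\Big(\frac{\sigma_{\min}}{40}\|\bm\Delta\|_{\mathrm F}^2+\frac18 S\Big)-20r\kappa\gamma_E^2.
\]
Then I multiply \eqref{eq:local_smoo} by a constant $c>0$, chosen so that $cK\le\sigma_{\min}/40$ and $24c\sigma_{\max}\le 1/8$. With that choice,
\[
c\|\nabla f\|_{\mathrm F}^2\le\frac{\sigma_{\min}}{40}\|\bm\Delta\|_{\mathrm F}^2+\frac18S+4cr\sigma_{\max}\gamma_E^2 .
\]
The binding constraint is $c\le\sigma_{\min}/(40K)=1/(880000(n/n_{\min})^2\mu_0^2r^2\kappa\sigma_{\max})$. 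Since $\kappa\ge1$ and $\mu_0,r\ge1$, this also implies $c\le 1/(192\sigma_{\max})$, so the second constraint holds automatically.

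Substituting gives $\langle\nabla f,\bm\Delta\rangle\ge\frac{\sigma_{\min}}{40}\|\bm\Delta\|_{\mathrm F}^2+c\|\nabla f\|_{\mathrm F}^2-(20r\kappa+4cr\sigma_{\max})\gamma_E^2$. For the noise term, $4c\sigma_{\max}\le 4/880000\le\kappa$, so the total is at most $21r\kappa\gamma_E^2$.

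The stated theorem uses $1/(1280000\mu_0^2r^2\kappa\sigma_{\max})$, with the $(n/n_{\min})^2$ factor absent from the display. The constant in Lemma~\ref{lem:rc} is $960000(n/n_{\min})^2\mu_0^2r^2\kappa\sigma_{\max}$, which is smaller than $880000$ times that factor only by about ten percent. So I would take $c=1/(960000(n/n_{\min})^2\mu_0^2r^2\kappa\sigma_{\max})$ and verify $cK\le\sigma_{\min}/40$. This check reads $22000\cdot 40/960000<1$, i.e.\ $880000<960000$, which holds. Any smaller $c$ also works, so the displayed $1280000$ version follows whenever $(n/n_{\min})^2$ is absorbed, as in the square case. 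Reading off $\alpha=40/\sigma_{\min}$, $\beta=1/c$, $\gamma=21r\kappa\gamma_E^2$ then gives Lemma~\ref{lem:rc}.

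The real obstacle is not this combination but establishing \eqref{eq:local_curv} itself. There one must lower bound $\frac1p\|\mathcal P_\Omega(\bm X(\bm{AB}^\top-\bm Z^\star)\bm Y^\top)\|_{\mathrm F}^2$ at the reduced sample size. I would attempt it via a restricted isometry on the tangent space $T$ (Lemma~\ref{lem:PT_POmega}, proven by matrix Bernstein with the $\mu_1a_1$ row bounds), plus an incoherence-based bound on the second-order term $\bm X\bm\Delta_A\bm\Delta_B^\top\bm Y^\top$ using \eqref{eq:XDeltaA_2_infty}. I would bound the noise cross term by $\gamma_E\|\bm\Delta\|_{*}$-type estimates combined with Young's inequality, which produces the $r\kappa\gamma_E^2$ terms.
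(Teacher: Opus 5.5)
Your proof is correct and follows the paper's route: both spend half of the $\frac{1}{20}\sigma_{\min}\|\bm{\Delta}\|_{\mathrm{F}}^{2}$ term, together with the nonnegative balancing term $\frac18\|\bar{\bm{A}}^{\top}\bm{\Delta}_{\bm{A}}-\bar{\bm{B}}^{\top}\bm{\Delta}_{\bm{B}}\|_{\mathrm{F}}^{2}$, to dominate a small multiple of the smoothness bound \eqref{eq:local_smoo}, and your bookkeeping (any $c\le 1/(880000(n/n_{\min})^{2}\mu_0^2r^2\kappa\sigma_{\max})$) is cleaner than the paper's chain of constants. You are also right that what the argument actually delivers is the coefficient $1/(960000(n/n_{\min})^{2}\mu_0^2r^2\kappa\sigma_{\max})$ used in Lemma~\ref{lem:rc}, so the displayed $1/(1280000\mu_0^2r^2\kappa\sigma_{\max})$ is covered by your admissible range only when $(n/n_{\min})^2\le 16/11$, in particular in the square case.
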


\begin{proof}

    Since $\left\Vert \bar{\bm{A}}^{\top}\bm{\Delta}_{\bm{A}}-\bar{\bm{B}}^{\top}\bm{\Delta}_{\bm{B}}\right\Vert _{\mathrm{F}}^{2}$
    is non-negative,
    \begin{align*}
        \frac{1}{8}\left\Vert \bar{\bm{A}}^{\top}\bm{\Delta}_{\bm{A}}-\bar{\bm{B}}^{\top}\bm{\Delta}_{\bm{B}}\right\Vert _{\mathrm{F}}^{2} & \ge\frac{1}{5000\left(n/n_{\min}\right)^{2}64\mu_{0}^{2}r^{2}}\cdot\frac{1}{8}\left\Vert \bar{\bm{A}}^{\top}\bm{\Delta}_{\bm{A}}-\bar{\bm{B}}^{\top}\bm{\Delta}_{\bm{B}}\right\Vert _{\mathrm{F}}^{2}
    \end{align*}
    By \eqref{eq:local_smoo}
    \begin{align*}
        \frac{1}{8}\left\Vert \bar{\bm{A}}^{\top}\bm{\Delta}_{\bm{A}}-\bar{\bm{B}}^{\top}\bm{\Delta}_{\bm{B}}\right\Vert _{\mathrm{F}}^{2} & \ge\frac{1}{192\cdot5000\left(n/n_{\min}\right)^{2}64\mu_{0}^{2}r^{2}\kappa\sigma_{\max}}\left\Vert \nabla f(\bm{F})\right\Vert _{\mathrm{F}}^{2}-\frac{22000}{192\cdot5000}\sigma_{\min}\left\Vert \bm{\Delta}\right\Vert _{\mathrm{F}}^{2} \\
                                                                                                                                           & \quad-\frac{r}{48\cdot5000\left(n/n_{\min}\right)^{2}\mu_{0}^{2}r^{2}\kappa}\gamma_{E}^{2}                                                                                                                                                   \\
                                                                                                                                           & \ge\frac{1}{960000\left(n/n_{\min}\right)^{2}\mu_{0}^{2}r^{2}\kappa\sigma_{\max}}\left\Vert \nabla f(\bm{F})\right\Vert _{\mathrm{F}}^{2}-\frac{1}{40}\sigma_{\min}\left\Vert \bm{\Delta}\right\Vert _{\mathrm{F}}^{2}-r\gamma_{\bm{E}}^{2}.
    \end{align*}
    Combine this with \eqref{eq:local_curv} to reach
    \begin{align*}
        \left\langle \nabla f\left(\begin{bmatrix}\bm{A} \\
                                       \bm{B}
                                   \end{bmatrix}\right),\begin{bmatrix}\bm{\Delta}_{\bm{A}} \\
                                                            \bm{\Delta}_{\bm{B}}
                                                        \end{bmatrix}\right\rangle & \ge\frac{1}{20}\sigma_{\min}\left\Vert \bm{\Delta}\right\Vert _{\mathrm{F}}^{2}+\frac{1}{8}\left\Vert \bar{\bm{A}}^{\top}\bm{\Delta}_{\bm{A}}-\bar{\bm{B}}^{\top}\bm{\Delta}_{\bm{B}}\right\Vert _{\mathrm{F}}^{2}-20r\kappa\gamma_{E}^{2} \\
                                                                    & \ge\frac{1}{40}\sigma_{\min}\left\Vert \bm{\Delta}\right\Vert _{\mathrm{F}}^{2}+\frac{1}{960000\left(n/n_{\min}\right)^{2}\mu_{0}^{2}r^{2}\kappa\sigma_{\max}}\left\Vert \nabla f(\bm{F})\right\Vert _{\mathrm{F}}^{2}-21r\kappa\gamma_{E}^{2}
    \end{align*}

\end{proof}

\subsection{Local curvature condition~\texorpdfstring{\eqref{eq:local_curv}}{equation}}

We first start with the noiseless case. Let $f_{\mathrm{clean}}$
be defined as $f_{\mathrm{clean}}(\bm{A},\bm{B})\coloneqq\frac{1}{2p}\|\mathcal{P}_{\Omega}(\bm{X}\bm{A}\bm{B}^{\top}\bm{Y}^{\top}-\bm{L}^{\star})\|_{\mathrm{F}}^{2}+\frac{1}{16}\|\bm{A}^{\top}\bm{A}-\bm{B}^{\top}\bm{B}\|_{\mathrm{F}}^{2}$,
then

\begin{align}
    \left\langle \nabla f_{\mathrm{clean}}\left(\begin{bmatrix}\bm{A} \\
                                                    \bm{B}
                                                \end{bmatrix}\right),\begin{bmatrix}\bm{\Delta}_{\bm{A}} \\
                                                                         \bm{\Delta}_{\bm{B}}
                                                                     \end{bmatrix}\right\rangle & =\underbrace{\frac{1}{p}\left\langle \bm{X}^{\top}\mathcal{P}_{\Omega}(\bm{X}\bm{A}\bm{B}^{\top}\bm{Y}^{\top}-\bm{L}^{\star})\bm{Y}\bm{B},\bm{\Delta}_{\bm{A}}\right\rangle +\frac{1}{p}\left\langle \bm{Y}^{\top}\mathcal{P}_{\Omega}(\bm{X}\bm{A}\bm{B}^{\top}\bm{Y}^{\top}-\bm{L}^{\star})^{\top}\bm{X}\bm{A},\bm{\Delta}_{\bm{B}}\right\rangle }_{\alpha_{1}}\label{eq:curvature_decomp} \\
                                                                         & \quad+\underbrace{\frac{1}{4}\left\langle \bm{A}\left(\bm{A}^{\top}\bm{A}-\bm{B}^{\top}\bm{B}\right),\bm{\Delta}_{\bm{A}}\right\rangle -\frac{1}{4}\left\langle \bm{B}\left(\bm{A}^{\top}\bm{A}-\bm{B}^{\top}\bm{B}\right),\bm{\Delta}_{\bm{B}}\right\rangle }_{\alpha_{2}}.\nonumber
\end{align}
We control $\alpha_{1}$ and $\alpha_{2}$ separately.

\paragraph{Controlling $\alpha_{1}$.}

We reorder the terms in the inner products and decompose them with
$\bm{A}=\bar{\bm{A}}+\bm{\Delta}_{\bm{A}},\bm{B}=\bar{\bm{B}}+\bm{\Delta}_{\bm{B}}$
to have

\begin{align*}
    \alpha_{1} & =\frac{1}{p}\left\langle \bm{X}^{\top}\mathcal{P}_{\Omega}(\bm{X}\bm{A}\bm{B}^{\top}\bm{Y}^{\top}-\bm{L}^{\star})\bm{Y}\bm{B},\bm{\Delta}_{\bm{A}}\right\rangle +\frac{1}{p}\left\langle \bm{Y}^{\top}\mathcal{P}_{\Omega}(\bm{X}\bm{A}\bm{B}^{\top}\bm{Y}^{\top}-\bm{L}^{\star})^{\top}\bm{X}\bm{A},\bm{\Delta}_{\bm{B}}\right\rangle                                                             \\
               & =\frac{1}{p}\left\langle \mathcal{P}_{\Omega}(\bm{X}\bm{A}\bm{B}^{\top}\bm{Y}^{\top}-\bm{L}^{\star}),\bm{X}\bm{\Delta}_{\bm{A}}\bm{B}^{\top}\bm{Y}^{\top}\right\rangle +\frac{1}{p}\left\langle \mathcal{P}_{\Omega}(\bm{X}\bm{A}\bm{B}^{\top}\bm{Y}^{\top}-\bm{L}^{\star}),\bm{X}\bm{A}\bm{\Delta}_{\bm{B}}^{\top}\bm{Y}^{\top}\right\rangle                                                      \\
               & =\frac{1}{p}\left\langle \mathcal{P}_{\Omega}\left(\bm{X}\bar{\bm{A}}\bm{\Delta}_{\bm{B}}^{\top}\bm{Y}^{\top}+\bm{X}\bm{\Delta}_{\bm{A}}\bar{\bm{B}}^{\top}\bm{Y}^{\top}+\bm{X}\bm{\Delta}_{\bm{A}}\bm{\Delta}_{\bm{B}}^{\top}\bm{Y}^{\top}\right),\bm{X}\bm{\Delta}_{\bm{A}}\bar{\bm{B}}^{\top}\bm{Y}^{\top}+\bm{X}\bm{\Delta}_{\bm{A}}\bm{\Delta}_{\bm{B}}\bm{Y}^{\top}\right\rangle             \\
               & \quad+\frac{1}{p}\left\langle \mathcal{P}_{\Omega}\left(\bm{X}\bar{\bm{A}}\bm{\Delta}_{\bm{B}}^{\top}\bm{Y}^{\top}+\bm{X}\bm{\Delta}_{\bm{A}}\bar{\bm{B}}^{\top}\bm{Y}^{\top}+\bm{X}\bm{\Delta}_{\bm{A}}\bm{\Delta}_{\bm{B}}^{\top}\bm{Y}^{\top}\right),\bm{X}\bar{\bm{A}}\bm{\Delta}_{\bm{B}}^{\top}\bm{Y}^{\top}+\bm{X}\bm{\Delta}_{\bm{A}}\bm{\Delta}_{\bm{B}}^{\top}\bm{Y}^{\top}\right\rangle
\end{align*}
Collecting the terms we have
\begin{align}
    \alpha_{1} & =\frac{1}{p}\left\Vert \mathcal{P}_{\Omega}\left(\bm{X}\bar{\bm{A}}\bm{\Delta}_{\bm{B}}^{\top}\bm{Y}^{\top}+\bm{X}\bm{\Delta}_{\bm{A}}\bar{\bm{B}}^{\top}\bm{Y}^{\top}\right)\right\Vert _{\mathrm{F}}^{2}+\frac{2}{p}\left\Vert \mathcal{P}_{\Omega}\left(\bm{X}\bm{\Delta}_{\bm{A}}\bm{\Delta}_{\bm{B}}^{\top}\bm{Y}^{\top}\right)\right\Vert _{\mathrm{F}}^{2}\label{eq:alpha1_lb}                   \\
               & \quad+\frac{3}{p}\left\langle \mathcal{P}_{\Omega}\left(\bm{X}\bar{\bm{A}}\bm{\Delta}_{\bm{B}}^{\top}\bm{Y}^{\top}+\bm{X}\bm{\Delta}_{\bm{A}}\bar{\bm{B}}^{\top}\bm{Y}^{\top}\right),\bm{X}\bm{\Delta}_{\bm{A}}\bm{\Delta}_{\bm{B}}^{\top}\bm{Y}^{\top}\right\rangle \nonumber                                                                                                                          \\
               & \ge\underbrace{\frac{1}{2p}\left\Vert \mathcal{P}_{\Omega}\left(\bm{X}\bar{\bm{A}}\bm{\Delta}_{\bm{B}}^{\top}\bm{Y}^{\top}+\bm{X}\bm{\Delta}_{\bm{A}}\bar{\bm{B}}^{\top}\bm{Y}^{\top}\right)\right\Vert _{\mathrm{F}}^{2}}_{\beta_{1}}-\frac{5}{2p}\left\Vert \mathcal{P}_{\Omega}\left(\bm{X}\bm{\Delta}_{\bm{A}}\bm{\Delta}_{\bm{B}}^{\top}\bm{Y}^{\top}\right)\right\Vert _{\mathrm{F}}^{2}\nonumber
\end{align}
where the last line comes from Cauchy-Schwarz inequality and the fact
that
\[
    a^{2}+2b^{2}-3ab=\frac{1}{2}a^{2}-\frac{5}{2}b^{2}+\left(\frac{1}{\sqrt{2}}a-\frac{3}{\sqrt{2}}b\right)^{2}\ge\frac{1}{2}a^{2}-\frac{5}{2}b^{2}
\]
for any $a,b\in\mathbb{R}$.

For the second term $\frac{5}{2p}\left\Vert \mathcal{P}_{\Omega}\left(\bm{X}\bm{\Delta}_{\bm{A}}\bm{\Delta}_{\bm{B}}^{\top}\bm{Y}^{\top}\right)\right\Vert _{\mathrm{F}}^{2}$,
apply Lemma~\ref{lem:P_Omega_norm} to have
\begin{align*}
    \frac{5}{2p}\left\Vert \mathcal{P}_{\Omega}\left(\bm{X}\bm{\Delta}_{\bm{A}}\bm{\Delta}_{\bm{B}}^{\top}\bm{Y}^{\top}\right)\right\Vert _{\mathrm{F}}^{2} & \le200\left(\sqrt{\frac{\mu_{1}a_{1}n_{2}\log n}{n_{1}p}}\|\bm{\Delta}_{\bm{A}}\|_{\mathrm{F}}^{2}\|\bm{Y}\bm{\Delta}_{\bm{B}}\|_{2,\infty}^{2}\wedge\sqrt{\frac{\mu_{2}a_{2}n_{1}\log n}{n_{2}p}}\|\bm{\Delta}_{\bm{B}}\|_{\mathrm{F}}^{2}\|\bm{X}\bm{\Delta}_{\bm{A}}\|_{2,\infty}^{2}\right) \\
                                                                                                                                                            & \quad+\frac{5}{2}\left\Vert \bm{\Delta}_{\bm{A}}\right\Vert _{\mathrm{F}}^{2}\left\Vert \bm{\Delta}_{\bm{B}}\right\Vert _{\mathrm{F}}^{2}                                                                                                                                                       \\
                                                                                                                                                            & \le5000\left(\sqrt{\frac{\mu_{1}a_{1}n_{2}\mu_{0}^{2}r^{2}\log n}{n_{1}n_{\min}^{2}p}}\wedge\sqrt{\frac{\mu_{2}a_{2}n_{1}\mu_{0}^{2}r^{2}\log n}{n_{2}n_{\min}^{2}p}}\right)\sigma_{\max}\|\bm{\Delta}\|_{\mathrm{F}}^{2}+\frac{5}{2}\left\Vert \bm{\Delta}\right\Vert _{\mathrm{F}}^{4}.
\end{align*}
For the last line we use \eqref{eq:XDeltaA_2_infty}. Note that since
$n_{\min}\le n_{2}$, $n_{2}/(n_{1}n_{\min}^{2})\le1/(n_{1}n_{2})$.
Similarly $n_{1}/(n_{2}n_{\min}^{2})\le1/(n_{1}n_{2})$. Then the
last line can be rewritten as
\begin{equation}
    \frac{5}{2p}\left\Vert \mathcal{P}_{\Omega}\left(\bm{X}\bm{\Delta}_{\bm{A}}\bm{\Delta}_{\bm{B}}^{\top}\bm{Y}^{\top}\right)\right\Vert _{\mathrm{F}}^{2}\le5000\left(\sqrt{\frac{\left(\mu_{1}a_{1}\vee\mu_{2}a_{2}\right)\mu_{0}^{2}r^{2}\log n}{n_{1}n_{2}p}}\right)\sigma_{\max}\|\bm{\Delta}\|_{\mathrm{F}}^{2}+\frac{5}{2}\left\Vert \bm{\Delta}\right\Vert _{\mathrm{F}}^{4}\label{eq:Omega_X_Delta_Delta_Y}
\end{equation}
For the term $\beta_{1}$,
\begin{align*}
    \beta_{1} & =\frac{1}{2p}\left\Vert \mathcal{P}_{\Omega}\left(\bm{X}\bar{\bm{A}}\bm{\Delta}_{\bm{B}}^{\top}\bm{Y}^{\top}\right)\right\Vert _{\mathrm{F}}^{2}+\frac{1}{2p}\left\Vert \mathcal{P}_{\Omega}\left(\bm{X}\bm{\Delta}_{\bm{A}}\bar{\bm{B}}^{\top}\bm{Y}^{\top}\right)\right\Vert _{\mathrm{F}}^{2} \\
              & \quad+\frac{1}{p}\left\langle \mathcal{P}_{\Omega}\left(\bm{X}\bar{\bm{A}}\bm{\Delta}_{\bm{B}}^{\top}\bm{Y}^{\top}\right),\mathcal{P}_{\Omega}\left(\bm{X}\bm{\Delta}_{\bm{A}}\bar{\bm{B}}^{\top}\bm{Y}^{\top}\right)\right\rangle
\end{align*}
As both $\bm{X}\bar{\bm{A}}\bm{\Delta}_{\bm{B}}^{\top}\bm{Y}^{\top}$,
$\bm{X}\bm{\Delta}_{\bm{A}}\bar{\bm{B}}^{\top}\bm{Y}^{\top}$ are
in $T$, we can apply Lemma~\ref{lem:PT_POmega} to reach
\begin{align*}
    \beta_{1} & \ge\frac{1}{2}\cdot\left(1-\frac{1}{10}\right)\left(\left\Vert \bm{X}\bar{\bm{A}}\bm{\Delta}_{\bm{B}}^{\top}\bm{Y}^{\top}\right\Vert _{\mathrm{F}}^{2}+\left\Vert \bm{X}\bm{\Delta}_{\bm{A}}\bar{\bm{B}}^{\top}\bm{Y}^{\top}\right\Vert _{\mathrm{F}}^{2}\right)                                                                                                                      \\
              & \quad+\left\langle \bm{X}\bar{\bm{A}}\bm{\Delta}_{\bm{B}}^{\top}\bm{Y}^{\top},\bm{X}\bm{\Delta}_{\bm{A}}\bar{\bm{B}}^{\top}\bm{Y}^{\top}\right\rangle -\frac{1}{10}\left\Vert \bm{X}\bar{\bm{A}}\bm{\Delta}_{\bm{B}}^{\top}\bm{Y}^{\top}\right\Vert _{\mathrm{F}}\left\Vert \bm{X}\bm{\Delta}_{\bm{A}}\bar{\bm{B}}^{\top}\bm{Y}^{\top}\right\Vert _{\mathrm{F}}                       \\
              & \overset{\text{(i)}}{\ge}\frac{1}{2}\cdot\left(1-\frac{1}{10}\right)\left(\left\Vert \bm{X}\bar{\bm{A}}\bm{\Delta}_{\bm{B}}^{\top}\bm{Y}^{\top}\right\Vert _{\mathrm{F}}^{2}+\left\Vert \bm{X}\bm{\Delta}_{\bm{A}}\bar{\bm{B}}^{\top}\bm{Y}^{\top}\right\Vert _{\mathrm{F}}^{2}\right)                                                                                                \\
              & \quad+\left\langle \bm{X}\bar{\bm{A}}\bm{\Delta}_{\bm{B}}^{\top}\bm{Y}^{\top},\bm{X}\bm{\Delta}_{\bm{A}}\bar{\bm{B}}^{\top}\bm{Y}^{\top}\right\rangle -\frac{1}{20}\left(\left\Vert \bm{X}\bar{\bm{A}}\bm{\Delta}_{\bm{B}}^{\top}\bm{Y}^{\top}\right\Vert _{\mathrm{F}}^{2}+\left\Vert \bm{X}\bm{\Delta}_{\bm{A}}\bar{\bm{B}}^{\top}\bm{Y}^{\top}\right\Vert _{\mathrm{F}}^{2}\right)
\end{align*}
where (i) uses the fact that $a^{2}+b^{2}\ge2ab$ for any $a,b\in\mathbb{R}$.
This then simplifies to
\begin{equation}
    \beta_{1}\ge\frac{2}{5}\left(\left\Vert \bar{\bm{A}}\bm{\Delta}_{\bm{B}}^{\top}\right\Vert _{\mathrm{F}}^{2}+\left\Vert \bm{\Delta}_{\bm{A}}\bar{\bm{B}}^{\top}\right\Vert _{\mathrm{F}}^{2}\right)+\left\langle \bar{\bm{A}}\bm{\Delta}_{\bm{B}}^{\top},\bm{\Delta}_{\bm{A}}\bar{\bm{B}}^{\top}\right\rangle \label{eq:beta1_lb}
\end{equation}

\paragraph{Controlling $\alpha_{2}+\left\langle \bar{\bm{A}}\bm{\Delta}_{\bm{B}}^{\top},\bm{\Delta}_{\bm{A}}\bar{\bm{B}}^{\top}\right\rangle $
    as a whole. }

Since $\bar{\bm{A}}^{\top}\bar{\bm{A}}=\bar{\bm{B}}^{\top}\bar{\bm{B}}$,
we can decompose the balancing term $\bm{A}^{\top}\bm{A}-\bm{B}^{\top}\bm{B}$
as
\begin{align*}
    \bm{A}^{\top}\bm{A}-\bm{B}^{\top}\bm{B} & =\bm{A}^{\top}\bm{A}-\bar{\bm{A}}^{\top}\bar{\bm{A}}+\bar{\bm{B}}^{\top}\bar{\bm{B}}-\bm{B}^{\top}\bm{B}                                                                                                                                                          \\
                                            & =\bar{\bm{A}}^{\top}\bm{\Delta}_{\bm{A}}-\bar{\bm{B}}^{\top}\bm{\Delta}_{\bm{B}}+\bm{\Delta}_{\bm{A}}^{\top}\bar{\bm{A}}-\bm{\Delta}_{\bm{B}}^{\top}\bar{\bm{B}}+\bm{\Delta}_{\bm{A}}^{\top}\bm{\Delta}_{\bm{A}}-\bm{\Delta}_{\bm{B}}^{\top}\bm{\Delta}_{\bm{B}}.
\end{align*}
Substituting this into $\alpha_{2}$, we have

\begin{align*}
    \alpha_{2} & =\frac{1}{4}\left\langle \bm{A}\left(\bm{A}^{\top}\bm{A}-\bm{B}^{\top}\bm{B}\right),\bm{\Delta}_{\bm{A}}\right\rangle -\frac{1}{4}\left\langle \bm{B}\left(\bm{A}^{\top}\bm{A}-\bm{B}^{\top}\bm{B}\right),\bm{\Delta}_{\bm{B}}\right\rangle                                                                                                                                                                                                                                                                 \\
               & =\frac{1}{4}\left\langle \bar{\bm{A}}^{\top}\bm{\Delta}_{\bm{A}}-\bar{\bm{B}}^{\top}\bm{\Delta}_{\bm{B}}+\bm{\Delta}_{\bm{A}}^{\top}\bar{\bm{A}}-\bm{\Delta}_{\bm{B}}^{\top}\bar{\bm{B}}+\bm{\Delta}_{\bm{A}}^{\top}\bm{\Delta}_{\bm{A}}-\bm{\Delta}_{\bm{B}}^{\top}\bm{\Delta}_{\bm{B}},\bar{\bm{A}}^{\top}\bm{\Delta}_{\bm{A}}-\bm{\bar{B}}^{\top}\bm{\Delta}_{\bm{B}}+\bm{\Delta}_{\bm{A}}^{\top}\bm{\Delta}_{\bm{A}}-\bm{\Delta}_{\bm{B}}^{\top}\bm{\Delta}_{\bm{B}}\right\rangle                       \\
               & =\frac{1}{4}\left\Vert \bar{\bm{A}}^{\top}\bm{\Delta}_{\bm{A}}-\bar{\bm{B}}^{\top}\bm{\Delta}_{\bm{B}}\right\Vert _{\mathrm{F}}^{2}+\frac{1}{4}\left\Vert \bm{\Delta}_{\bm{A}}^{\top}\bm{\Delta}_{\bm{A}}-\bm{\Delta}_{\bm{B}}^{\top}\bm{\Delta}_{\bm{B}}\right\Vert _{\mathrm{F}}^{2}+\frac{1}{2}\left\langle \bar{\bm{A}}^{\top}\bm{\Delta}_{\bm{A}}-\bar{\bm{B}}^{\top}\bm{\Delta}_{\bm{B}},\bm{\Delta}_{\bm{A}}^{\top}\bm{\Delta}_{\bm{A}}-\bm{\Delta}_{\bm{B}}^{\top}\bm{\Delta}_{\bm{B}}\right\rangle \\
               & \quad+\frac{1}{4}\left\langle \bm{\Delta}_{\bm{A}}^{\top}\bar{\bm{A}}-\bm{\Delta}_{\bm{B}}^{\top}\bar{\bm{B}},\bm{\Delta}_{\bm{A}}^{\top}\bm{\Delta}_{\bm{A}}-\bm{\Delta}_{\bm{B}}^{\top}\bm{\Delta}_{\bm{B}}\right\rangle +\frac{1}{4}\left\langle \bar{\bm{A}}^{\top}\bm{\Delta}_{\bm{A}}-\bar{\bm{B}}^{\top}\bm{\Delta}_{\bm{B}},\bm{\Delta}_{\bm{A}}^{\top}\bar{\bm{A}}-\bm{\Delta}_{\bm{B}}^{\top}\bar{\bm{B}}\right\rangle .
\end{align*}
Since $\bm{\Delta}_{\bm{A}}^{\top}\bm{\Delta}_{\bm{A}}-\bm{\Delta}_{\bm{B}}^{\top}\bm{\Delta}_{\bm{B}}$
is symmetric, we can collect the terms to get
\begin{align*}
    \alpha_{2} & =\frac{1}{4}\left\Vert \bar{\bm{A}}^{\top}\bm{\Delta}_{\bm{A}}-\bar{\bm{B}}^{\top}\bm{\Delta}_{\bm{B}}\right\Vert _{\mathrm{F}}^{2}+\frac{1}{4}\left\Vert \bm{\Delta}_{\bm{A}}^{\top}\bm{\Delta}_{\bm{A}}-\bm{\Delta}_{\bm{B}}^{\top}\bm{\Delta}_{\bm{B}}\right\Vert _{\mathrm{F}}^{2}+\frac{3}{4}\left\langle \bar{\bm{A}}^{\top}\bm{\Delta}_{\bm{A}}-\bar{\bm{B}}^{\top}\bm{\Delta}_{\bm{B}},\bm{\Delta}_{\bm{A}}^{\top}\bm{\Delta}_{\bm{A}}-\bm{\Delta}_{\bm{B}}^{\top}\bm{\Delta}_{\bm{B}}\right\rangle \\
               & \quad+\frac{1}{4}\left\langle \bar{\bm{A}}^{\top}\bm{\Delta}_{\bm{A}}-\bar{\bm{B}}^{\top}\bm{\Delta}_{\bm{B}},\bm{\Delta}_{\bm{A}}^{\top}\bar{\bm{A}}-\bm{\Delta}_{\bm{B}}^{\top}\bar{\bm{B}}\right\rangle .
\end{align*}
Furthermore as $a-6ab+9b^{2}\ge0$ for any $a,b\in\mathbb{R},$ let
$a=\left\Vert \bar{\bm{A}}^{\top}\bm{\Delta}_{\bm{A}}-\bar{\bm{B}}^{\top}\bm{\Delta}_{\bm{B}}\right\Vert _{\mathrm{F}},b=\left\Vert \bm{\Delta}_{\bm{A}}^{\top}\bm{\Delta}_{\bm{A}}-\bm{\Delta}_{\bm{B}}^{\top}\bm{\Delta}_{\bm{B}}\right\Vert _{\mathrm{F}}$,
the above equation can be further rewritten as
\begin{equation}
    \alpha_{2}\ge\frac{1}{8}\left\Vert \bar{\bm{A}}^{\top}\bm{\Delta}_{\bm{A}}-\bar{\bm{B}}^{\top}\bm{\Delta}_{\bm{B}}\right\Vert _{\mathrm{F}}^{2}-\frac{7}{8}\left\Vert \bm{\Delta}_{\bm{A}}^{\top}\bm{\Delta}_{\bm{A}}-\bm{\Delta}_{\bm{B}}^{\top}\bm{\Delta}_{\bm{B}}\right\Vert _{\mathrm{F}}^{2}+\frac{1}{4}\left\langle \bar{\bm{A}}^{\top}\bm{\Delta}_{\bm{A}}-\bar{\bm{B}}^{\top}\bm{\Delta}_{\bm{B}},\bm{\Delta}_{\bm{A}}^{\top}\bar{\bm{A}}-\bm{\Delta}_{\bm{B}}^{\top}\bar{\bm{B}}\right\rangle \label{eq:alpha2_lb1}
\end{equation}
Moreover observe that
\begin{align}
     & \frac{1}{4}\left\langle \bar{\bm{A}}^{\top}\bm{\Delta}_{\bm{A}}-\bar{\bm{B}}^{\top}\bm{\Delta}_{\bm{B}},\bm{\Delta}_{\bm{A}}^{\top}\bar{\bm{A}}-\bm{\Delta}_{\bm{B}}^{\top}\bar{\bm{B}}\right\rangle +\left\langle \bar{\bm{A}}\bm{\Delta}_{\bm{B}}^{\top},\bm{\Delta}_{\bm{A}}\bar{\bm{B}}^{\top}\right\rangle \label{eq:alpha2_balance} \\
     & \quad=\frac{1}{4}\left\langle \bar{\bm{A}}^{\top}\bm{\Delta}_{\bm{A}}+\bar{\bm{B}}^{\top}\bm{\Delta}_{\bm{B}},\bm{\Delta}_{\bm{A}}^{\top}\bar{\bm{A}}+\bm{\Delta}_{\bm{B}}^{\top}\bar{\bm{B}}\right\rangle \nonumber                                                                                                                      \\
     & \quad=\frac{1}{4}\left\Vert \bar{\bm{A}}^{\top}\bm{\Delta}_{\bm{A}}+\bar{\bm{B}}^{\top}\bm{\Delta}_{\bm{B}}\right\Vert _{\mathrm{F}}^{2},\nonumber
\end{align}
where the last line uses the fact that $\bm{\Delta}_{\bm{A}}^{\top}\bar{\bm{A}}+\bm{\Delta}_{\bm{B}}^{\top}\bar{\bm{B}}$
is symmetric. Combining \eqref{eq:alpha2_lb1} and \eqref{eq:alpha2_balance},
we have
\begin{align}
    \alpha_{2}+\left\langle \bar{\bm{A}}\bm{\Delta}_{\bm{B}}^{\top},\bm{\Delta}_{\bm{A}}\bar{\bm{B}}^{\top}\right\rangle & \ge\frac{1}{8}\left\Vert \bar{\bm{A}}^{\top}\bm{\Delta}_{\bm{A}}-\bar{\bm{B}}^{\top}\bm{\Delta}_{\bm{B}}\right\Vert _{\mathrm{F}}^{2}-\frac{7}{8}\left\Vert \bm{\Delta}_{\bm{A}}^{\top}\bm{\Delta}_{\bm{A}}-\bm{\Delta}_{\bm{B}}^{\top}\bm{\Delta}_{\bm{B}}\right\Vert _{\mathrm{F}}^{2}\label{eq:alpha2_plus} \\
                                                                                                                         & \ge\frac{1}{8}\left\Vert \bar{\bm{A}}^{\top}\bm{\Delta}_{\bm{A}}-\bar{\bm{B}}^{\top}\bm{\Delta}_{\bm{B}}\right\Vert _{\mathrm{F}}^{2}-\frac{7}{8}\left\Vert \bm{\Delta}\right\Vert _{\mathrm{F}}^{4}.\nonumber
\end{align}
The last line follows from
\begin{equation}
    \left\Vert \bm{\Delta}_{\bm{A}}^{\top}\bm{\Delta}_{\bm{A}}-\bm{\Delta}_{\bm{B}}^{\top}\bm{\Delta}_{\bm{B}}\right\Vert _{\mathrm{F}}\le\left\Vert \bm{\Delta}_{\bm{A}}^{\top}\bm{\Delta}_{\bm{A}}\right\Vert _{\mathrm{F}}+\left\Vert \bm{\Delta}_{\bm{B}}^{\top}\bm{\Delta}_{\bm{B}}\right\Vert _{\mathrm{F}}\le\left\Vert \bm{\Delta}_{\bm{A}}\right\Vert _{\mathrm{F}}^{2}+\left\Vert \bm{\Delta}_{\bm{B}}\right\Vert _{\mathrm{F}}^{2}=\|\bm{\Delta}\|_{\mathrm{F}}^{2}.\label{eq:Delta_balance}
\end{equation}
At this stage we have
\begin{align*}
    \left\langle \nabla f_{\mathrm{clean}}\left(\begin{bmatrix}\bm{A} \\
                                                    \bm{B}
                                                \end{bmatrix}\right),\begin{bmatrix}\bm{\Delta}_{\bm{A}} \\
                                                                         \bm{\Delta}_{\bm{B}}
                                                                     \end{bmatrix}\right\rangle & \stackrel{\text{(i)}}{=}\alpha_{1}+\alpha_{2}                                                                                                                                                                                                                                                                                                                                                  \\
                                                                         & \stackrel{\text{(ii)}}{\ge}\beta_{1}+\alpha_{2}+\left\langle \bar{\bm{A}}\bm{\Delta}_{\bm{B}}^{\top},\bm{\Delta}_{\bm{A}}\bar{\bm{B}}^{\top}\right\rangle -\frac{5}{2p}\left\Vert \mathcal{P}_{\Omega}\left(\bm{X}\bm{\Delta}_{\bm{A}}\bm{\Delta}_{\bm{B}}^{\top}\bm{Y}^{\top}\right)\right\Vert _{\mathrm{F}}^{2}                                                                                                    \\
                                                                         & \stackrel{\text{(iii)}}{\ge}\frac{2}{5}\left(\left\Vert \bar{\bm{A}}\bm{\Delta}_{\bm{B}}^{\top}\right\Vert _{\mathrm{F}}^{2}+\left\Vert \bm{\Delta}_{\bm{A}}\bar{\bm{B}}^{\top}\right\Vert _{\mathrm{F}}^{2}\right)+\frac{1}{8}\left\Vert \bar{\bm{A}}^{\top}\bm{\Delta}_{\bm{A}}-\bar{\bm{B}}^{\top}\bm{\Delta}_{\bm{B}}\right\Vert _{\mathrm{F}}^{2}-\frac{7}{8}\left\Vert \bm{\Delta}\right\Vert _{\mathrm{F}}^{4} \\
                                                                         & \quad-5000\sqrt{\frac{\left(\mu_{1}a_{1}\vee\mu_{2}a_{2}\right)\mu_{0}^{2}r^{2}\log n}{n_{1}n_{2}p}}\sigma_{\max}\|\bm{\Delta}\|_{\mathrm{F}}^{2}-\frac{5}{2}\left\Vert \bm{\Delta}\right\Vert _{\mathrm{F}}^{4}
\end{align*}
where (i) uses \eqref{eq:curvature_decomp}, (ii) uses \eqref{eq:alpha1_lb}
and (iii) uses \eqref{eq:alpha2_plus}, \eqref{eq:Omega_X_Delta_Delta_Y},
and \eqref{eq:beta1_lb}. By the sample complexity assumption and
\eqref{eq:Delta_size}, $\left\Vert \bm{\Delta}\right\Vert _{\mathrm{F}}\le\frac{1}{10}\sqrt{\sigma_{\min}}$
and $n_{1}n_{2}p\ge C_{\mathrm{sample}}\kappa^{2}\left(\mu_{1}a_{1}\vee\mu_{2}a_{2}\right)\mu_{0}^{2}r^{2}\log n$
for some absolute constant $C_{\mathrm{sample}}$ large enough, then
the above equation simplifies to
\begin{align*}
    \left\langle \nabla f_{\mathrm{clean}}\left(\begin{bmatrix}\bm{A} \\
                                                    \bm{B}
                                                \end{bmatrix}\right),\begin{bmatrix}\bm{\Delta}_{\bm{A}} \\
                                                                         \bm{\Delta}_{\bm{B}}
                                                                     \end{bmatrix}\right\rangle & \ge\frac{2}{5}\sigma_{\min}\left\Vert \bm{\Delta}\right\Vert _{\mathrm{F}}^{2}+\frac{1}{8}\left\Vert \bar{\bm{A}}^{\top}\bm{\Delta}_{\bm{A}}-\bar{\bm{B}}^{\top}\bm{\Delta}_{\bm{B}}\right\Vert _{\mathrm{F}}^{2} \\
                                                                         & \quad-5000\sqrt{\frac{\left(\mu_{1}a_{1}\vee\mu_{2}a_{2}\right)\mu_{0}^{2}r^{2}\log n}{n_{1}n_{2}p}}\sigma_{\max}\|\bm{\Delta}\|_{\mathrm{F}}^{2}-\frac{7}{2}\left\Vert \bm{\Delta}\right\Vert _{\mathrm{F}}^{4}                         \\
                                                                         & \ge\frac{1}{10}\sigma_{\min}\left\Vert \bm{\Delta}\right\Vert _{\mathrm{F}}^{2}+\frac{1}{8}\left\Vert \bar{\bm{A}}^{\top}\bm{\Delta}_{\bm{A}}-\bar{\bm{B}}^{\top}\bm{\Delta}_{\bm{B}}\right\Vert _{\mathrm{F}}^{2}                       \\
\end{align*}
where the first line uses $\|\bar{\bm{A}}\|,\|\bar{\bm{B}}\|\le\sqrt{\sigma_{\min}}$.

\paragraph{Including the noise term.}

Now we include the noise term $\bm{E}$. From the definition of $f$ and
$f_{\mathrm{clean}}$ we have
\begin{align*}
    \left\langle \nabla f\left(\begin{bmatrix}\bm{A} \\
                                   \bm{B}
                               \end{bmatrix}\right),\begin{bmatrix}\bm{\Delta}_{\bm{A}} \\
                                                        \bm{\Delta}_{\bm{B}}
                                                    \end{bmatrix}\right\rangle & =\left\langle \nabla f_{\mathrm{clean}}\left(\begin{bmatrix}\bm{A} \\
                                                                                                                                  \bm{B}
                                                                                                                              \end{bmatrix}\right),\begin{bmatrix}\bm{\Delta}_{\bm{A}} \\
                                                                                                                                                       \bm{\Delta}_{\bm{B}}
                                                                                                                                                   \end{bmatrix}\right\rangle -\left\langle \frac{1}{p}\begin{bmatrix}\bm{X}^{\top}\mathcal{P}_{\Omega}(\bm{E})\bm{Y}\bm{B} \\
                                                                                                                                                                                                           \bm{Y}^{\top}\mathcal{P}_{\Omega}(\bm{E})^{\top}\bm{X}\bm{A}
                                                                                                                                                                                                       \end{bmatrix},\begin{bmatrix}\bm{\Delta}_{\bm{A}} \\
                                                                                                                                                                                                                         \bm{\Delta}_{\bm{B}}
                                                                                                                                                                                                                     \end{bmatrix}\right\rangle .
\end{align*}
Using the fact that $\|\bm{B}\|_{\mathrm{F}}\le2\|\bm{A}^{\star}\|_{\mathrm{F}}\le2\sqrt{r\sigma_{\max}}$,
we have
\begin{align*}
    \frac{1}{p}\left\langle \bm{X}^{\top}\mathcal{P}_{\Omega}(\bm{E})\bm{Y}\bm{B}\text{,}\bm{\Delta}_{\bm{A}}\right\rangle & \le\left\Vert \frac{1}{p}\bm{X}^{\top}\mathcal{P}_{\Omega}(\bm{E})\bm{Y}\right\Vert \left\Vert \bm{B}\right\Vert _{\mathrm{F}}\left\Vert \bm{\Delta}_{\bm{A}}\right\Vert _{\mathrm{F}} \\
                                                                                                                           & \le\gamma_{E}\cdot2\sqrt{r\sigma_{\max}}\left\Vert \bm{\Delta}\right\Vert _{\mathrm{F}}
\end{align*}
Then
\begin{align*}
    \left\langle \nabla f\left(\begin{bmatrix}\bm{A} \\
                                   \bm{B}
                               \end{bmatrix}\right),\begin{bmatrix}\bm{\Delta}_{\bm{A}} \\
                                                        \bm{\Delta}_{\bm{B}}
                                                    \end{bmatrix}\right\rangle & \ge\frac{1}{10}\sigma_{\min}\left\Vert \bm{\Delta}\right\Vert _{\mathrm{F}}^{2}+\frac{1}{8}\left\Vert \bar{\bm{A}}^{\top}\bm{\Delta}_{\bm{A}}-\bar{\bm{B}}^{\top}\bm{\Delta}_{\bm{B}}\right\Vert _{\mathrm{F}}^{2}-\gamma_{E}\cdot2\sqrt{r\sigma_{\max}}\left\Vert \bm{\Delta}\right\Vert _{\mathrm{F}} \\
                                                                & \ge\frac{1}{20}\sigma_{\min}\left\Vert \bm{\Delta}\right\Vert _{\mathrm{F}}^{2}+\frac{1}{8}\left\Vert \bar{\bm{A}}^{\top}\bm{\Delta}_{\bm{A}}-\bar{\bm{B}}^{\top}\bm{\Delta}_{\bm{B}}\right\Vert _{\mathrm{F}}^{2}-20r\kappa\gamma_{E}^{2}                                                                             \\
                                                                & \quad+\frac{1}{20}\left(\sqrt{\sigma_{\min}}\left\Vert \bm{\Delta}\right\Vert _{\mathrm{F}}-20\sqrt{r\kappa}\cdot\gamma_{E}\right)^{2}                                                                                                                                                                                 \\
                                                                & \ge\frac{1}{20}\sigma_{\min}\left\Vert \bm{\Delta}\right\Vert _{\mathrm{F}}^{2}+\frac{1}{8}\left\Vert \bar{\bm{A}}^{\top}\bm{\Delta}_{\bm{A}}-\bar{\bm{B}}^{\top}\bm{\Delta}_{\bm{B}}\right\Vert _{\mathrm{F}}^{2}-20r\kappa\gamma_{E}^{2}
\end{align*}

\subsection{Proof of local smoothness condition~\texorpdfstring{\eqref{eq:local_smoo}}{equation}}

Let $\bm{V}=\begin{bmatrix}\bm{V}_{\bm{A}} \\
        \bm{V}_{\bm{B}}
    \end{bmatrix}$ be an arbitrary $(n_{1}+n_{2})\times r$ matrix such that $\|\bm{V}\|_{\mathrm{F}}\le1$.
It suffices to control $\left\langle \nabla f(\bm{F}),\bm{V}\right\rangle ^{2}$.
We start with the noiseless objective function,
\begin{align*}
    \left\langle \nabla f_{\mathrm{clean}}\left(\begin{bmatrix}\bm{A} \\
                                                    \bm{B}
                                                \end{bmatrix}\right),\begin{bmatrix}\bm{V}_{\bm{A}} \\
                                                                         \bm{V}_{\bm{B}}
                                                                     \end{bmatrix}\right\rangle & =\frac{1}{p}\left\langle \bm{X}^{\top}\mathcal{P}_{\Omega}(\bm{X}\bm{A}\bm{B}^{\top}\bm{Y}^{\top}-\bm{L}^{\star})\bm{Y}\bm{B},\bm{V}_{\bm{A}}\right\rangle +\frac{1}{p}\left\langle \bm{Y}^{\top}\mathcal{P}_{\Omega}(\bm{X}\bm{A}\bm{B}^{\top}\bm{Y}^{\top}-\bm{L}^{\star})^{\top}\bm{X}\bm{A},\bm{V}_{\bm{B}}\right\rangle \\
                                                                         & \quad+\frac{1}{2}\left\langle \bm{A}\left(\bm{A}^{\top}\bm{A}-\bm{B}^{\top}\bm{B}\right),\bm{V}_{\bm{A}}\right\rangle -\frac{1}{2}\left\langle \bm{B}\left(\bm{A}^{\top}\bm{A}-\bm{B}^{\top}\bm{B}\right),\bm{V}_{\bm{B}}\right\rangle                                                                                                              \\
                                                                         & =\underbrace{\frac{1}{p}\left\langle \mathcal{P}_{\Omega}(\bm{X}\bm{A}\bm{B}^{\top}\bm{Y}^{\top}-\bm{L}^{\star}),\mathcal{P}_{\Omega}\left(\bm{X}\bm{V}_{\bm{A}}\bm{B}^{\top}\bm{Y}^{\top}+\bm{X}\bm{A}^{\top}\bm{V}_{\bm{B}}^{\top}\bm{Y}^{\top}\right)\right\rangle }_{\alpha_{1}}                                                                \\
                                                                         & \quad+\underbrace{\frac{1}{2}\left\langle \bm{A}\left(\bm{A}^{\top}\bm{A}-\bm{B}^{\top}\bm{B}\right),\bm{V}_{\bm{A}}\right\rangle -\frac{1}{2}\left\langle \bm{B}\left(\bm{A}^{\top}\bm{A}-\bm{B}^{\top}\bm{B}\right),\bm{V}_{\bm{B}}\right\rangle }_{\alpha_{2}}
\end{align*}

\paragraph{Controlling $\alpha_{1}$.}

We further decompose $\alpha_{1}$ by
\[
    \alpha_{1}=\frac{1}{p}\left\langle \mathcal{P}_{\Omega}\left(\bm{X}\bar{\bm{A}}\bm{\Delta}_{\bm{B}}^{\top}\bm{Y}^{\top}+\bm{X}\bm{\Delta}_{\bm{A}}\bar{\bm{B}}^{\top}\bm{Y}^{\top}+\bm{X}\bm{\Delta}_{\bm{A}}\bm{\Delta}_{\bm{B}}^{\top}\bm{Y}^{\top}\right),\mathcal{P}_{\Omega}\left(\bm{X}\bm{V}_{\bm{A}}\bm{B}^{\top}\bm{Y}^{\top}+\bm{X}\bm{A}^{\top}\bm{V}_{\bm{B}}^{\top}\bm{Y}^{\top}\right)\right\rangle
\]
By Cauchy-Schwarz inequality and $k\sum_{i=1}^{k}a_{i}^{2}\ge\left(\sum_{i=1}^{k}a_{i}\right)$
for any $k\in\mathbb{N},\left\{ a_{k}\right\} \subset\mathbb{R}$,
\begin{align*}
    \alpha_{1}^{2} & \le\frac{1}{p}\left(\left\Vert \mathcal{P}_{\Omega}\left(\bm{X}\bar{\bm{A}}\bm{\Delta}_{\bm{B}}^{\top}\bm{Y}^{\top}\right)\right\Vert _{\mathrm{F}}+\left\Vert \mathcal{P}_{\Omega}\left(\bm{X}\bm{\Delta}_{\bm{A}}\bar{\bm{B}}^{\top}\bm{Y}^{\top}\right)\right\Vert _{\mathrm{F}}+\left\Vert \mathcal{P}_{\Omega}\left(\bm{X}\bm{\Delta}_{\bm{A}}\bm{\Delta}_{\bm{B}}^{\top}\bm{Y}^{\top}\right)\right\Vert _{\mathrm{F}}\right)^{2}         \\
                   & \quad\cdot\frac{1}{p}\left(\left\Vert \mathcal{P}_{\Omega}\left(\bm{X}\bm{V}_{\bm{A}}\bm{B}^{\top}\bm{Y}^{\top}\right)\right\Vert _{\mathrm{F}}+\left\Vert \mathcal{P}_{\Omega}\left(\bm{X}\bm{A}^{\top}\bm{V}_{\bm{B}}^{\top}\bm{Y}^{\top}\right)\right\Vert _{\mathrm{F}}\right)^{2}                                                                                                                                                         \\
                   & \le\frac{3}{p}\left(\left\Vert \mathcal{P}_{\Omega}\left(\bm{X}\bar{\bm{A}}\bm{\Delta}_{\bm{B}}^{\top}\bm{Y}^{\top}\right)\right\Vert _{\mathrm{F}}^{2}+\left\Vert \mathcal{P}_{\Omega}\left(\bm{X}\bm{\Delta}_{\bm{A}}\bar{\bm{B}}^{\top}\bm{Y}^{\top}\right)\right\Vert _{\mathrm{F}}^{2}+\left\Vert \mathcal{P}_{\Omega}\left(\bm{X}\bm{\Delta}_{\bm{A}}\bm{\Delta}_{\bm{B}}^{\top}\bm{Y}^{\top}\right)\right\Vert _{\mathrm{F}}^{2}\right) \\
                   & \quad\cdot\frac{2}{p}\left(\left\Vert \mathcal{P}_{\Omega}\left(\bm{X}\bm{V}_{\bm{A}}\bm{B}^{\top}\bm{Y}^{\top}\right)\right\Vert _{\mathrm{F}}^{2}+\left\Vert \mathcal{P}_{\Omega}\left(\bm{X}\bm{A}^{\top}\bm{V}_{\bm{B}}^{\top}\bm{Y}^{\top}\right)\right\Vert _{\mathrm{F}}^{2}\right)
\end{align*}
Applying Lemma~\ref{lem:P_Omega_IP},
\begin{align*}
    \alpha_{1}^{2} & \le24n^{2}\left(2\left\Vert \bm{\Delta}_{\bm{B}}\right\Vert _{\mathrm{F}}^{2}\left\Vert \bm{X}\bar{\bm{A}}\right\Vert _{2,\infty}^{2}+2\left\Vert \bm{\Delta}_{\bm{A}}\right\Vert _{\mathrm{F}}^{2}\left\Vert \bm{Y}\bar{\bm{B}}\right\Vert _{2,\infty}^{2}+2\left\Vert \bm{\Delta}_{\bm{B}}\right\Vert _{\mathrm{F}}^{2}\left\Vert \bm{X}\bm{\Delta}_{\bm{A}}\right\Vert _{2,\infty}^{2}\right) \\
                   & \quad\cdot\left(2\left\Vert \bm{V}_{\bm{A}}\right\Vert _{\mathrm{F}}^{2}\left\Vert \bm{Y}\bar{\bm{B}}\right\Vert _{2,\infty}^{2}+2\left\Vert \bm{V}_{\bm{B}}\right\Vert _{\mathrm{F}}^{2}\left\Vert \bm{X}\bar{\bm{A}}\right\Vert _{2,\infty}^{2}\right)                                                                                                                                         \\
                   & \overset{\text{(i)}}{\le}24n^{2}\cdot54\frac{\mu_{0}r}{n_{\min}}\sigma_{\max}\left\Vert \bm{\Delta}\right\Vert _{\mathrm{F}}^{2}\cdot4\frac{\mu_{0}r}{n_{\min}}\sigma_{\max}                                                                                                                                                                                                                     \\
                   & =5184\left(\frac{n}{n_{\min}}\right)^{2}\mu_{0}^{2}r^{2}\sigma_{\max}^{2}\left\Vert \bm{\Delta}\right\Vert _{\mathrm{F}}^{2},
\end{align*}
where (i) uses \eqref{eq:XA_2_infty} and \eqref{eq:XDeltaA_2_infty}.

\paragraph{Controlling $\alpha_{2}$. }

We first see that since $\bar{\bm{A}}^{\top}\bar{\bm{A}}-\bar{\bm{B}}^{\top}\bar{\bm{B}}=\bm{\Sigma}-\bm{\Sigma}=0$,
\begin{align*}
    \bm{A}^{\top}\bm{A}-\bm{B}^{\top}\bm{B} & =\bm{A}^{\top}\bm{A}-\bar{\bm{A}}^{\top}\bar{\bm{A}}+\bar{\bm{B}}^{\top}\bar{\bm{B}}-\bm{B}^{\top}\bm{B}                                                                                                                                                         \\
                                            & =\bar{\bm{A}}^{\top}\bm{\Delta}_{\bm{A}}-\bar{\bm{B}}^{\top}\bm{\Delta}_{\bm{B}}+\bm{\Delta}_{\bm{A}}^{\top}\bar{\bm{A}}-\bm{\Delta}_{\bm{B}}^{\top}\bar{\bm{B}}+\bm{\Delta}_{\bm{A}}^{\top}\bm{\Delta}_{\bm{A}}-\bm{\Delta}_{\bm{B}}^{\top}\bm{\Delta}_{\bm{B}}
\end{align*}
where the second line uses the decomposition $\bm{A}=\bar{\bm{A}}+\bm{\Delta}_{\bm{A}}$
and $\bm{B}=\bar{\bm{B}}+\bm{\Delta}_{\bm{B}}$. Then as
\[\bar{\bm{A}}^{\top}\bm{\Delta}_{\bm{A}}-\bar{\bm{B}}^{\top}\bm{\Delta}_{\bm{B}}=(\bm{\Delta}_{\bm{A}}^{\top}\bar{\bm{A}}-\bm{\Delta}_{\bm{B}}^{\top}\bar{\bm{B}})^{\top},\]
we have
\begin{align*}
    \left\Vert \bm{A}^{\top}\bm{A}-\bm{B}^{\top}\bm{B}\right\Vert _{\mathrm{F}}^{2} & \le\left(\left\Vert \bar{\bm{A}}^{\top}\bm{\Delta}_{\bm{A}}-\bar{\bm{B}}^{\top}\bm{\Delta}_{\bm{B}}\right\Vert _{\mathrm{F}}+\left\Vert \bar{\bm{A}}^{\top}\bm{\Delta}_{\bm{A}}-\bar{\bm{B}}^{\top}\bm{\Delta}_{\bm{B}}\right\Vert _{\mathrm{F}}+\left\Vert \bm{\Delta}_{\bm{A}}^{\top}\bm{\Delta}_{\bm{A}}-\bm{\Delta}_{\bm{B}}^{\top}\bm{\Delta}_{\bm{B}}\right\Vert _{\mathrm{F}}\right)^{2} \\
                                                                                    & \le6\left\Vert \bar{\bm{A}}^{\top}\bm{\Delta}_{\bm{A}}-\bar{\bm{B}}^{\top}\bm{\Delta}_{\bm{B}}\right\Vert _{\mathrm{F}}^{2}+3\left\Vert \bm{\Delta}_{\bm{A}}^{\top}\bm{\Delta}_{\bm{A}}-\bm{\Delta}_{\bm{B}}^{\top}\bm{\Delta}_{\bm{B}}\right\Vert _{\mathrm{F}}^{2}                                                                                                                            \\
                                                                                    & \le6\left\Vert \bar{\bm{A}}^{\top}\bm{\Delta}_{\bm{A}}-\bar{\bm{B}}^{\top}\bm{\Delta}_{\bm{B}}\right\Vert _{\mathrm{F}}^{2}+3\left\Vert \bm{\Delta}\right\Vert _{\mathrm{F}}^{4}
\end{align*}
where the last line follows from \eqref{eq:Delta_balance}. Furthermore
by \eqref{eq:A_size},
\begin{align*}
    \left\Vert \bm{A}^{\top}\bm{V}_{\bm{A}}-\bm{B}^{\top}\bm{V}_{\bm{B}}\right\Vert _{\mathrm{F}} & \le\left\Vert \bm{A}^{\top}\bm{V}_{\bm{A}}\right\Vert _{\mathrm{F}}+\left\Vert \bm{B}^{\top}\bm{V}_{\bm{B}}\right\Vert _{\mathrm{F}} \\
                                                                                                  & \le2\cdot2\sqrt{\sigma_{\max}}=4\sqrt{\sigma_{\max}},
\end{align*}
Then
\begin{align*}
    \alpha_{2}^{2} & \le\frac{1}{4}\left\langle \bm{A}^{\top}\bm{A}-\bm{B}^{\top}\bm{B},\bm{A}^{\top}\bm{V}_{\bm{A}}-\bm{B}^{\top}\bm{V}_{\bm{B}}\right\rangle ^{2}                                                              \\
                   & =\frac{1}{4}\left\Vert \bm{A}^{\top}\bm{A}-\bm{B}^{\top}\bm{B}\right\Vert _{\mathrm{F}}^{2}\left\Vert \bm{A}^{\top}\bm{V}_{\bm{A}}-\bm{B}^{\top}\bm{V}_{\bm{B}}\right\Vert _{\mathrm{F}}^{2}                \\
                   & \le6\sigma_{\max}\left\Vert \bar{\bm{A}}^{\top}\bm{\Delta}_{\bm{A}}-\bar{\bm{B}}^{\top}\bm{\Delta}_{\bm{B}}\right\Vert _{\mathrm{F}}^{2}+3\sigma_{\max}\left\Vert \bm{\Delta}\right\Vert _{\mathrm{F}}^{4}.
\end{align*}
Combining $\alpha_{1}$ and $\alpha_{2}$,
\begin{align}
    \left\Vert \nabla f_{\mathrm{clean}}(\bm{F})\right\Vert _{\mathrm{F}}^{2} & =\sup_{\left\Vert \bm{V}\right\Vert _{\mathrm{F}}\le1}\left\langle \nabla f(\bm{F}),\bm{V}\right\rangle ^{2}\label{eq:grad_F_clean}                                                                                                                                                                                                               \\
                                                                              & \le2\left(\alpha_{1}^{2}+\alpha_{2}^{2}\right)\nonumber                                                                                                                                                                                                                                                                                           \\
                                                                              & \le10368\left(\frac{n}{n_{\min}}\right)^{2}\mu_{0}^{2}r^{2}\sigma_{\max}^{2}\left\Vert \bm{\Delta}\right\Vert _{\mathrm{F}}^{2}+12\sigma_{\max}\left\Vert \bar{\bm{A}}^{\top}\bm{\Delta}_{\bm{A}}-\bar{\bm{B}}^{\top}\bm{\Delta}_{\bm{B}}\right\Vert _{\mathrm{F}}^{2}+6\sigma_{\max}\left\Vert \bm{\Delta}\right\Vert _{\mathrm{F}}^{4}\nonumber \\
                                                                              & \le10374\left(\frac{n}{n_{\min}}\right)^{2}\mu_{0}^{2}r^{2}\sigma_{\max}^{2}\left\Vert \bm{\Delta}\right\Vert _{\mathrm{F}}^{2}+12\sigma_{\max}\left\Vert \bar{\bm{A}}^{\top}\bm{\Delta}_{\bm{A}}-\bar{\bm{B}}^{\top}\bm{\Delta}_{\bm{B}}\right\Vert _{\mathrm{F}}^{2}.\nonumber
\end{align}
The last line uses \eqref{eq:Delta_size}.

\paragraph{Including the noise term}

Finally we add the noise term. First we see that by \eqref{eq:A_size},
\begin{align*}
    \|\nabla f_{\mathrm{diff}}(\bm{F})\|_{\mathrm{F}}^{2} & =\left\Vert \frac{1}{p}\bm{X}^{\top}\mathcal{P}_{\Omega}(\bm{E})\bm{Y}\bm{B}\right\Vert _{\mathrm{F}}^{2}+\left\Vert \frac{1}{p}\bm{Y}^{\top}\mathcal{P}_{\Omega}(\bm{E})^{\top}\bm{X}\bm{A}\right\Vert _{\mathrm{F}}^{2} \\
                                                          & \le\left\Vert \frac{1}{p}\bm{X}^{\top}\mathcal{P}_{\Omega}(\bm{E})\bm{Y}\right\Vert ^{2}\cdot\left(\left\Vert \bm{A}\right\Vert _{\mathrm{F}}^{2}+\left\Vert \bm{B}\right\Vert _{\mathrm{F}}^{2}\right)                   \\
                                                          & \le2r\sigma_{\max}\gamma_{E}^{2}.
\end{align*}
Add this to \eqref{eq:grad_F_clean} to reach
\begin{align*}
    \|\nabla f(\bm{F})\|_{\mathrm{F}}^{2} & \le2\left\Vert \nabla f_{\mathrm{clean}}(\bm{F})\right\Vert _{\mathrm{F}}^{2}+2\|\nabla f_{\mathrm{diff}}(\bm{F})\|_{\mathrm{F}}^{2}                                                                                                                                   \\
                                          & \le22000\left(\frac{n}{n_{\min}}\right)^{2}\mu_{0}^{2}r^{2}\sigma_{\max}^{2}\left\Vert \bm{\Delta}\right\Vert _{\mathrm{F}}^{2}+24\sigma_{\max}\left\Vert \bar{\bm{A}}^{\top}\bm{\Delta}_{\bm{A}}-\bar{\bm{B}}^{\top}\bm{\Delta}_{\bm{B}}\right\Vert _{\mathrm{F}}^{2} \\
                                          & \quad+4r\sigma_{\max}\gamma_{E}^{2}.
\end{align*}

%% file: sections/13-supporting-lemmas.tex
\section{Supporting Lemmas}

\begin{lemma}[Lemma 6 in \citet{chen2015incoherence}, modified]\label{lem:PT_POmega}
    Suppose that the sample complexity satisfies
    \[n_{1}n_{2}p\ge C_{\mathrm{sample}}\left(\mu_{1}a_{1}\vee\mu_{2}a_{2}\right)\log n\]
    for some large enough constant $C_{\mathrm{sample}}$, then with probability
    at least $1-n^{-10}$,
    \[
        \left\Vert \frac{1}{p}\mathcal{P}_{T}\mathcal{P}_{\Omega}\mathcal{P}_{T}-\mathcal{P}_{T}\right\Vert \le\frac{1}{10}.
    \]

\end{lemma}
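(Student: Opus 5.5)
The plan is to follow the standard matrix Bernstein argument for $\mathcal{P}_{T}\mathcal{P}_{\Omega}\mathcal{P}_{T}$ (as in \citet{candes2009exact}, \citet{recht2011simpler}, and Lemma 6 of \citet{chen2015incoherence}), adapted to the tangent space $T$ defined before Lemma~\ref{lem:rc}. That space sits inside the lifted space $S=\{\bm{X}\bm{W}\bm{Y}^{\top}\}$.

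\textbf{Step 1: write the deviation as a sum of independent terms.} Writing $\bm{E}_{ij}=\bm{e}_i\bm{e}_j^{\top}$ and $\delta_{ij}=\mathds{1}_{(i,j)\in\Omega}$, we have
\[
\frac{1}{p}\mathcal{P}_{T}\mathcal{P}_{\Omega}\mathcal{P}_{T}-\mathcal{P}_{T}=\sum_{i,j}\Big(\frac{\delta_{ij}}{p}-1\Big)\,\mathcal{P}_{T}(\bm{E}_{ij})\otimes\mathcal{P}_{T}(\bm{E}_{ij}).
\]
This is a sum of independent, mean-zero, self-adjoint operators on $T$. Each summand has operator norm at most $p^{-1}\|\mathcal{P}_{T}(\bm{E}_{ij})\|_{\mathrm F}^{2}$. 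The variance proxy is bounded by
\[
\frac{1}{p}\max_{i,j}\|\mathcal{P}_{T}(\bm{E}_{ij})\|_{\mathrm F}^{2}\,\Big\|\sum_{i,j}\mathcal{P}_{T}(\bm{E}_{ij})\otimes\mathcal{P}_{T}(\bm{E}_{ij})\Big\|=\frac{1}{p}\max_{i,j}\|\mathcal{P}_{T}(\bm{E}_{ij})\|_{\mathrm F}^{2},
\]
because the inner sum equals $\mathcal{P}_{T}$. Matrix Bernstein, with the ambient dimension at most $n^{2}$ and failure probability $n^{-10}$, then yields the claimed $1/10$ bound as soon as
\[
\max_{i,j}\|\mathcal{P}_{T}(\bm{E}_{ij})\|_{\mathrm F}^{2}\le \frac{p}{C\log n}.
\]
So the whole proof reduces to a deterministic bound on this maximum.

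\textbf{Step 2: bound $\|\mathcal{P}_{T}(\bm{E}_{ij})\|_{\mathrm F}$.} Let $\bm{U}=\bm{X}\bm{U}^{\star}$ and $\bm{V}=\bm{Y}\bm{V}^{\star}$ be the lifted singular subspaces. For matrices in $S$, the projection onto $T$ is
\[
\mathcal{P}_{T}(\bm{Z})=\bm{P}_{U}\bm{Z}\bm{P}_{Y}+\bm{P}_{X}\bm{Z}\bm{P}_{V}-\bm{P}_{U}\bm{Z}\bm{P}_{V}.
\]
Since $T\subset S$, we have $\mathcal{P}_{T}=\mathcal{P}_{T}\mathcal{P}_{S}$ with $\mathcal{P}_{S}(\bm{Z})=\bm{P}_X\bm{Z}\bm{P}_Y$, so the formula applies to $\mathcal{P}_{S}(\bm{E}_{ij})$. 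Using the orthogonal splitting $(\bm{P}_U+\bm{P}_{X}-\bm{P}_U)\,\cdot\,(\cdots)$, we get
\[
\|\mathcal{P}_{T}(\bm{E}_{ij})\|_{\mathrm F}^{2}\le \|\bm{P}_U\bm{e}_i\|^{2}\|\bm{P}_Y\bm{e}_j\|^{2}+\|\bm{P}_X\bm{e}_i\|^{2}\|\bm{P}_V\bm{e}_j\|^{2}.
\]
We then apply $\|\bm{P}_X\bm{e}_i\|^2\le\mu_1a_1/n_1$, $\|\bm{P}_Y\bm{e}_j\|^2\le\mu_2a_2/n_2$ (Assumption~\ref{assumption:incoherence}), and $\|\bm{P}_U\bm{e}_i\|^2\le\mu_0r/n_1$, $\|\bm{P}_V\bm{e}_j\|^2\le\mu_0r/n_2$. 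This gives
\[
\max_{i,j}\|\mathcal{P}_{T}(\bm{E}_{ij})\|_{\mathrm F}^{2}\le\frac{2\mu_0 r(\mu_1a_1\vee\mu_2a_2)}{n_1n_2}.
\]
The key point is that one factor is always the feature dimension; the ambient $n$ never appears.

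\textbf{Step 3 and main obstacle.} Plugging this bound into Step 1 gives the lemma under $n_1n_2p\ge C(\mu_1a_1\vee\mu_2a_2)\log n$, with $C$ allowed to depend on $\mu_0 r$. The obstacle is removing the $\mu_0 r$ factor in Step 2 so that $C$ is truly absolute.

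My first attempt will be to avoid the worst-case bound on $\|\bm{P}_U\bm{e}_i\|^2$. Instead, I would use the Bernstein variance $\sum_{i,j}\|\mathcal{P}_T(\bm{E}_{ij})\|_{\mathrm F}^2\langle\mathcal{P}_T(\bm{E}_{ij}),\bm{W}\rangle^2/p$ directly. This keeps the row sums $\sum_i\|\bm{P}_U\bm{e}_i\|^2=r$ rather than maxima. I would then also try the alternative route of bounding $\|\bm{P}_U\bm{e}_i\|\le\|\bm{P}_X\bm{e}_i\|$.

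If neither route closes, I would state the constant $C_{\mathrm{sample}}$ as absorbing $\mu_0 r$. This costs nothing downstream. The lemma is only invoked inside Lemma~\ref{lem:rc}, whose hypothesis $n_1n_2p\ge C_{\mathrm{sample}}\kappa^2(\mu_1a_1\vee\mu_2a_2)\mu_0^2r^2\log n$ already dominates the condition required by Step 2.
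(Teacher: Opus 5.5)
Your Steps 1 and 2 are correct. They are the standard matrix Bernstein argument behind the cited Lemma 6, and the paper gives no proof beyond that citation. Your closed form for $\mathcal{P}_T$ is right, because $\mathcal{P}_S$ commutes with the ambient tangent-space projection. The resulting bound $\max_{i,j}\|\mathcal{P}_T(\bm{e}_i\bm{e}_j^{\top})\|_{\mathrm F}^2\le 2\mu_0r(\mu_1a_1\vee\mu_2a_2)/(n_1n_2)$ gives the conclusion with an absolute constant $C$ under $n_1n_2p\ge C\mu_0r(\mu_1a_1\vee\mu_2a_2)\log n$.

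The weak point is Step 3, where you treat removing $\mu_0 r$ as a technical obstacle. It cannot be removed: with an absolute $C_{\mathrm{sample}}$, the statement as written is false.
\begin{itemize}
\item If the operator bound holds, then $\frac1p\|\mathcal{P}_\Omega(\bm{W})\|_{\mathrm F}^2\ge\frac{9}{10}\|\bm{W}\|_{\mathrm F}^2$ for every $\bm{W}\in T$. Hence $\mathcal{P}_\Omega$ is injective on $T$, and $|\Omega|\ge\dim T=r(a_1+a_2-r)$.
\item Now take $n_1=n_2=n$ and $a_1=a_2=a$, and let $\bm{X},\bm{Y}$ be $a$ columns of a normalized Hadamard matrix, so $\mu_1=\mu_2=1$.
\item Take $r=a/2$. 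Then automatically $\mu_0\le a/r=2$, since $\|\bm{X}\bm{U}^\star\|_{2,\infty}\le\|\bm{X}\|_{2,\infty}$.
\item Set $n^2p=C_{\mathrm{sample}}a\log n$. With high probability $|\Omega|\le 2C_{\mathrm{sample}}a\log n$, while $\dim T=3a^2/4$. The latter is larger once $a\gg C_{\mathrm{sample}}\log n$.
\end{itemize}
So at least a factor of $r$ is unavoidable. Neither of your proposed refinements can reach the stated condition. The second one only replaces $\mu_0 r$ by $\mu_1a_1\wedge\mu_2a_2\ge r$.

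Your fallback is therefore not a concession; it is the correct form of the lemma, which should read $n_1n_2p\ge C\mu_0r(\mu_1a_1\vee\mu_2a_2)\log n$. As you note, this costs nothing downstream: the lemma is only invoked inside Lemma~\ref{lem:rc}, whose hypothesis already carries the factor $\kappa^2\mu_0^2r^2$. It is also consistent with the paper's own description of the cited result as requiring $n^2p\gtrsim\mu_1a\mu_0r\log n$.
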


\begin{lemma}[Lemma 35 in \citet{ma2018implicit}]\label{lem:Orthogonal_Procrustes}
    For any matrix $\bm{A}_{1},\bm{A}_{2}\in\mathbb{R}^{a\times r}$,
    consider the orthogonal Procrustes problem
    \[
        \min_{\bm{R}\in\mathcal{O}^{r\times r}}\|\bm{A}_{1}\bm{R}-\bm{A}_{2}\|_{\mathrm{F}}.
    \]
    Then $\widehat{\bm{R}}$ is the minimizer of this problem if and only
    if $\widehat{\bm{R}}^{\top}\bm{A}_{1}^{\top}\bm{A}_{2}$ is symmetric
    and positive semidefinite.

\end{lemma}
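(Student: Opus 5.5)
The plan is to reduce the Procrustes problem to maximizing a linear trace functional and then characterize equality in the trace–nuclear-norm inequality. Set $\bm{M}\coloneqq\bm{A}_{1}^{\top}\bm{A}_{2}\in\mathbb{R}^{r\times r}$. For any $\bm{R}\in\mathcal{O}^{r\times r}$, expanding the square and using $\|\bm{A}_{1}\bm{R}\|_{\mathrm{F}}=\|\bm{A}_{1}\|_{\mathrm{F}}$ gives
\[
\|\bm{A}_{1}\bm{R}-\bm{A}_{2}\|_{\mathrm{F}}^{2}=\|\bm{A}_{1}\|_{\mathrm{F}}^{2}+\|\bm{A}_{2}\|_{\mathrm{F}}^{2}-2\,\mathrm{tr}(\bm{R}^{\top}\bm{M}).
\]
Hence $\widehat{\bm{R}}$ is a minimizer if and only if it maximizes $\bm{R}\mapsto\mathrm{tr}(\bm{R}^{\top}\bm{M})$ over $\mathcal{O}^{r\times r}$. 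The lemma then follows from two facts: the maximum value equals the nuclear norm $\|\bm{M}\|_{*}$, and a square matrix $\bm{N}$ satisfies $\mathrm{tr}(\bm{N})=\|\bm{N}\|_{*}$ exactly when $\bm{N}$ is symmetric positive semidefinite.

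\textbf{Step 1: the maximum value is $\|\bm{M}\|_{*}$.} Take a full SVD $\bm{M}=\bm{U}\bm{\Sigma}\bm{V}^{\top}$. For orthogonal $\bm{R}$, set $\bm{W}\coloneqq\bm{U}^{\top}\bm{R}\bm{V}$, which is again orthogonal. Then
\[
\mathrm{tr}(\bm{R}^{\top}\bm{M})=\mathrm{tr}(\bm{\Sigma}\bm{W}^{\top})=\sum_{i}\sigma_{i}W_{ii}\le\sum_{i}\sigma_{i}=\|\bm{M}\|_{*},
\]
because every entry of an orthogonal matrix has absolute value at most $1$. The bound is attained at $\bm{R}=\bm{U}\bm{V}^{\top}$, so the maximum equals $\|\bm{M}\|_{*}$. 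Moreover, since the nuclear norm is invariant under left multiplication by an orthogonal matrix, $\|\bm{R}^{\top}\bm{M}\|_{*}=\|\bm{M}\|_{*}$ for every $\bm{R}\in\mathcal{O}^{r\times r}$.

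\textbf{Step 2: equality in $\mathrm{tr}(\bm{N})\le\|\bm{N}\|_{*}$ holds exactly for symmetric PSD $\bm{N}$.} Let $\bm{N}=\bm{P}\bm{\Lambda}\bm{Q}^{\top}$ be an SVD with $\bm{\Lambda}=\mathrm{diag}(\lambda_{i})$. Then $\mathrm{tr}(\bm{N})=\sum_{i}\lambda_{i}\,\bm{q}_{i}^{\top}\bm{p}_{i}\le\sum_{i}\lambda_{i}$, since the columns $\bm{p}_{i},\bm{q}_{i}$ are unit vectors. If $\bm{N}$ is symmetric PSD, its trace is the sum of its eigenvalues, which coincide with its singular values, so equality holds. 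Conversely, suppose equality holds. Every index with $\lambda_{i}>0$ must then satisfy $\bm{q}_{i}^{\top}\bm{p}_{i}=1$, and for unit vectors this forces $\bm{q}_{i}=\bm{p}_{i}$. Indices with $\lambda_{i}=0$ contribute nothing to $\bm{N}$. Therefore $\bm{N}=\sum_{\lambda_{i}>0}\lambda_{i}\bm{p}_{i}\bm{p}_{i}^{\top}$, which is symmetric PSD.

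\textbf{Conclusion.} Apply Step 2 with $\bm{N}=\widehat{\bm{R}}^{\top}\bm{M}$. By Step 1, $\widehat{\bm{R}}$ is a maximizer if and only if $\mathrm{tr}(\widehat{\bm{R}}^{\top}\bm{M})=\|\bm{M}\|_{*}=\|\widehat{\bm{R}}^{\top}\bm{M}\|_{*}$. By Step 2, this holds if and only if $\widehat{\bm{R}}^{\top}\bm{A}_{1}^{\top}\bm{A}_{2}$ is symmetric positive semidefinite, which proves both directions.

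The only delicate point is the converse in Step 2, namely correctly handling zero or repeated singular values. The argument deals with this by comparing only the indices with $\lambda_{i}>0$ and never relies on uniqueness of the SVD.
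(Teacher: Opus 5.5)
Your proof is correct and complete. The reduction to maximizing $\mathrm{tr}(\bm{R}^{\top}\bm{A}_{1}^{\top}\bm{A}_{2})$ is right, and so is the bound by $\|\bm{A}_{1}^{\top}\bm{A}_{2}\|_{*}$ via $\bm{W}=\bm{U}^{\top}\bm{R}\bm{V}$. Your equality characterization $\mathrm{tr}(\bm{N})=\|\bm{N}\|_{*}$ iff $\bm{N}$ is symmetric positive semidefinite also holds, and it treats zero singular values correctly.

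The paper gives no proof of its own. It imports the statement directly as Lemma~35 of \citet{ma2018implicit}, so the only comparison is citation versus derivation. The citation buys brevity. Your argument makes the lemma self-contained. It also makes explicit that ``the minimizer'' should be read as ``a minimizer'': when $\bm{A}_{1}^{\top}\bm{A}_{2}$ is singular the minimizer need not be unique, and your argument characterizes the entire set of minimizers.

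That point matters for how the paper uses the lemma. The alignment $\bm{H}$ is defined as an $\arg\min$ of $\|\bm{F}\bm{R}-\bm{F}^{\star}\|_{\mathrm{F}}$. The property actually needed in the local curvature argument is that $\bar{\bm{A}}^{\top}\bm{\Delta}_{\bm{A}}+\bar{\bm{B}}^{\top}\bm{\Delta}_{\bm{B}}$ is symmetric. Your characterization applied with $\bm{A}_{1}=\bm{F}$ and $\bm{A}_{2}=\bm{F}^{\star}$ gives this for every minimizer. Write $\bm{S}\coloneqq\bm{H}^{\top}\bm{F}^{\top}\bm{F}^{\star}$, which is symmetric positive semidefinite. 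Then $\bar{\bm{F}}^{\top}\bm{F}=\bm{H}\bm{S}\bm{H}^{\top}$ is symmetric. Hence $\bar{\bm{F}}^{\top}\bm{\Delta}=\bar{\bm{F}}^{\top}\bm{F}-\bar{\bm{F}}^{\top}\bar{\bm{F}}$ is symmetric as well. So the paper's argument does not depend on which minimizer is chosen.
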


\begin{lemma}\label{lem:P_Omega_IP} Suppose the sample complexity
    satisfies $n_{1}n_{2}p\ge C(\mu_{1}a_{1}\vee\mu_{2}a_{2})\log n$
    for some constant $C>0$ large enough, then with probability at least
    $1-n^{-10}$, any $\bm{A}_{1},\bm{A}_{2}\in\mathbb{R}^{a_{1}\times r_{1}},\bm{B}_{1},\bm{B}_{2}\in\mathbb{R}^{a_{2}\times r_{2}}$
    satisfies

    \begin{align}
         & \frac{1}{p}\left\langle \mathcal{P}_{\Omega}\left(\bm{X}\bm{A}_{1}\bm{B}_{1}^{\top}\bm{Y}^{\top}\right),\mathcal{P}_{\Omega}\left(\bm{X}\bm{A}_{2}\bm{B}_{2}^{\top}\bm{Y}^{\top}\right)\right\rangle \label{eq:P_Omega_IP}                                              \\
         & \quad\le2n\cdot\min_{k,k',l,l'\in\{1,2\},k\neq k',l\neq l'}\left\{ \left(\|\bm{A}_{k}\|_{\mathrm{F}}^{2}+\|\bm{B}_{l}\|_{\mathrm{F}}^{2}\right)\left\Vert \bm{X}\bm{A}_{k'}\right\Vert _{2,\infty}\left\Vert \bm{Y}\bm{B}_{l'}\right\Vert _{2,\infty}\right\} \nonumber \\
         & \qquad\wedge\left(\|\bm{A}_{1}\|_{\mathrm{F}}^{2}+\|\bm{A}_{2}\|_{\mathrm{F}}^{2}\right)\left\Vert \bm{Y}\bm{B}_{1}\right\Vert _{2,\infty}\left\Vert \bm{Y}\bm{B}_{2}\right\Vert _{2,\infty}\nonumber                                                                   \\
         & \qquad\wedge\left(\|\bm{B}_{1}\|_{\mathrm{F}}^{2}+\|\bm{B}_{2}\|_{\mathrm{F}}^{2}\right)\left\Vert \bm{X}\bm{A}_{1}\right\Vert _{2,\infty}\left\Vert \bm{X}\bm{A}_{2}\right\Vert _{2,\infty}.\nonumber
    \end{align}

\end{lemma}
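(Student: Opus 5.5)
The plan is to expand the inner product entrywise, bound each summand by Cauchy--Schwarz on the rows of the factors, and control the sums over $\Omega$ through the row and column degrees of the observation pattern. Write $\bm{x}_i^{\top}$ for the $i$th row of $\bm{X}$ and $\bm{y}_j^{\top}$ for the $j$th row of $\bm{Y}$. Set $\bm{a}^{(k)}_i:=\bm{A}_k^{\top}\bm{x}_i$ and $\bm{b}^{(l)}_j:=\bm{B}_l^{\top}\bm{y}_j$, so that $\|\bm{a}^{(k)}_i\|$ is the $i$th row norm of $\bm{X}\bm{A}_k$. (The lemma needs $r_1=r_2$ so that the products $\bm{A}_k\bm{B}_l^{\top}$ are defined; I assume this.) Then
\[
\frac{1}{p}\left\langle \mathcal{P}_{\Omega}(\bm{X}\bm{A}_1\bm{B}_1^{\top}\bm{Y}^{\top}),\mathcal{P}_{\Omega}(\bm{X}\bm{A}_2\bm{B}_2^{\top}\bm{Y}^{\top})\right\rangle
=\frac{1}{p}\sum_{(i,j)\in\Omega}\langle \bm{a}^{(1)}_i,\bm{b}^{(1)}_j\rangle\langle \bm{a}^{(2)}_i,\bm{b}^{(2)}_j\rangle ,
\]
and each summand is at most $\|\bm{a}^{(1)}_i\|\,\|\bm{b}^{(1)}_j\|\,\|\bm{a}^{(2)}_i\|\,\|\bm{b}^{(2)}_j\|$. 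Every term of the minimum in the statement comes from choosing which two of these four factors to bound by a uniform two-to-infinity norm, and then applying AM--GM to the remaining two.

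For the mixed terms, fix $k\neq k'$ and $l\neq l'$. Bound $\|\bm{a}^{(k')}_i\|\le\|\bm{X}\bm{A}_{k'}\|_{2,\infty}$ and $\|\bm{b}^{(l')}_j\|\le\|\bm{Y}\bm{B}_{l'}\|_{2,\infty}$, and use $\|\bm{a}^{(k)}_i\|\|\bm{b}^{(l)}_j\|\le\tfrac12(\|\bm{a}^{(k)}_i\|^2+\|\bm{b}^{(l)}_j\|^2)$. Summing over $\Omega$ then gives
\[
\frac{1}{p}\sum_{(i,j)\in\Omega}\|\bm{a}^{(k)}_i\|^2=\frac1p\sum_i|\Omega_{i\cdot}|\,\|\bm{a}^{(k)}_i\|^2\le\frac{\max_i|\Omega_{i\cdot}|}{p}\,\|\bm{X}\bm{A}_k\|_{\mathrm F}^2=\frac{\max_i|\Omega_{i\cdot}|}{p}\,\|\bm{A}_k\|_{\mathrm F}^2 ,
\]
where the last equality uses the orthonormality of $\bm{X}$. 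The $\bm{b}$-sum is handled in the same way with column degrees $|\Omega_{\cdot j}|$. For the two remaining terms the pattern changes. To get the $(\|\bm{A}_1\|_{\mathrm F}^2+\|\bm{A}_2\|_{\mathrm F}^2)$ bound, both $\bm{b}$-factors are bounded uniformly and AM--GM is applied to $\|\bm{a}^{(1)}_i\|\|\bm{a}^{(2)}_i\|$. The symmetric choice gives the last term. Since each bound is valid separately, the minimum of all of them holds.

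It remains to show that, with probability at least $1-n^{-10}$, $\max_i|\Omega_{i\cdot}|\vee\max_j|\Omega_{\cdot j}|\le 2np$. This yields the factor $\tfrac12\cdot 2\cdot 2n$, which is absorbed into the constant $2n$. The tool is a Chernoff bound on the Binomial$(n_2,p)$ row counts and Binomial$(n_1,p)$ column counts, followed by a union bound over the $n_1+n_2$ rows and columns.

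I expect this degree bound to be the main obstacle. Chernoff gives $|\Omega_{i\cdot}|\le n_2p+C\sqrt{n_2p\log n}+C\log n$, and this is $\le 2np$ only when $np\gtrsim\log n$. The stated condition $n_1n_2p\ge C(\mu_1a_1\vee\mu_2a_2)\log n$ does not directly imply $np\gtrsim\log n$ when $\mu_1a_1\ll n$. My first attempt would be to check whether this regime is implicitly assumed where the lemma is used, since the smoothness bound is only needed up to constants. Failing that, I would replace $2n$ by $2(n+p^{-1}C\log n)$ and track this change through the local smoothness constant. Everything else is deterministic once the degree bound holds.
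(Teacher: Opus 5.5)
Your skeleton (entrywise expansion, Cauchy--Schwarz on rows, two factors bounded by $\|\cdot\|_{2,\infty}$, AM--GM on the other two) matches the paper's. However, the step you flag is a genuine gap, and neither of your remedies closes it.

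Bounding $\frac1p\sum_{(i,j)\in\Omega}\|\bm a^{(k)}_i\|^2$ by $\frac1p\max_i|\Omega_{i\cdot}|\,\|\bm A_k\|_{\mathrm F}^2$ gives the factor $2n_2$ only when every row count concentrates, which requires $n_2p\gtrsim\log n$. That regime is not implicitly assumed. The point of Theorem~\ref{thm:main} is the sample size $n_1n_2p\asymp\kappa^2(\mu_1a_1\vee\mu_2a_2)\mu_0^2r^2\log n$ with $\mu_1a_1\ll n$, where $np$ can be far below $\log n$. It can even have $n_2p\ll 1$, in which case any single observed row already gives $\max_i|\Omega_{i\cdot}|/p\ge 1/p\gg n_2$. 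Replacing $2n$ by $2(n+Cp^{-1}\log n)$ proves a weaker statement, not the lemma. It would also inflate the smoothness constant in \eqref{eq:local_smoo} by a factor of order $(1+\log n/(np))^2$, and hence shrink the admissible step size in Theorem~\ref{thm:main} by that factor.

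The missing idea is to never pass to individual degrees, and instead let the incoherence of $\bm X$ do the averaging; this is what the paper does. Since $\|\bm a^{(k)}_i\|^2=\langle\bm A_k\bm A_k^\top,\bm x_i\bm x_i^\top\rangle$, matrix H\"older gives
\[
\frac1p\sum_{(i,j)\in\Omega}\|\bm a^{(k)}_i\|^2=\frac1p\Big\langle\bm A_k\bm A_k^\top,\sum_{i,j}\delta_{ij}\bm x_i\bm x_i^\top\Big\rangle\le\frac1p\|\bm A_k\|_{\mathrm F}^2\,\Big\|\sum_{i,j}\delta_{ij}\bm x_i\bm x_i^\top\Big\|,
\]
with $\delta_{ij}$ the indicator of $(i,j)\in\Omega$. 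The $a_1\times a_1$ matrix $\sum_{i,j}\delta_{ij}\bm x_i\bm x_i^\top$ has mean $n_2p\,\bm I_{a_1}$. Apply matrix Bernstein to $\sum_{i,j}(\delta_{ij}-p)\bm x_i\bm x_i^\top$: the summands have norm at most $\|\bm X\|_{2,\infty}^2\le\mu_1a_1/n_1$ and the variance is at most $n_2p\,\mu_1a_1/n_1$. This bounds the deviation by $C\bigl(\sqrt{n_2p\,\mu_1a_1\log n/n_1}+\mu_1a_1\log n/n_1\bigr)$, which is at most $n_2p$ exactly when $n_1n_2p\gtrsim\mu_1a_1\log n$. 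Hence $\|\sum_{i,j}\delta_{ij}\bm x_i\bm x_i^\top\|\le 2n_2p$, and symmetrically $\|\sum_{i,j}\delta_{ij}\bm y_j\bm y_j^\top\|\le 2n_1p$. Both hold on an event that does not involve $\bm A_k,\bm B_l$, so the bound is uniform over all factors.

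Because $\|\sum_i|\Omega_{i\cdot}|\,\bm x_i\bm x_i^\top\|\le\max_i|\Omega_{i\cdot}|$ always, this estimate is never worse than yours, and it is far better when $np\ll\log n$. No single row needs to be well sampled; only the weighted aggregate must be near its mean. With this substitution the rest of your argument goes through unchanged and yields the constant $2n$.
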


\begin{lemma}\label{lem:P_Omega_norm} Suppose the sample complexity
    satisfies $n^{2}p\ge C\mu_{1}a\log n$ for some constant $C>0$ large
    enough, then with probability at least $1-n^{-10}$, any $\bm{A},\bm{B}\in\mathbb{R}^{a\times r}$
    satisfies

    \begin{align}
         & \frac{1}{p}\left\Vert \mathcal{P}_{\Omega}\left(\bm{X}\bm{A}\bm{B}^{\top}\bm{Y}^{\top}\right)\right\Vert _{\mathrm{F}}^{2}\le80\left(\sqrt{\frac{\mu_{1}a_{1}n_{2}p}{n_{1}}\log n}\|\bm{A}\|_{\mathrm{F}}^{2}\|\bm{Y}\bm{B}\|_{2,\infty}^{2}\wedge\sqrt{\frac{\mu_{2}a_{2}n_{1}p}{n_{2}}\log n}\|\bm{B}\|_{\mathrm{F}}^{2}\|\bm{X}\bm{A}\|_{2,\infty}^{2}\right)+\left\Vert \bm{A}\right\Vert _{\mathrm{F}}^{2}\left\Vert \bm{B}\right\Vert _{\mathrm{F}}^{2}\label{eq:P_Omega_norm}
    \end{align}

\end{lemma}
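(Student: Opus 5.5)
The plan is to split the random sampling pattern into its mean and its fluctuation, and to control the fluctuation column by column (or row by row) with matrix Bernstein. Doing it this way makes the bound hold uniformly in $\bm A,\bm B$ with no net argument. The first term inside the minimum should come from grouping by columns; the second follows by the symmetric argument with rows and columns swapped.

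\textbf{Notation.} Write $\bm M=\bm X\bm A\bm B^\top\bm Y^\top$, and let $\bm x_i^\top$ and $\bm y_j^\top$ denote the rows of $\bm X$ and $\bm Y$. Set $\bm b_j:=\bm B^\top\bm y_j$, so $\|\bm b_j\|=\|(\bm Y\bm B)_{j\cdot}\|\le\|\bm Y\bm B\|_{2,\infty}$, and $\delta_{ij}:=\mathds 1_{(i,j)\in\Omega}$. Then
\[
\|\mathcal P_\Omega(\bm M)\|_{\mathrm F}^2=\sum_{i,j}\delta_{ij}(\bm x_i^\top\bm A\bm b_j)^2
=p\|\bm M\|_{\mathrm F}^2+\sum_j \bm b_j^\top\bm A^\top\bm S_j\bm A\bm b_j,
\qquad
\bm S_j:=\sum_i(\delta_{ij}-p)\bm x_i\bm x_i^\top .
\]

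\textbf{Mean term.} Since $\bm X$ and $\bm Y$ are orthonormal, $p\|\bm M\|_{\mathrm F}^2=p\|\bm A\bm B^\top\|_{\mathrm F}^2\le p\|\bm A\|_{\mathrm F}^2\|\bm B\|_{\mathrm F}^2$. After dividing by $p$, this gives the additive $\|\bm A\|_{\mathrm F}^2\|\bm B\|_{\mathrm F}^2$ term.

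\textbf{Fluctuation term.} Bound each summand by $\|\bm S_j\|\,\|\bm A\|_{\mathrm F}^2\|\bm b_j\|^2$. Summing gives
\[
\Bigl|\sum_j \bm b_j^\top\bm A^\top\bm S_j\bm A\bm b_j\Bigr|\le n_2\max_j\|\bm S_j\|\,\|\bm A\|_{\mathrm F}^2\|\bm Y\bm B\|_{2,\infty}^2 .
\]
For each fixed $j$, $\bm S_j$ is a sum of independent, mean-zero, symmetric $a_1\times a_1$ matrices. Each has norm at most $\|\bm x_i\|^2\le\mu_1a_1/n_1$, and the variance proxy is at most $p\,\frac{\mu_1a_1}{n_1}\sum_i\bm x_i\bm x_i^\top\preccurlyeq p\frac{\mu_1a_1}{n_1}\bm I$. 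Matrix Bernstein, followed by a union bound over $j\le n_2$, gives with probability $1-n^{-10}$
\[
\max_j\|\bm S_j\|\lesssim\sqrt{\frac{p\,\mu_1a_1\log n}{n_1}}+\frac{\mu_1a_1\log n}{n_1}.
\]
Dividing by $p$ and tracking constants should then produce the $80\sqrt{\cdot}\,\|\bm A\|_{\mathrm F}^2\|\bm Y\bm B\|_{2,\infty}^2$ term. Repeating the whole argument with the roles of rows and columns exchanged, using $\bm T_i:=\sum_j(\delta_{ij}-p)\bm y_j\bm y_j^\top$ and $\|(\bm X\bm A)_{i\cdot}\|\le\|\bm X\bm A\|_{2,\infty}$, gives the other term. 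Since both bounds hold simultaneously, taking the minimum is justified.

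\textbf{Main obstacle.} The main obstacle is making the Bernstein bound match the exact scaling in \eqref{eq:P_Omega_norm}. First, the $\sqrt{\cdot}$ term dominates the $\mu_1a_1\log n/n_1$ tail term only when $n_1p\gtrsim\mu_1a_1\log n$, which is a per-column condition. I would first try to derive this from the stated sample condition, and if that fails, state it as the needed assumption. Second, the prefactor I obtain is $n_2\sqrt{\mu_1a_1\log n/(n_1p)}$, whereas the statement as written has $\sqrt{\mu_1 a_1 n_2 p\log n/n_1}$. I would reconcile this carefully against how the lemma is used in \eqref{eq:Omega_X_Delta_Delta_Y}, which uses $\sqrt{\mu_1a_1n_2\log n/(n_1p)}$. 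If the crude bound $\sum_j\|\bm b_j\|^2\le n_2\|\bm Y\bm B\|_{2,\infty}^2$ loses a factor, the fix I would try is a weighted split: bound $\sum_j\|\bm b_j\|^2\|\bm S_j\|$ by $\|\bm B\|_{\mathrm F}\|\bm Y\bm B\|_{2,\infty}$ times an $\ell_2$-aggregate of the $\|\bm S_j\|$, so that only one factor of $\|\bm Y\bm B\|_{2,\infty}$ pays the worst case.
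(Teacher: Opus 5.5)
Each step you write is correct, but the argument does not prove the lemma, and the obstacle you flag rules out this route rather than being bookkeeping.

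Concentrating every $\bm{S}_j$ separately needs roughly $\mu_1a_1\log n$ observations in each column, i.e.\ $n_1p\gtrsim\mu_1a_1\log n$. That is a factor $n_2$ stronger than the hypothesis $n_1n_2p\gtrsim\mu_1a_1\log n$. You cannot derive it from the hypothesis, and assuming it would give up the reduced sample complexity the lemma is meant to deliver. At the stated sample size a typical column has $O(1)$ observations, and the tail term $\mu_1a_1\log n/n_1$ dominates $\max_j\|\bm{S}_j\|$. Your fluctuation bound then becomes $\frac{n_2\mu_1a_1\log n}{n_1p}\|\bm{A}\|_{\mathrm F}^2\|\bm{Y}\bm{B}\|_{2,\infty}^2$, which is of order $n_2^2\|\bm{A}\|_{\mathrm F}^2\|\bm{Y}\bm{B}\|_{2,\infty}^2$. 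Even when the square-root term dominates, you are off by $\sqrt{n_2}$.

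Note also that $\sum_j\|\bm{b}_j\|^2=\|\bm{B}\|_{\mathrm F}^2$ exactly. So your decomposition most naturally gives the RIP-type bound $p^{-1}\max_j\|\bm{S}_j\|\,\|\bm{A}\|^2\|\bm{B}\|_{\mathrm F}^2$, which still needs the per-column condition. The weighted split only produces the weaker mixed factor $\|\bm{Y}\bm{B}\|_{2,\infty}\|\bm{B}\|_{\mathrm F}\ge\|\bm{Y}\bm{B}\|_{2,\infty}^2$.

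The idea you are missing is the paper's pooling across columns. Bound $(\bm{x}_i^\top\bm{A}\bm{b}_j)^2\le\|(\bm{X}\bm{A})_{i\cdot}\|^2\|\bm{b}_j\|^2$, pull out $\|\bm{b}_j\|^2\le\|\bm{Y}\bm{B}\|_{2,\infty}^2$, and you are left with $p^{-1}\langle\bm{A}\bm{A}^\top,\sum_{i,j}\delta_{ij}\bm{x}_i\bm{x}_i^\top\rangle$. The random part is now the single matrix $\sum_j\bm{S}_j=\sum_{i,j}(\delta_{ij}-p)\bm{x}_i\bm{x}_i^\top$. It has $n_1n_2$ summands and variance proxy $n_2p\mu_1a_1/n_1$. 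It therefore deviates by $O\bigl(\sqrt{n_2p\mu_1a_1\log n/n_1}+\mu_1a_1\log n/n_1\bigr)$, which is negligible against its mean scale $n_2p$ once $n_1n_2p\gtrsim\mu_1a_1\log n$.

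Do not try to reproduce the displayed bound exactly: it is not what the argument yields, and as written it is false. First, the $p$ is misplaced, as you noticed; the paper's computation gives $\sqrt{\mu_1a_1n_2\log n/(n_1p)}$, which is what \eqref{eq:Omega_X_Delta_Delta_Y} uses. Second, step \eqref{eq:delta-p_YB} replaces $\|(\bm{Y}\bm{B})_{j\cdot}\|^2$ by its maximum inside a sum with signed weights $\delta_{ij}-p$, which is invalid. Extracting the maximum is legitimate only before centering, using $\delta_{ij}\ge0$. Pooling then gives
$p^{-1}\|\mathcal{P}_\Omega(\bm{X}\bm{A}\bm{B}^\top\bm{Y}^\top)\|_{\mathrm F}^2\le\|\bm{A}\|_{\mathrm F}^2\|\bm{Y}\bm{B}\|_{2,\infty}^2\bigl(n_2+80\sqrt{\mu_1a_1n_2\log n/(n_1p)}\bigr)\le2n_2\|\bm{A}\|_{\mathrm F}^2\|\bm{Y}\bm{B}\|_{2,\infty}^2$,
as in Lemma~\ref{lem:P_Omega_IP}. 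Its main term is $n_2\|\bm{Y}\bm{B}\|_{2,\infty}^2$, not $\|\bm{B}\|_{\mathrm F}^2$.

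Here is a counterexample to the display. Take $n_1=n_2=n$, $a_1=r=1$, $\bm{X}=n^{-1/2}\bm{1}$, $\bm{Y}=\bm{I}_n$, $\bm{A}=1$, and $p=(\log n)^2/n$. Let $\bm{B}=\bm{1}_J$, where $J$ is the set of columns whose observation count $N_j$ exceeds $np$. The left-hand side then exceeds $\|\bm{A}\|_{\mathrm F}^2\|\bm{B}\|_{\mathrm F}^2=|J|$ by $(np)^{-1}\sum_j(N_j-np)_+\approx\sqrt{n/(2\pi p)}$. The display allows a slack of at most $80\sqrt{\log n/p}$ under either placement of $p$.

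So the provable statements along these lines are the pooled $2n_2$ bound, or the RIP-type bound from your decomposition under the per-column condition.
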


\subsection{Proof of Lemma \ref{lem:P_Omega_IP}}

Observe that the first term on the left hand side of (\ref{lem:P_Omega_IP})
can be rewritten as
\begin{align}
     & \frac{1}{p}\left\langle \mathcal{P}_{\Omega}\left(\bm{X}\bm{A}_{1}\bm{B}_{1}^{\top}\bm{Y}^{\top}\right),\mathcal{P}_{\Omega}\left(\bm{X}\bm{A}_{2}\bm{B}_{2}^{\top}\bm{Y}^{\top}\right)\right\rangle \label{eq:P_OmegaXABY_IP}                                                                            \\
     & \quad=\sum_{i,j}\frac{1}{p}\delta_{ij}\left[\bm{X}\bm{A}_{1}\bm{B}_{1}^{\top}\bm{Y}^{\top}\right]\left[\bm{X}\bm{A}_{2}\bm{B}_{2}^{\top}\bm{Y}^{\top}\right]_{ij}\nonumber                                                                                                                                \\
     & \quad=\sum_{i,j}\frac{1}{p}\delta_{ij}\left(\bm{X}\bm{A}_{1}\right)_{i,\cdot}\left(\bm{Y}\bm{B}_{1}\right)_{j,\cdot}^{\top}\left(\bm{X}\bm{A}_{2}\right)_{i,\cdot}\left(\bm{Y}\bm{B}_{2}\right)_{j,\cdot}^{\top}\nonumber                                                                                 \\
     & \quad\le\sum_{i,j}\frac{1}{p}\delta_{ij}\left\Vert \left(\bm{X}\bm{A}_{1}\right)_{i,\cdot}\right\Vert \left\Vert \left(\bm{Y}\bm{B}_{1}\right)_{j,\cdot}\right\Vert \left\Vert \left(\bm{X}\bm{A}_{2}\right)_{i,\cdot}\right\Vert \left\Vert \left(\bm{Y}\bm{B}_{2}\right)_{j,\cdot}\right\Vert \nonumber
\end{align}
where $(\cdot)_{i,\cdot}$ denotes the $1\times r$ row vector. Bounding
$\|(\bm{X}\bm{A}_{2})_{i,\cdot}\|,\|(\bm{Y}\bm{B}_{2})_{j,\cdot}\|$
by $\|\bm{X}\bm{A}_{2}\|_{2,\infty},\|\bm{Y}\bm{B}_{2}\|_{2,\infty}$,
we have
\begin{align}
     & \frac{1}{p}\left\langle \mathcal{P}_{\Omega}\left(\bm{X}\bm{A}_{1}\bm{B}_{1}^{\top}\bm{Y}^{\top}\right),\mathcal{P}_{\Omega}\left(\bm{X}\bm{A}_{2}\bm{B}_{2}^{\top}\bm{Y}^{\top}\right)\right\rangle \label{eq:P_Omega_XABY_IP2}                                                                                        \\
     & \quad\le\sum_{i,j}\frac{1}{p}\delta_{ij}\left\Vert \left(\bm{X}\bm{A}_{1}\right)_{i,\cdot}\right\Vert \left\Vert \left(\bm{Y}\bm{B}_{1}\right)_{j,\cdot}\right\Vert \left\Vert \bm{X}\bm{A}_{2}\right\Vert _{2,\infty}\left\Vert \bm{Y}\bm{B}_{2}\right\Vert _{2,\infty}\nonumber                                      \\
     & \quad\le\sum_{i,j}\frac{1}{p}\delta_{ij}\cdot\frac{1}{2}\left(\left\Vert \left(\bm{X}\bm{A}_{1}\right)_{i,\cdot}\right\Vert ^{2}+\left\Vert \left(\bm{Y}\bm{B}_{1}\right)_{j,\cdot}\right\Vert ^{2}\right)\left\Vert \bm{X}\bm{A}_{2}\right\Vert _{2,\infty}\left\Vert \bm{Y}\bm{B}_{2}\right\Vert _{2,\infty}\nonumber
\end{align}
Now we consider $\sum_{i,j}\frac{1}{p}\delta_{ij}\cdot\frac{1}{2}\left\Vert \left(\bm{X}\bm{A}_{1}\right)_{i,\cdot}\right\Vert ^{2}$,
which can be rewritten as
\begin{align}
    \sum_{i,j}\frac{1}{p}\delta_{ij}\left(\bm{X}\bm{A}_{1}\right)_{i,\cdot}\left(\bm{X}\bm{A}_{1}\right)_{i,\cdot}^{\top} & =\sum_{i,j}\frac{1}{p}\delta_{ij}\bm{X}_{i,\cdot}\bm{A}_{1}\left(\bm{X}_{i,\cdot}\bm{A}_{1}\right)^{\top}\label{eq:delta_XA}                                             \\
                                                                                                                          & =\sum_{i,j}\frac{1}{p}\left\langle \bm{A}_{1}\bm{A}_{1}^{\top},\delta_{ij}\bm{X}_{i,\cdot}^{\top}\bm{X}_{i,\cdot}\right\rangle \nonumber                                 \\
                                                                                                                          & \stackrel{\text{(i)}}{\le}\frac{1}{p}\|\bm{A}_{1}\bm{A}_{1}^{\top}\|_{\star}\left\Vert \sum_{i,j}\delta_{ij}\bm{X}_{i,\cdot}^{\top}\bm{X}_{i,\cdot}\right\Vert \nonumber \\
                                                                                                                          & \stackrel{\text{(ii)}}{\le}\frac{1}{p}\|\bm{A}_{1}\|_{\mathrm{F}}^{2}\left\Vert \sum_{i,j}\delta_{ij}\bm{X}_{i,\cdot}^{\top}\bm{X}_{i,\cdot}\right\Vert \nonumber
\end{align}
where (i) uses matrix H\"{o}lder's inequality and (ii) uses the fact
that
\[
    \|\bm{A}_{1}\bm{A}_{1}^{\top}\|_{\star}=\sum_{i=1}^{r}\sigma_{i}(\bm{A}_{1}\bm{A}_{1}^{\top})=\sum_{i=1}^{r}\sigma_{i}(\bm{A}_{1})^{2}=\|\bm{A}_{1}\|_{\mathrm{F}}^{2}.
\]
To control $\|\sum_{i,j}\delta_{ij}\bm{X}_{i,\cdot}^{\top}\bm{X}_{i,\cdot}\|$,
consider the decomposition
\begin{equation}
    \sum_{i,j}\delta_{ij}\bm{X}_{i,\cdot}^{\top}\bm{X}_{i,\cdot}=\sum_{i,j}\left(\delta_{ij}-p\right)\bm{X}_{i,\cdot}^{\top}\bm{X}_{i,\cdot}+p\sum_{i,j}\bm{X}_{i,\cdot}^{\top}\bm{X}_{i,\cdot}\label{eq:XT-X-decomposition}
\end{equation}
For the second term in \eqref{eq:XT-X-decomposition},
\begin{align*}
    p\sum_{i,j}\bm{X}_{i,\cdot}^{\top}\bm{X}_{i,\cdot} & =n_{2}p\sum_{i}\bm{X}_{i,\cdot}^{\top}\bm{X}_{i,\cdot} \\
                                                       & =n_{2}p\bm{X}^{\top}\bm{X}                             \\
                                                       & =n_{2}p\bm{I}_{r}
\end{align*}
To control the first term in \eqref{eq:XT-X-decomposition} we will
apply matrix Bernstein's inequality. We have that
\[
    \left\Vert \left(\delta_{ij}-p\right)\bm{X}_{i,\cdot}^{\top}\bm{X}_{i,\cdot}\right\Vert \le\left\Vert \bm{X}_{i,\cdot}\right\Vert ^{2}\le\frac{\mu_{1}a_{1}}{n_{1}}.
\]
As $\bm{X}_{i,\cdot}\bm{X}_{i,\cdot}^{\top}\le\|\bm{X}\|_{2,\infty}^{2}$
is a scalar,
\begin{align*}
    \mathbb{E}\sum_{i,j}\left(\delta_{ij}-p\right)^{2}\bm{X}_{i,\cdot}^{\top}\bm{X}_{i,\cdot}\bm{X}_{i,\cdot}^{\top}\bm{X}_{i,\cdot} & =\sum_{i}n_{2}p(1-p)\bm{X}_{i,\cdot}^{\top}\bm{X}_{i,\cdot}\bm{X}_{i,\cdot}^{\top}\bm{X}_{i,\cdot} \\
                                                                                                                                     & \preccurlyeq\sum_{i}n_{2}p(1-p)\|\bm{X}\|_{2,\infty}^{2}\bm{X}_{i,\cdot}^{\top}\bm{X}_{i,\cdot}    \\
                                                                                                                                     & =\sum_{i}n_{2}p(1-p)\|\bm{X}\|_{2,\infty}^{2}\bm{I}_{r}.
\end{align*}
Then
\[
    \left\Vert \mathbb{E}\sum_{i,j}\left(\delta_{ij}-p\right)^{2}\bm{X}_{i,\cdot}^{\top}\bm{X}_{i,\cdot}\bm{X}_{i,\cdot}^{\top}\bm{X}_{i,\cdot}\right\Vert \le n_{2}p\cdot\|\bm{X}\|_{2,\infty}^{2}=\frac{\mu_{1}a_{1}n_{2}p}{n_{1}}.
\]
Applying matrix Bernstein's inequality yields that with probability at least
$1-n^{-10}$,
\begin{equation}
    \left\Vert \sum_{i,j}\left(\delta_{ij}-p\right)\bm{X}_{i,\cdot}^{\top}\bm{X}_{i,\cdot}\right\Vert \le40\left(\sqrt{\frac{\mu_{1}a_{1}n_{2}p}{n_{1}}\log n}+\frac{\mu_{1}a_{1}}{n_{1}}\log n\right).\label{eq:delta-p_XTX}
\end{equation}
Combining the two terms in \eqref{eq:XT-X-decomposition}, we have
that
\begin{equation}
    \left\Vert \sum_{i,j}\delta_{ij}\bm{X}_{i,\cdot}^{\top}\bm{X}_{i,\cdot}\right\Vert \le40\left(\sqrt{\frac{\mu_{1}a_{1}n_{2}p}{n_{1}}\log n}+\frac{\mu_{1}a_{1}}{n_{1}}\log n\right)+n_{2}p\le2n_{2}p\label{eq:delta_XT_X}
\end{equation}
when $n_{1}n_{2}p\ge C\mu_{1}a_{1}\log n$ for some large enough constant
$C$. Feeding these back to \eqref{eq:P_Omega_XABY_IP2} and \eqref{eq:delta_XA},
we have that
\[
    \sum_{i,j}\frac{1}{p}\delta_{ij}\left\Vert \left(\bm{X}\bm{A}_{1}\right)_{i,\cdot}\right\Vert ^{2}\left\Vert \bm{X}\bm{A}_{2}\right\Vert _{2,\infty}\left\Vert \bm{Y}\bm{B}_{2}\right\Vert _{2,\infty}\le2n_{2}\|\bm{A}_{1}\|_{\mathrm{F}}^{2}\left\Vert \bm{X}\bm{A}_{2}\right\Vert _{2,\infty}\left\Vert \bm{Y}\bm{B}_{2}\right\Vert _{2,\infty}.
\]
Similarly when $n_{1}n_{2}p\ge C\mu_{2}a_{2}\log n$ we can obtain
\[
    \sum_{i,j}\frac{1}{p}\delta_{ij}\left\Vert \left(\bm{Y}\bm{B}_{1}\right)_{i,\cdot}\right\Vert ^{2}\left\Vert \bm{X}\bm{A}_{2}\right\Vert _{2,\infty}\left\Vert \bm{Y}\bm{B}_{2}\right\Vert _{2,\infty}\le2n_{1}\|\bm{B}_{1}\|_{\mathrm{F}}^{2}\left\Vert \bm{X}\bm{A}_{2}\right\Vert _{2,\infty}\left\Vert \bm{Y}\bm{B}_{2}\right\Vert _{2,\infty}.
\]
Combining both bounds and substituting them into \eqref{eq:P_OmegaXABY_IP} yields
\[
    \frac{1}{p}\left\langle \mathcal{P}_{\Omega}\left(\bm{X}\bm{A}_{1}\bm{B}_{1}^{\top}\bm{Y}^{\top}\right),\mathcal{P}_{\Omega}\left(\bm{X}\bm{A}_{2}\bm{B}_{2}^{\top}\bm{Y}^{\top}\right)\right\rangle \le2n\left(\|\bm{A}_{1}\|_{\mathrm{F}}^{2}+\|\bm{B}_{1}\|_{\mathrm{F}}^{2}\right)\left\Vert \bm{X}\bm{A}_{2}\right\Vert _{2,\infty}\left\Vert \bm{Y}\bm{B}_{2}\right\Vert _{2,\infty}.
\]
Notice that the four terms in the last line of \eqref{eq:P_OmegaXABY_IP}
are exchangeable, so we have \eqref{eq:P_Omega_IP}.

\subsection{Proof of Lemma \ref{lem:P_Omega_norm}}

Similar to \eqref{eq:P_Omega_IP} we have
\begin{align}
    \frac{1}{p}\left\Vert \mathcal{P}_{\Omega}\left(\bm{X}\bm{A}\bm{B}^{\top}\bm{Y}^{\top}\right)\right\Vert _{\mathrm{F}}^{2} & \le\sum_{i,j}\frac{1}{p}\delta_{ij}\left\Vert \left(\bm{X}\bm{A}\right)_{i,\cdot}\right\Vert ^{2}\left\Vert \left(\bm{Y}\bm{B}\right)_{j,\cdot}\right\Vert ^{2}\nonumber                                                                                                                                                                             \\
                                                                                                                               & \le\sum_{i,j}\frac{1}{p}\left(\delta_{ij}-p\right)\left\Vert \left(\bm{X}\bm{A}\right)_{i,\cdot}\right\Vert ^{2}\left\Vert \left(\bm{Y}\bm{B}\right)_{j,\cdot}\right\Vert ^{2}+\sum_{i,j}\left\Vert \left(\bm{X}\bm{A}\right)_{i,\cdot}\right\Vert ^{2}\left\Vert \left(\bm{Y}\bm{B}\right)_{j,\cdot}\right\Vert ^{2}.\label{eq:Omega_XABY_decomp}
\end{align}
The second term can be bounded by
\begin{align}
    \sum_{i,j}\left\Vert \left(\bm{X}\bm{A}\right)_{i,\cdot}\right\Vert ^{2}\left\Vert \left(\bm{Y}\bm{B}\right)_{j,\cdot}\right\Vert ^{2} & =\sum_{i}\left\Vert \left(\bm{X}\bm{A}\right)_{i,\cdot}\right\Vert ^{2}\sum_{j}\left\Vert \left(\bm{Y}\bm{B}\right)_{j,\cdot}\right\Vert ^{2}\nonumber \\
                                                                                                                                           & =\left\Vert \bm{X}\bm{A}\right\Vert _{\mathrm{F}}^{2}\left\Vert \bm{Y}\bm{B}\right\Vert _{\mathrm{F}}^{2}\label{eq:XA_YB_AB}                           \\
                                                                                                                                           & =\left\Vert \bm{A}\right\Vert _{\mathrm{F}}^{2}\left\Vert \bm{B}\right\Vert _{\mathrm{F}}^{2}.\nonumber
\end{align}
For the first term, we first see that
\begin{equation}
    \sum_{i,j}\frac{1}{p}\left(\delta_{ij}-p\right)\left\Vert \left(\bm{X}\bm{A}\right)_{i,\cdot}\right\Vert ^{2}\left\Vert \left(\bm{Y}\bm{B}\right)_{j,\cdot}\right\Vert ^{2}\le\sum_{i,j}\frac{1}{p}\left(\delta_{ij}-p\right)\left\Vert \left(\bm{X}\bm{A}\right)_{i,\cdot}\right\Vert ^{2}\|\bm{Y}\bm{B}\|_{2,\infty}^{2}.\label{eq:delta-p_YB}
\end{equation}
Similar to \eqref{eq:delta_XA} we have with probability at least
$1-n^{-10}$,
\begin{equation}
    \sum_{i,j}\frac{1}{p}\left(\delta_{ij}-p\right)\left\Vert \left(\bm{X}\bm{A}\right)_{i,\cdot}\right\Vert ^{2}\le\frac{1}{p}\|\bm{A}\|_{\mathrm{F}}^{2}\left\Vert \sum_{i,j}\left(\delta_{ij}-p\right)\bm{X}_{i,\cdot}^{\top}\bm{X}_{i,\cdot}\right\Vert .\label{eq:delta-p_A_X}
\end{equation}
By \eqref{eq:delta-p_XTX} we have that
\begin{equation}
    \left\Vert \sum_{i,j}\left(\delta_{ij}-p\right)\bm{X}_{i,\cdot}^{\top}\bm{X}_{i,\cdot}\right\Vert \le40\left(\sqrt{\frac{\mu_{1}a_{1}n_{2}p}{n_{1}}\log n}+\frac{\mu_{1}a_{1}}{n_{1}}\log n\right)\le80\sqrt{\frac{\mu_{1}a_{1}n_{2}p}{n_{1}}\log n}\label{eq:delta-p_XTX-1}
\end{equation}
when $n_{1}n_{2}p\ge C\mu_{1}a_{1}\log n$ for some constant $C$
large enough. Feeding \eqref{eq:delta-p_YB}, \eqref{eq:delta-p_A_X},
\eqref{eq:delta-p_XTX-1} and \eqref{eq:XA_YB_AB} back to \eqref{eq:Omega_XABY_decomp}
yields
\[
    \frac{1}{p}\left\Vert \mathcal{P}_{\Omega}\left(\bm{X}\bm{A}\bm{B}^{\top}\bm{Y}^{\top}\right)\right\Vert _{\mathrm{F}}^{2}\le80\sqrt{\frac{\mu_{1}a_{1}n_{2}p}{n_{1}}\log n}\|\bm{A}\|_{\mathrm{F}}^{2}\|\bm{Y}\bm{B}\|_{2,\infty}^{2}+\left\Vert \bm{A}\right\Vert _{\mathrm{F}}^{2}\left\Vert \bm{B}\right\Vert _{\mathrm{F}}^{2}.
\]
Similarly when $n_{1}n_{2}p\ge C\mu_{2}a_{2}\log n$ we have
\[
    \frac{1}{p}\left\Vert \mathcal{P}_{\Omega}\left(\bm{X}\bm{A}\bm{B}^{\top}\bm{Y}^{\top}\right)\right\Vert _{\mathrm{F}}^{2}\le80\sqrt{\frac{\mu_{2}a_{2}n_{1}p}{n_{2}}\log n}\|\bm{B}\|_{\mathrm{F}}^{2}\|\bm{X}\bm{A}\|_{2,\infty}^{2}+\left\Vert \bm{A}\right\Vert _{\mathrm{F}}^{2}\left\Vert \bm{B}\right\Vert _{\mathrm{F}}^{2}.
\]
Combining the two inequalities yields the desired result.

\subsection{Proof of Equation \texorpdfstring{\eqref{eq:Z_from_Delta}}{equation}\label{subsec:Proof-of-Z_from_delta}}

Since $\bm{Z}^{\star}=\bm{A}^{\star}\bm{B}^{\star\top}=\bar{\bm{A}}\bar{\bm{B}}^{\top}$,
\begin{align*}
    \widehat{\bm{Z}}-\bm{Z}^{\star} & =\bm{A}^{t_{0}}\bm{B}^{t_{0}\top}-\bar{\bm{A}}\bar{\bm{B}}^{\top}                                                                                          \\
                                    & =\bm{\Delta}_{\bm{A}}^{t_{0}}\bar{\bm{B}}^{\top}+\bar{\bm{A}}\bm{\Delta}_{\bm{B}}^{t_{0}\top}+\bm{\Delta}_{\bm{A}}^{t_{0}}\bm{\Delta}_{\bm{B}}^{t_{0}\top}
\end{align*}
Then
\begin{align*}
    \left\Vert \widehat{\bm{Z}}-\bm{Z}^{\star}\right\Vert _{\mathrm{F}}^{2} & \le3\left(\left\Vert \bm{\Delta}_{\bm{A}}^{t_{0}}\bar{\bm{B}}^{\top}\right\Vert _{\mathrm{F}}^{2}+\left\Vert \bar{\bm{A}}\bm{\Delta}_{\bm{B}}^{t_{0}\top}\right\Vert _{\mathrm{F}}^{2}+\left\Vert \bm{\Delta}_{\bm{A}}^{t_{0}}\bm{\Delta}_{\bm{B}}^{t_{0}\top}\right\Vert _{\mathrm{F}}^{2}\right)                                          \\
                                                                            & \le3\left(\|\bar{\bm{B}}\|^{2}\left\Vert \bm{\Delta}_{\bm{A}}^{t_{0}}\right\Vert _{\mathrm{F}}^{2}+\|\bar{\bm{A}}\|^{2}\left\Vert \bm{\Delta}_{\bm{B}}^{t_{0}}\right\Vert _{\mathrm{F}}^{2}+\left\Vert \bm{\Delta}_{\bm{A}}^{t_{0}}\right\Vert _{\mathrm{F}}^{2}\left\Vert \bm{\Delta}_{\bm{B}}^{t_{0}}\right\Vert _{\mathrm{F}}^{2}\right) \\
                                                                            & \le3\left(\sigma_{\max}\left\Vert \bm{\Delta}^{t_{0}}\right\Vert _{\mathrm{F}}^{2}+\left\Vert \bm{\Delta}^{t_{0}}\right\Vert _{\mathrm{F}}^{4}\right)                                                                                                                                                                                       \\
                                                                            & \le6\sigma_{\max}\left\Vert \bm{\Delta}^{t_{0}}\right\Vert _{\mathrm{F}}^{2}.
\end{align*}
The last line uses $\left\Vert \bm{\Delta}^{t_{0}}\right\Vert _{\mathrm{F}}\le\frac{1}{10}\sqrt{\sigma_{\min}}$.

%% file: sections/14-further-experimental-results.tex
\section{Further experimental results \label{sec:further-experimental-results}}

\subsection{Detailed simulation setup \label{subsec:detailed-simulation-setup}}

This subsection gives the details omitted from the simulation setup
in Section~\ref{subsec:Simulations}. The ground truth matrix
$\bm{L}^{\star}$ has dimensions $n_{1}=n_{2}=n=1000$ and rank $r=10$,
and the side-information dimension is $a_{1}=a_{2}=a=50$. We first
generate orthonormal matrices
$\bm{X}^{\star},\bm{Y}^{\star}\in\mathbb{R}^{n\times a}$ and a rank-$r$
matrix $\bm{Z}\in\mathbb{R}^{a\times a}$ with spectral norm $1$,
and define
\[
    \bm{L}^{\star}=\bm{X}^{\star}\bm{Z}{\bm{Y}^{\star}}^{\top}.
\]
The matrix $\bm{L}^{\star}$ is generated once and then kept fixed
throughout the synthetic experiments. For each trial, the observed
index set $\Omega$ is resampled with independent entrywise sampling
probability $p$. In the noiseless setting, we observe
\[
    \bm{M}=\mathcal{P}_{\Omega}(\bm{L}^{\star}),
\]
while in the noisy setting we observe
\[
    \bm{M}=\mathcal{P}_{\Omega}(\bm{L}^{\star}+\bm{E}),
\]
where the entries of $\bm{E}$ are i.i.d.\ Gaussian with mean $0$
and standard deviation $\sigma=0.001$.

For experiments with inexact side-information, we replace
$\bm{X}^{\star},\bm{Y}^{\star}$ by matrices $\bm{X},\bm{Y}$ with
prescribed side-information inexactness $\delta$ as in
Assumption~\ref{assumption:inexact_SI}. Here we describe the details of the construction
for the column side-information $\bm{X}$; the construction for
$\bm{Y}$ is analogous.

Let $\bm{U}^{\star}$ be the left singular matrix of $\bm{L}^{\star}$.
We first choose target principal angles
\[
    0\le\theta_{1}\le\cdots\le\theta_{r}=\arcsin\delta
\]
with the first
$r-1$ angles drawn uniformly from $[0,\arcsin\delta]$. Next, we extend
$\bm{U}^{\star}$ to an orthonormal basis of $\mathbb{R}^{n}$ and select
orthonormal matrices $\bm{U}_{1}\in\mathbb{R}^{n\times r}$ and
$\bm{U}_{2}\in\mathbb{R}^{n\times(a-r)}$ such that both $\bm{U}_{1}$ and $\bm{U}_{2}$ are orthogonal
to $\bm{U}^{\star}$ and $\bm{U}_{2}$ is also orthogonal to $\bm{U}_{1}$.
We then define the first $r$ columns of $\bm{X}$ by
\[
    \bm{X}_{\cdot,i}=\cos(\theta_{i})[\bm{U}^{\star}]_{\cdot,i}+\sin(\theta_{i})[\bm{U}_{1}]_{\cdot,i},
    \qquad 1\le i\le r,
\]
where $[\bm{U}^{\star}]_{\cdot,i}$ and $[\bm{U}_{1}]_{\cdot,i}$ are the $i$-th columns of $\bm{U}^{\star}$ and $\bm{U}_{1}$, respectively. We then take the remaining $a-r$ columns from $\bm{U}_{2}$.

This construction
ensures that the principal angles between $\mathrm{col}(\bm{X})$
and $\mathrm{col}(\bm{U}^{\star})$ are exactly
$\theta_{1},\dots,\theta_{r}$, and hence
$\|\sin\bm{\Theta}_{1}\|=\delta$.
For each $1\le i\le r$, the vector $\bm{X}_{i}$ lies in the
two-dimensional plane spanned by $\bm{U}^{\star}_{i}$ and
$(\bm{U}_{1})_{i}$, and makes angle exactly $\theta_{i}$ with
$\bm{U}^{\star}_{i}$. These planes are mutually orthogonal because
the columns of $\bm{U}^{\star}$ and $\bm{U}_{1}$ are orthonormal,
and the remaining columns from $\bm{U}_{2}$ are orthogonal to
$\mathrm{col}(\bm{U}^{\star})$. Therefore
$\bm{X}^{\top}\bm{U}^{\star}$ is diagonal in these coordinates, with
diagonal entries $\cos(\theta_{1}),\dots,\cos(\theta_{r})$. Hence
the singular values of $\bm{X}^{\top}\bm{U}^{\star}$ are exactly
$\cos(\theta_{1}),\dots,\cos(\theta_{r})$, which means the principal
angles between $\mathrm{col}(\bm{X})$ and $\mathrm{col}(\bm{U}^{\star})$
are precisely $\theta_{1},\dots,\theta_{r}$.

\subsection{Redundancy of Projection Steps in Algorithm~\ref{alg:main}
    \label{subsec:Redundancy-of-projection}\label{subsec:experiments_appendix}}

We mentioned in Section~\ref{sec:Problem-Setup} that the projection step in Algorithm~\ref{alg:main} is mainly a proof device and is inactive in our simulated experiments. To quantify this, we track how close the unprojected iterates get to the boundary of the projection
set $\mathcal{C}$. Recall that the projection constraint is
\[
    \max\left\{\|\bm{X}\bm{A}\|_{2,\infty},
    \|\bm{Y}\bm{B}\|_{2,\infty}\right\}
    \le2\sqrt{\frac{\mu_{0}r}{n_{\min}}}\|\bm{A}^{0}\|.
\]
For each iteration $t$, we define the relative ratio to the projection boundary as
\[
    \rho_{t}\coloneqq
    \frac{\max\left\{\|\bm{X}\bm{A}^{t}\|_{2,\infty},
    \|\bm{Y}\bm{B}^{t}\|_{2,\infty}\right\}}
    {2\sqrt{\mu_{0}r/n_{\min}}\|\bm{A}^{0}\|}.
\]
The projection is inactive whenever $\rho_{t}\le 1$.

We showcase the inactive behavior of the projection step in the following simulation.
We use the same synthetic setup as in Section~\ref{subsec:Simulations}
to generate the ground truth matrix $\bm{L}^{\star}$ and exact side-information
matrices $\bm{X},\bm{Y}$, with parameters
$n_{1}=n_{2}=1000$, $a_{1}=a_{2}=50$, and $r=10$. We observe noiseless partial observations
$\bm{M}=\mathcal{P}_{\Omega}(\bm{L}^{\star})$ with observation probability $p$. For each
$p\in\{0.005,0.01,0.02,0.05\}$, we run unprojected gradient descent
for 200 iterations over 50 independent trials, resampling the
ground truth and $\Omega$ in each trial. Figure~\ref{fig:projection_redundancy}
shows the distribution of $\max_{t}\rho_{t}$ over the trials. In all tested regimes, $\max_{t}\rho_{t}<1$,
so the projection step is inactive throughout the iterates.

\begin{figure}[tp]
    \centering{

        \includegraphics[width=0.55\textwidth]{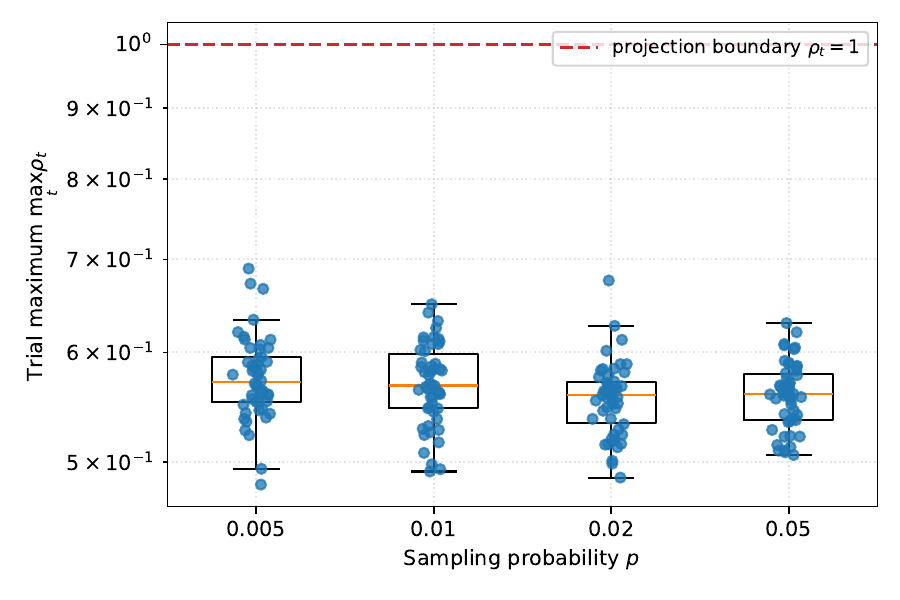}

    }

    \caption{\label{fig:projection_redundancy}Distribution
        of $\max_{t}\rho_{t}$ over the trials.
        The dashed horizontal line is the projection boundary $\rho_{t}=1$.}
\end{figure}